\def\eqref#1{equation~\ref{#1}}
\def\ceil#1{\lceil #1 \rceil}
\def\floor#1{\lfloor #1 \rfloor}
\def\1{\bm{1}}
\def\vone{{\bm{1}}}
\def\vb{{\bm{b}}}
\def\vh{{\bm{h}}}
\def\vp{{\bm{p}}}
\def\vq{{\bm{q}}}
\def\vs{{\bm{s}}}
\def\vv{{\bm{v}}}
\def\vx{{\bm{x}}}
\def\vy{{\bm{y}}}
\def\vz{{\bm{z}}}
\def\mI{{\bm{I}}}
\def\mM{{\bm{M}}}
\def\mS{{\bm{S}}}
\def\mV{{\bm{V}}}
\def\mW{{\bm{W}}}
\DeclareMathAlphabet{\mathsfit}{\encodingdefault}{\sfdefault}{m}{sl}
\SetMathAlphabet{\mathsfit}{bold}{\encodingdefault}{\sfdefault}{bx}{n}
\newcommand{\E}{\mathbb{E}}
\newcommand{\Reach}[1]{\ensuremath{\operatorname{Rch}\left(#1\right)}}
\newcommand{\Ric}[1]{\ensuremath{\operatorname{Ric}\left(#1\right)}}
\newcommand{\poly}[1]{\ensuremath{\operatorname{poly}#1}}
\newcommand{\Vol}[1]{\ensuremath{\operatorname{Vol}#1}}
\newcommand{\sgn}[1]{\ensuremath{\operatorname{sign}#1}}
\newcommand{\sq}{\operatorname{SQ}}
\newcommand\restr[2]{{
  \left.\kern-\nulldelimiterspace 
  #1 
  \vphantom{\big|} 
  \right|_{#2} 
  }}
\theoremstyle{plain}
\newtheorem{theorem}{Theorem}[section]
\newtheorem{proposition}[theorem]{Proposition}
\newtheorem{lemma}[theorem]{Lemma}
\newtheorem{corollary}[theorem]{Corollary}
\theoremstyle{definition}
\newtheorem{definition}[theorem]{Definition}
\newtheorem{example}[theorem]{Example}
\newtheorem{assumption}[theorem]{Assumption}
\theoremstyle{remark}
\newtheorem{remark}[theorem]{Remark}
\title{Hardness of Learning Neural Networks under the Manifold Hypothesis}
\author{
{\sf Bobak T.\ Kiani}\thanks{John A. Paulson School of Engineering and Applied Sciences, Harvard University; e-mail: {\href{mailto:bkiani@g.harvard.edu}{\texttt{bkiani@g.harvard.edu}}}.}
\and
{\sf Jason Wang}\thanks{Harvard College, Harvard University; e-mail: {\href{mailto:jasonwang1@college.harvard.edu}{\texttt{jasonwang1@college.harvard.edu}}}.}
\and 
{\sf Melanie Weber}\thanks{John A. Paulson School of Engineering and Applied Sciences, Harvard University; e-mail: {\href{mailto:mweber@g.harvard.edu}{\texttt{mweber@g.harvard.edu}}}}
}
\date{ }
\begin{document}

\maketitle

\begin{abstract}
The \emph{manifold hypothesis} presumes that high-dimensional data lies on or near a low-dimensional manifold. 
While the utility of encoding geometric structure has been demonstrated empirically, rigorous analysis of its impact on the learnability of neural networks is largely missing.  
Several recent results have established hardness results for learning feedforward and equivariant neural networks under i.i.d. Gaussian or uniform Boolean data distributions. 
In this paper, we investigate the hardness of learning under the manifold hypothesis. We ask which minimal assumptions on the curvature and regularity of the manifold, if any, render the learning problem efficiently learnable. 
We prove that learning is hard under input manifolds of bounded curvature by extending proofs of hardness in the SQ and cryptographic settings for Boolean data inputs to the geometric setting.
On the other hand, we show that additional assumptions on the volume of the data manifold alleviate these fundamental limitations and guarantee learnability via a simple interpolation argument. 
Notable instances of this regime are manifolds which can be reliably reconstructed via manifold learning. 
Looking forward, we comment on and empirically explore intermediate regimes of manifolds, which have heterogeneous features commonly found in real world data.  
\end{abstract}

\section{Introduction}
High-dimensional data is often thought to have low-dimensional structure, which may stem, for instance, from symmetries in the underlying system. This observation has given rise to the \emph{manifold hypothesis}, which presumes that high-dimensional data lies on or near a low-dimensional manifold. Empirical studies have confirmed this hypothesis across domains~\cite{carlsson2009topology,pope2021intrinsic,dicarlo2012does,ma2011manifold}.
A plethora of algorithms for geometric data analysis~\cite{fukunaga1971algorithm,belkin2003laplacian,roweis2000nonlinear,cayton2008algorithms} and, more recently, machine learning~\cite{cohenc16,bronstein2017geometric,chen2023riemannian} seek to leverage such structure. While these methods have shown empirical promise, few formal results on the benefits of encoding geometric structure have been established. Here, we study the impact of the manifold hypothesis on the computational complexity of learning algorithms for neural networks. Specifically, we investigate, under which geometric assumptions feedforward neural networks are efficiently learnable. 

The computational hardness of learning shallow, fully-connected neural networks under i.i.d. Gaussian data distributions has been established by \cite{diakonikolas2020algorithms,chen2022hardness,goel2020superpolynomial,daniely2021local}, who show  
that the complexity of learning even single hidden layer neural networks grows at least exponentially with the input size. Some of this analysis utilizes the correlational statistical query (CSQ) framework, of which learnability via gradient decent is a notable instance. Recently, \cite{kiani2024hardness} have shown similar hardness results for equivariant neural networks, a class of geometric architectures that explicitly encode symmetries. Both lines of work indicate that additional assumptions on the neural network architecture or the data geometry are needed to establish learnability. Here, we focus on the latter. 

A separate body of literature investigates nonlinear, low-dimensional structure in data. Approaches for estimating intrinsic dimension~\cite{fukunaga1971algorithm,levina2004maximum,stanczuk2022your} and curvature~\cite{aamari2019estimating,sritharan2021computing,trillos2023continuum} from data seek to characterize the geometry of low-dimensional manifolds.
\emph{Manifold Learning} aims to identify low-dimensional structure by reconstructing low-dimensional manifolds from data. Methods such as Multi-Dimensional Scaling~\cite{kruskal1964multidimensional}, Isomap~\cite{tenenbaum2000global} or Diffusion Maps~\cite{diffusion-map} have shown empirical success in learning low-dimensional data manifolds. For some of these approaches, formal guarantees on their reliability with respect to the geometric characteristics and sample size of the data are known~\cite{bernstein2000graph}. More generally, the manifold learning problem and its complexity have been formally studied in~\cite{fefferman2016testing,fefferman2018fitting,aamari2022adversarial}. However, the impact of data geometry on the complexity of learning neural networks in downstream tasks remains open.

In this paper, we ask: \emph{Which assumptions on the data geometry guarantee learnability of neural networks?} 
We show that the manifold hypothesis on its own does not guarantee learnability. In particular, we give hardness results for a class of low-dimensional manifolds in the statistical query (SQ) model or under cryptographic hardness assumptions. 
Our work follows an established line of proof techniques in the SQ literature~\cite{daniely2021local,goel2020superpolynomial,chen2022hardness}, extending hardness results for neural network training in the Boolean and Gaussian input models to more general geometries.
We further show that additional assumptions on the volume and curvature of the data manifold alleviate the fundamental limitations and guarantee learnability through a rather simple interpolation argument. In particular, manifolds which can be reliably reconstructed via manifold learning are in this regime. We further discuss geometric regimes in which our results do not directly apply, and in which provable learnability remains an open question. We illustrate our learnability results through computational experiments on neural network training in the learnable and provably hard regimes. We further complement our theoretical analysis with a brief computational study of the intrinsic dimension of image data manifolds, with the goal of testing the geometric assumptions in our framework.

\section{Background}

\subsection{Basic Notation}
We denote scalars, vectors, and matrices as $v, \vv,$ and $\mV$ respectively. We consider ambient spaces $\mathbb{R}^n$ equipped with the usual inner product $\langle \cdot, \cdot \rangle$ and associated $\ell_2$ norm $\| \cdot \|$. Submanifolds $\mathcal{M} \subset \mathbb{R}^n$ considered in this study have intrinsic dimension $d\leq n$ and are compact and connected (unless otherwise stated). The tangent space $T_{\vp} \mathcal{M}$ at a point $\vp \in \mathcal{M}$ denotes the $d$-dimensional linear subspace of $\mathbb{R}^n$ spanned by velocity vectors of smooth curves incident at $\vp$. Given a subset $S \subset \mathbb{R}^n$, we denote by $d(\vz, S) = \inf_{\vp \in S} \| \vz - \vp \|$ the distance of a point $\vz$ to $S$. We denote $\Vol_d$ as the $d$-dimensional volume measure inherited from the $d$-dimensional Hausdorff measure and denote $\omega_d(r)$ and $\sigma_d(r)$ as the volume of the $d$-dimensional ball and $d$-dimensional sphere of radius $r$ respectively. For a point $\vp \in \mathcal{M}$ in a given manifold $\mathcal{M}$, $\Vol_{\mathcal{M}}(\vp, r) $ denotes the volume of the ball of radius $r$ around the point $\vp$ with respect to the Riemannian distance metric. 

\subsection{Learning Setting} We consider the task of learning feedforward neural networks $f:\mathbb{R}^n \to \mathbb{R}$ composed of $L$ hidden layers $f^{(\ell)}:\mathbb{R}^{d_{\ell}} \to \mathbb{R}^{d_{\ell-1}}$ taking the form
\begin{equation}
\begin{split}
    f(\vx) &= \vv^{\top} f^{(L)}\left( f^{(L-1)}\left( \cdots \left( f^{(1)}\left( \vx \right) \right)\cdots \right) \right), \\
    f^{(\ell)}(\vh) &= \operatorname{ReLU}\left( \mW_{\ell} \vh \right) + \vb_{\ell},
\end{split}
\end{equation}
where $\vv \in \mathbb{R}^{d_L}$, $\mW_{\ell} \in \mathbb{R}^{d_{\ell} \times d_{\ell-1}}$ and $\vb \in \mathbb{R}^{d_{\ell}}$ are trainable weights. The input dimension $d_0$ is set to the ambient dimension $n$ and the output is scalar-valued. Throughout we will consider the setting where the weight entries and hidden widths are bounded by $O(\poly(n))$. When guaranteeing that a class of networks is learnable, we will assume that the number of layers $L=O(1)$ is constant with respect to the input dimension. Our formal hardness results will apply for single hidden layer networks ($L=1$).

We consider learnability in the distribution-specific probably approximately correct (PAC) setting where the goal is to produce an algorithm which can learn a target function given polynomial time and samples \cite{mohri2018foundations}.
\begin{definition}[Efficiently PAC Learnable]
    A concept class $\mathcal{C}$ consisting of functions $c:\mathcal{X} \to \mathcal{Y}$ is \emph{efficiently PAC-learnable} over distribution $\mathcal{D}$ on $\mathcal{X} \subseteq \mathbb{R}^n$, if there exists an algorithm such that for any $\epsilon>0$ and $\delta>0$, and for any target concept $c^* \in \mathcal{C}$, the algorithm takes at most $m=\operatorname{poly}(1/\epsilon, 1/\delta, n)$ samples drawn i.i.d. from $(\vx, c^*(\vx))$ with $\vx \sim \mathcal{D}$, and returns a function $f$ satisfying $\|f-c^*\|_{\mathcal{D}} \coloneqq \sqrt{\mathbb{E}_{\vx \in \mathcal{D}}[(f(\vx) - c^*(\vx))^2]} \leq\epsilon$ with probability at least $1-\delta$ in time at most $\operatorname{poly}(1/\epsilon, 1/\delta, n)$.
\end{definition}
Note, that the above is a distribution specific instance of PAC learning as we require the algorithm to work only for a given distribution and not all distributions. In \Cref{sec:bounded_curvature}, we will also show a hard class of functions which is likely not efficiently PAC learnable. Hardness results are proven in the restricted \textbf{statistical query (SQ)} setting, a query complexity based model for proving hardness capturing most algorithms in practice \cite{kearns1998efficient,reyzin2020statistical}. 
Given a joint distribution $\mathcal{D}$ on input/output space $\mathcal{X} \times \mathcal{Y}$, any SQ algorithm is composed of a set of queries. Each query takes as input a function $g:\mathcal{X} \times \mathcal{Y} \to [-1,1]$ and tolerance parameter $\tau>0$, and returns a value $\sq(g,\tau)$ in the range:
\begin{equation}
    \left| \mathbb{E}_{(\vx, y) \sim \mathcal{D}}\left[ g(\vx,y) \right] -  \sq(g,\tau) \right| \leq \tau.
\end{equation}
Gradient based algorithms can be queried by, for example, setting $g(\vx,y) = C\frac{\partial}{\partial\theta} \left(\operatorname{NN}_\theta(\vx) - y \right)^2$ to estimate the gradient of a neural network $\operatorname{NN}_\theta$ with respect to parameter $\theta$ for the MSE loss (constant $C$ chosen so that outputs of $g$ are bounded in $[-1,1]$ forming a valid query). Hardness is quantified in the number of queries needed to learn a function $c^*$ drawn from function class $\mathcal{C}$. 

\paragraph{Manifold smoothness restrictions.}  To conform to practical settings where input features are normalized (e.g. image pixel values in the range $[0,1]$), input distributions are assumed to be supported on smooth $d$-dimensional sub-manifolds $\mathcal{M} \subset [0,1]^n$ of the $n$-dimensional hypercube. For any given manifold $\mathcal{M}$, we will assume that data distributions $\mathcal{D}_{\mathcal{M}}$ supported on that manifold have a smooth density $f$ with respect to the volume measure and are appropriately bounded such that $\rho_{\max} \coloneqq \frac{\max_{\vx \in \mathcal{M}} f(\vx)}{\min_{\vx \in \mathcal{M}} f(\vx) } = O(\poly(n))$.

We will also place curvature restrictions on the manifold by bounding its reach, a global smoothness quantity introduced by \cite{federer1959curvature} and commonly studied in the manifold learning community \cite{aamari2019estimating,aamari2022adversarial,fefferman2016testing,genovese2012minimax}. Informally, it is the largest number $D$, such that any point in ambient space at distance less than $D$ has a unique nearest neighbor in the manifold $\mathcal{M}$. It is defined more formally from descriptions of the medial axis (\Cref{fig:medial_axis}).

\begin{figure}[t]
    \centering
    \vspace{-1cm}
    \includegraphics[]{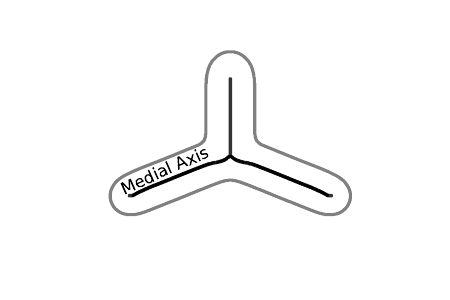}\vspace{-1.2cm} 
    \caption{Example of a one-dimensional manifold and its medial axis. Its reach is given by the minimum distance of the medial axis to the manifold.}
      \label{fig:medial_axis}
\end{figure}

\begin{definition}[Reach from medial axis]\label{def:reach}
    Given a closed subset $S \subset \mathbb{R}^n$, the medial axis $\operatorname{Med}(S)$ of $S$ consists of the set of points with no unique nearest neighbor:
    \begin{equation}
    \begin{split}
        \operatorname{Med}(S) = \{ &\vz \in \mathbb{R}^n: \exists \vp \neq \vq \in S, 
        \|\vp - \vz \| = \|\vq - \vz\| = d(\vz,S)   \}. 
    \end{split}
    \end{equation}
    The reach $\Reach{S}$ of $S$ is the minimal distance from $S$ to to $\operatorname{Med}(S)$:
    \begin{equation}
        \Reach{S} = \inf_{\vz \in \operatorname{Med}(S)} \operatorname{dist}(\vz, S). 
    \end{equation}
\end{definition}

Bounds on the reach imply corresponding bounds on the radius of curvature (related to the geodesic and normal curvature) at any point in the manifold since $\Reach{\mathcal{M}}^{-1} \geq \|\gamma_{\gamma(t), \gamma'(t)}''(t)\|$ for any unit-speed geodesic $\gamma:\mathbb{R} \to \mathcal{M}$ where $\|\gamma_{\gamma(t)}'(t)\| = 1$ for all $t\in \mathbb{R}$ \cite{aamari2022adversarial}. 
We will encounter a second classical curvature notion, \emph{Ricci curvature}, which is a local, intrinsic curvature notion that characterizes the volume growth of geodesic balls (see sec.~\ref{apx:aux} for a more formal definition). Positive lower bounds on the Ricci curvature imply that the manifold has a bounded diameter, a fact that we will use below.


\subsection{Related works}
Here, we briefly summarize the most relevant prior work to our study and reserve \Cref{app:related_works} for a more detailed discussion. Our study lies at the intersection of research in manifold learning complexity and neural network learnability. In manifold learning, various works have analyzed the complexity of learning tasks over input manifolds. In one line of work, estimation of smooth manifolds in Hausdorff loss has been shown to require sample complexity of $O(\epsilon^{-d/2}\log(1/\epsilon))$, which is independent of the ambient dimension $n$ \cite{boissonnat2010manifold,genovese2012minimax,kim2015tight}; later works also provided algorithms that run in time linear in $n$ \cite{aamari2018stability,divol2021minimax}.  
In a learning setting similar to our work, \cite{aamari2022adversarial} provide upper and lower bounds on the complexity of manifold reconstruction in SQ settings.
In a separate context, \cite{narayanan2009sample,fefferman2016testing} show that the sample complexity for determining whether a dataset is within a class of manifolds of specified intrinsic dimension, curvature, and volume bounds grows exponentially with the intrinsic dimension and polynomially in the reach and volume. \cite{narayanan2009sample} also categorizes the sample complexity for binary classification over smooth cuts on a data manifold where smoothness is defined by the condition number of the classification boundary of the manifold (a quantity closely related to reach). 

The hardness of learning neural networks has a long history that we detail further in \Cref{app:related_works}. Under the i.i.d. Gaussian input model, superpolynomial~\cite{goel2020superpolynomial} and exponential~\cite{diakonikolas2020algorithms} lower bounds for learning single hidden layer networks in CSQ settings have been shown. For uniformly random Boolean inputs, \cite{daniely2021local} reduce the problem of learning single hidden layer neural networks to a cryptographically hard problem, a technique, which we also use in \Cref{app:crypto_equivalent_hardness}. They also show how to ``Booleanize" Gaussian inputs to show hardness for three hidden layer networks, which was later extended to SQ and cryptographic hardness of learning two hidden layer networks in \cite{chen2022hardness}. 
\cite{kiani2024hardness} used similar techniques to give hardness results for equivariant neural networks. To the best of our knowledge, the hardness of learning feedforward neural networks under more general data geometries, such as under the manifold hypothesis, has not been studied previously.
Various works have found efficient learning algorithms under i.i.d. input assumptions when the weights of the networks are restricted in their rank, condition number, positivity, and other criteria \cite{vempala2019gradient,chen2023faster,bakshi2019learning,diakonikolas2020algorithms}. We consider the generic setting where weight matrices are bounded in width and magnitude by $O(\poly(n))$, but otherwise unrestricted.


\section{Learnability results}


Our main results prove the existence of a learnable and hard to learn class of input data manifolds illustrated in \Cref{fig:regimes}. The \emph{learnable} 
setting is the class of efficiently sampleable manifolds, which form the basis for algorithms that can provably reconstruct manifolds \cite{bernstein2000graph,cheng2005manifold,narayanan2010sample,fefferman2018fitting}. As expected, we find that a simple interpolation argument guarantees learnability of neural networks over these manifolds. Below and in \Cref{app:examples_sampleable}, we comment on instances of data geometries commonly assumed in machine learning and data science applications that place manifolds within this regime. 

The \emph{provably hard}
setting are manifolds without bounds on volume but with bounds on curvature quantified globally by the reach. When the bound on the reach grows no faster than $o(\sqrt{n})$ ($n$ denoting the input size), we construct input data manifolds that feature curvature no larger than the stated bounds but for which learning neural networks is exponentially hard. Our proofs construct curves that cover exponentially many quadrants of the Boolean cube, which allows us to extend classical hardness results for learning Boolean functions expressible by neural networks. 

\begin{figure}
  \begin{minipage}[c]{0.45\textwidth}
    \includegraphics[width=\textwidth]{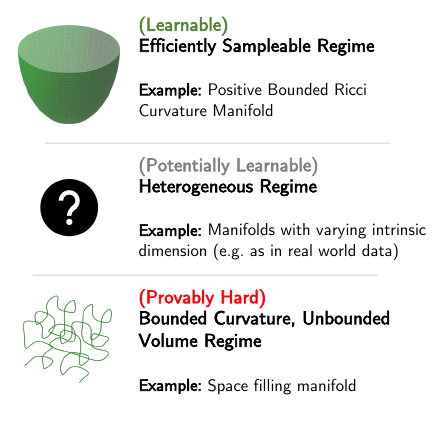}
  \end{minipage}\hfill
  \begin{minipage}[c]{0.53\textwidth}
    \caption{
       Learnability of neural networks depends on the regularity and smoothness properties of the input data manifold. In the efficiently sampleable regime corresponding to manifolds which can be approximated well with samples, neural networks are learnable via simple interpolation arguments. In the regime where manifolds are bounded solely by their curvature and intrinsic dimension, we show classes of manifolds that obstruct the learnability of algorithms. Real-world data likely lives in an intermediate regime with heterogeneous properties (e.g. manifolds with varying intrinsic dimension; see \Cref{sec:experimental_geometry}).
    } \label{fig:regimes}
  \end{minipage}
\end{figure}

Within the manifold regimes we study, our results are relatively tight with respect to bounds on the reach, since whenever the reach grows as $\omega(\sqrt{n})$, such manifolds fall within the first class of efficiently sampleable manifolds. 
We leave as an open question the learnability of manifolds whose reach grows exactly as $\Theta(\sqrt{n})$ (see \Cref{app:volume_bound_reach} for more details). Looking beyond our setting, real-world data manifolds feature heterogeneous properties that may not conform to the global bounds on curvature, intrinsic dimension, etc. that we set in our study. Better characterizing these heteregeneous features is a first step to extending our learnability results to more realistic settings. We conduct some preliminary empirical analysis in analyzing this heterogeneity in our experiments (\Cref{sec:experimental_geometry}).

\subsection{Sampleable regime} \label{sec:manifold_reconstruction}
In the manifold reconstruction literature, the goal is to draw samples from a distribution $\mathcal{P}$ (typically supported on a given manifold $\mathcal{M}$) and find a manifold $\mathcal{M'}$ which closely approximates the sample distribution or target manifold in some appropriate error metric such as the Haussdorf distance   \cite{tenenbaum2000global,bernstein2000graph,aamari2021statistical,fefferman2016testing,cheng2005manifold,narayanan2010sample,fefferman2018fitting}. Runtime and sample complexity for these algorithms are generally at least linear in the volume of the manifold, polynomial in smoothness parameters such as the reach, and exponential in the intrinsic dimension, though the dependence on the ambient dimension can often be removed \cite{fefferman2018fitting}. Manifold reconstruction algorithms offer a direct means for learning data from a target neural network $f^*$ as one can apply such algorithms to the graph of the function consisting of points $(\vx, f^*(\vx)) \in \mathcal{X} \times \mathbb{R}$. As we will show here, guarantees of learning in the manifold reconstruction literature directly imply guarantees for learning neural networks via a simple interpolation argument. However, caution must be taken in assuming this situation holds in practice. For this procedure to be efficient, the volume of the manifold should be at worst polynomial in the intrinsic dimension $d$ and not growing significantly with the ambient dimension $n$. Some empirical evidence suggests that manifolds of real-world data may not be in this regime \cite{peng2021hyperbolic,brown2022verifying}.

Essential to many of the manifold reconstruction algorithms is the requirement that one can efficiently cover the manifold with samples \cite{cheng2005manifold,fefferman2016testing,bernstein2000graph}. This requirement comes in various technical forms and we frame our results here assuming the ability to form an epsilon net with samples.

\begin{definition}[$(\epsilon,\delta)$-net]
     Given a distribution $\mathcal{D}_{\mathcal{M}}$ over a manifold $\mathcal{M}$, a subset $\mathcal{S} \subset \mathcal{M}$ of points forms an $(\epsilon,\delta)$-net if with probability at least $1-\delta$ over draws $\vx \sim \mathcal{D}_{\mathcal{M}}$, there exists a point $\vx' \in \mathcal{S}$ where $\|\vx - \vx'\| \leq \epsilon$. 
\end{definition}

We denote a sequence of manifolds indexed by ambient dimensions as efficiently sampleable if for a fixed intrinsic dimension $d$, one can draw polynomially many samples in $n$ and error $1/\epsilon$ to form an epsilon net over the manifold $\mathcal{M}$.

\begin{definition}[Efficiently sampleable manifold] \label{def:efficiently_sampleable}
    Let $\{\mathcal{M}_n\}$ denote a sequence of manifolds in ambient dimension $n$ with fixed intrinsic dimension $d=O(1)$. Let $\{\mathcal{D}_{\mathcal{M}_n}\}$ be a corresponding sequence of distributions over points on the manifolds. We denote this sequence of manifolds as \emph{efficiently sampleable} if with at most $\tilde{O}(\operatorname{poly}(n, 1/\epsilon))$ samples drawn i.i.d. from $\mathcal{D}_{\mathcal{M}_n}$, one can form an $(\epsilon,\delta)$-net of the manifold $\mathcal{M}_n$ for any $\delta = \Omega(\poly{(n)}^{-1})$.
\end{definition}

Note that classical manifold learning algorithms typically implicitly assume efficiently sampleable manifolds as they assume that the algorithms run efficiently or have runtimes that depend polynomially on the manifold's volume $\operatorname{Vol}(\mathcal{M})$ (see \Cref{ex:bernstein} for details).
With standard regularity assumptions on the manifold, such a volume assumption typically implies that an epsilon net can be formed efficiently with $\tilde{O}(\operatorname{Vol}(\mathcal{M})/\epsilon^d)$ samples by a coupon collector argument (see, e.g., \Cref{ex:isoperimetric_setting}). 

\begin{remark}
    Given a target function $f^*$ with inputs on a manifold $\mathcal{M} \subseteq \mathcal{X}$, we could treat the graph of the function $\mathcal{M}_{f^*} = \{ (\vx, f^*(\vx)): \vx \in \mathcal{M} \}$ as a manifold in $\mathcal{X} \times \mathbb{R}$. Then, one can apply manifold reconstruction algorithms directly to the graph $\mathcal{M}_{f^*}$, which lies in an ambient space of dimension $n+1$. This likely works in practice, though it runs into some technical issues due to discontinuities introduced by ReLU activations. We apply a more direct method here.
\end{remark}

\begin{proposition}
    Let $n$-dimensional inputs be drawn from a sequence of efficiently sampleable manifolds $\mathcal{M}_n$ with intrinsic dimension $d=O(1)$ and distributions $\mathcal{D}_{\mathcal{M}_n}$ over the manifold. Denote $\mathcal{H}_n$ as the function class of constant depth ReLU networks on $n$ inputs with weights bounded in magnitude by $B=O(\poly(n))$ and $O(\poly(n))$ width. Then, one can learn $f^* \in \mathcal{H}_n$ up to error $\epsilon$ in runtime and sample complexity $O(\operatorname{poly}(n,B, 1/\epsilon))$.
\end{proposition}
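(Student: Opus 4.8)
The plan is to combine the efficiently sampleable assumption with a Lipschitz bound on constant-depth polynomially-bounded ReLU networks to reduce learning to memorization/interpolation over a finite net. First I would establish that every $f^* \in \mathcal{H}_n$ is $\Lambda$-Lipschitz on $\mathbb{R}^n$ with $\Lambda = O(\poly(n))$: the operator norm of each weight matrix $\mW_\ell$ is at most $\sqrt{d_\ell d_{\ell-1}} \, B$, ReLU is $1$-Lipschitz, and composition of $L = O(1)$ layers followed by the inner product with $\vv$ (with $\|\vv\| \le \sqrt{d_L}\, B$) multiplies these bounds, so $\Lambda = B^{L+1} \prod_\ell \sqrt{d_\ell d_{\ell-1}} \cdot \sqrt{d_L} = O(\poly(n))$. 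Also note $f^*$ is bounded in magnitude by $O(\poly(n))$ on the bounded domain $[0,1]^n$, which will control the contribution of the low-probability ``uncovered'' region.

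Next I would set the net parameters. Given the target accuracy $\epsilon$, choose $\epsilon' = \epsilon/(2\Lambda)$ as the net radius and $\delta' $ as the net failure probability (to be fixed below, polynomially small). By \Cref{def:efficiently_sampleable}, drawing $m = \tilde O(\poly(n, 1/\epsilon'))= \tilde O(\poly(n,1/\epsilon))$ i.i.d.\ samples $\vx_1,\dots,\vx_m \sim \mathcal{D}_{\mathcal{M}_n}$ yields an $(\epsilon',\delta')$-net $\mathcal{S} = \{\vx_i\}$ with the stated probability. The learner queries the oracle to obtain $(\vx_i, f^*(\vx_i))$ for each $i$, and outputs the nearest-neighbor interpolant $\hat f(\vx) = f^*(\vx_{i(\vx)})$ where $i(\vx) = \argmin_i \|\vx - \vx_i\|$ (a piecewise-constant function, evaluable in time $O(mn)$). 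For a fresh $\vx \sim \mathcal{D}_{\mathcal{M}_n}$, with probability $\ge 1-\delta'$ there is $\vx_i \in \mathcal{S}$ with $\|\vx - \vx_i\| \le \epsilon'$, hence by Lipschitzness $|\hat f(\vx) - f^*(\vx)| \le |f^*(\vx_{i(\vx)}) - f^*(\vx)| \le \Lambda \|\vx - \vx_{i(\vx)}\| \le \Lambda \epsilon' = \epsilon/2$; on the remaining event of probability $\le \delta'$ we bound $|\hat f(\vx) - f^*(\vx)| \le 2\max|f^*| = O(\poly(n))$. Therefore $\|\hat f - f^*\|_{\mathcal{D}_{\mathcal{M}_n}}^2 = \E[(\hat f - f^*)^2] \le (\epsilon/2)^2 + \delta' \cdot O(\poly(n))$, and choosing $\delta' = \epsilon^2 / O(\poly(n))$ — still polynomially small, so permitted by \Cref{def:efficiently_sampleable} — makes the right-hand side at most $\epsilon^2$, i.e.\ $\|\hat f - f^*\|_{\mathcal{D}_{\mathcal{M}_n}} \le \epsilon$. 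All of this happens with probability $\ge 1-\delta$ once we also union-bound the net construction's success; to get arbitrary confidence $1-\delta$ one repeats $O(\log(1/\delta))$ times or absorbs $\delta$ into $\delta'$ via a standard boosting/validation step, and total runtime and sample complexity are $O(\poly(n, B, 1/\epsilon))$ (with a benign $\log(1/\delta)$ factor).

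The main obstacle — and the only place requiring care — is the handling of the uncovered region, since an $(\epsilon,\delta)$-net gives no pointwise guarantee off the net: without the magnitude bound on $f^*$ this bad event could contribute an unbounded amount to the $L^2$ error. The fix is precisely the observation that $\mathcal{M}_n \subset [0,1]^n$ and the weights/widths are $\poly(n)$-bounded, so $f^*$ is uniformly bounded by $\poly(n)$ there, letting us trade off $\delta'$ against this bound. A secondary subtlety is bookkeeping the polynomial dependence: we must verify that shrinking $\epsilon'$ by the $\poly(n)$ factor $\Lambda$ and $\delta'$ by the $\poly(n)/\epsilon^2$ factor keeps the sample size $\tilde O(\poly(n,1/\epsilon'))$ from \Cref{def:efficiently_sampleable} polynomial in $n$ and $1/\epsilon$ — which it does, since composing polynomials yields a polynomial, and the $\delta'$ dependence enters only logarithmically (or as an inverse polynomial, still fine). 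I would also remark that the ReLU-induced discontinuities mentioned in the preceding remark cause no trouble here because the argument never uses continuity of $f^*$ beyond the global Lipschitz bound, which holds everywhere.
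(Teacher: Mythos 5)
Your proof follows essentially the same interpolation-over-a-net approach as the paper: form an $(\epsilon',\delta')$-net from the efficiently-sampleable property, use the $\poly(n)$ Lipschitz bound on constant-depth bounded-weight ReLU networks to control the error on covered points, and use the $\poly(n)$ magnitude bound on $f^*$ over $[0,1]^n$ to control the contribution of the uncovered $\delta'$-fraction. If anything your bookkeeping is slightly more careful than the paper's -- you explicitly take $\delta' = \epsilon^2/\poly(n)$ so the bad-event contribution to the squared error is $\le \epsilon^2/2$, whereas the paper's stated choice $\delta = o(|f(\vx)|^{-1})$ is written as if independent of $\epsilon$, which does not by itself force the $L^2$ error below an arbitrary $\epsilon$.
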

\begin{proof}
    From the efficiently sampleable property of the manifold, we form an $(\epsilon',\delta)$-net using $\tilde{O}(\operatorname{poly}(n, 1/\epsilon))$ many samples where for all but a $\delta$-fraction of the manifold any point is within $\epsilon'$ of a given point within the net. Any network of $L$ layers and $O(\poly(n))$ bounded width and weight magnitude has the Lipschitz property that \cite{virmaux2018lipschitz}
    \begin{align}
        \|f^*(\vx) - f^*(\vx')\| &\leq \|\vx - \vx'\|\prod_{\ell=1}^L \|W^{(\ell)}\|_2 \\
        &\leq \|\vx - \vx'\| \prod_{\ell=1}^L \|W^{(\ell)}\|_F \nonumber \\
        &\leq O(B'n^L) \|\vx - \vx'\|, \nonumber
    \end{align}
    where $B'=O(\poly(n,B))$ is a bound on the Frobenius norm of the weight matrices. Therefore, we can interpolate the value of $f^*$ for any $\vx \in \mathcal{M}_n$ from a sample by setting $\epsilon' = O(\epsilon B'^{-1}n^{-L})$. For the $\delta$-fraction that is not in this net, note that $|f(\vx)| \leq O(\poly(n))$ for all $\vx $ in the hypercube so setting $\delta = o( |f(\vx)|^{-1})$ will guarantee that the contribution from these points decays. Finally, since $L=O(1)$ and $B=O(\poly(n))$ by assumption, we have the resulting polynomial complexity.
\end{proof}

Restrictions on the curvature or convexity properties of a manifold can often render the manifold efficiently sampleable. 
To illustrate that many manifolds implicitly fall within the regime of efficiently sampleable manifolds, we give an example below. Additional examples can be found in~\Cref{app:examples_sampleable}.

\begin{proposition}[Bounded Ricci curvature (Isoperimetric setting, see \cite{gromov1986isoperimetric,bubeck2023universal} for motivation) ]
\label{ex:isoperimetric_setting}
    The set of distributions $\mathcal{P}_{\mathcal{M}}$ over complete manifolds with bounded Ricci curvature $\Ric{\mathcal{M}} \geq (d-1)K$ for a constant $K>0$\footnote{Here, the factor $(n-1)$ accounts for the standard scaling of the Ricci curvature for a sphere of radius $1/\sqrt{K}$ which has Ricci curvature $(n-1)K$.} are efficiently sampleable.
\end{proposition}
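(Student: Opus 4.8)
The plan is to combine two classical comparison-geometry facts --- the Bonnet--Myers diameter bound and the \emph{relative} Bishop--Gromov volume comparison --- to show that every manifold in this family has $\epsilon$-covering number polynomial in $1/\epsilon$ (with constants depending only on $d$ and $K$, hence $O(1)$), and then to pass from an $\epsilon$-net to a statement about random samples via a coupon-collector / missed-mass estimate.

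First I would extract the geometric consequences of the curvature lower bound. By Bonnet--Myers, a complete manifold with $\Ric{\mathcal{M}} \geq (d-1)K$ and $K>0$ is compact with geodesic diameter at most $\pi/\sqrt{K}$, so in particular it lives in the submanifold-of-$[0,1]^n$ setting of our framework. The relative form of Bishop--Gromov states that for every $\vp\in\mathcal{M}$ the map $r \mapsto \operatorname{Vol}_{\mathcal{M}}(\vp,r)/V_{K,d}(r)$ is non-increasing, where $V_{K,d}(r)$ denotes the volume of a geodesic $r$-ball in the simply connected space form of constant curvature $K$ (the round $d$-sphere of radius $1/\sqrt{K}$, whose Ricci curvature is exactly $(d-1)K$). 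Taking the outer radius equal to $\mathrm{diam}(\mathcal{M}) \leq \pi/\sqrt{K}$ then gives simultaneously an upper bound $\operatorname{Vol}_d(\mathcal{M}) \leq V_{K,d}(\pi/\sqrt{K})$ on the total volume and, crucially, a lower bound on the volume of small balls relative to the whole manifold:
\[
  \operatorname{Vol}_{\mathcal{M}}(\vp, r) \;\geq\; \operatorname{Vol}_d(\mathcal{M})\,\frac{V_{K,d}(r)}{V_{K,d}(\pi/\sqrt{K})} \;\geq\; c_{d,K}\, r^{d}\, \operatorname{Vol}_d(\mathcal{M}), \qquad 0 < r \leq \pi/\sqrt{K},
\]
with $c_{d,K}>0$ depending only on $d$ and $K$, since $V_{K,d}(r)/r^{d}$ is continuous and positive on $(0,\pi/\sqrt{K}]$ and tends to $\omega_d$ as $r\to 0$.

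Next I would turn this into covering and sampling estimates. Let $\{\vp_1,\dots,\vp_N\}\subseteq\mathcal{M}$ be a maximal $(\epsilon/2)$-separated set in the geodesic metric. The geodesic balls of radius $\epsilon/4$ about the $\vp_i$ are pairwise disjoint, so $N \leq \operatorname{Vol}_d(\mathcal{M})/\min_i \operatorname{Vol}_{\mathcal{M}}(\vp_i,\epsilon/4) \leq (c_{d,K})^{-1}(\epsilon/4)^{-d} = O_{d,K}(\epsilon^{-d})$, which is polynomial in $1/\epsilon$ because $d=O(1)$. By maximality the geodesic $(\epsilon/2)$-balls about the $\vp_i$ cover $\mathcal{M}$, and since geodesic distance on a submanifold of $\mathbb{R}^n$ dominates Euclidean distance, assigning each point of $\mathcal{M}$ to a nearest $\vp_i$ partitions $\mathcal{M}$ into at most $N$ cells each of Euclidean diameter at most $\epsilon$. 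Now draw $m$ independent samples from $\mathcal{D}_{\mathcal{M}_n}$: a cell of probability mass $q$ is avoided with probability $(1-q)^m \leq e^{-qm}$, so the expected $\mathcal{D}_{\mathcal{M}_n}$-mass of the union of the cells missed by all $m$ samples is at most $N\max_{q\in[0,1]} q e^{-qm} \leq N/(em)$; choosing $m = O(N/\delta^2)$ and applying Markov's inequality shows that with probability at least $1-\delta$ the missed mass is at most $\delta$, so the sample set is an $(\epsilon,\delta)$-net. Since $N=O_{d,K}(\epsilon^{-d})$, $d=O(1)$, and $1/\delta = O(\poly(n))$ by hypothesis, this uses $m=O(\poly(n,1/\epsilon))$ samples --- in fact a number essentially independent of the ambient dimension --- which is exactly efficient sampleability in the sense of \Cref{def:efficiently_sampleable}. (The density-regularity bound $\rho_{\max}=O(\poly(n))$ is not even needed here, since light cells contribute little mass automatically.)

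The only step that is not bookkeeping is the lower bound on the volume of small balls. The textbook statement of Bishop--Gromov supplies only the upper bound $\operatorname{Vol}_{\mathcal{M}}(\vp,r)\leq V_{K,d}(r)$; the key is to invoke its monotone (relative) form with the Myers diameter as the outer comparison radius, which is what converts a Ricci lower bound into control of the covering number of $\mathcal{M}$. The remaining ingredients --- a packing-to-covering argument, the fact that geodesic distance dominates Euclidean distance on a submanifold, and the missed-mass estimate --- are standard.
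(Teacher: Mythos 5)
Your proposal is correct and follows the same high-level template as the paper: Myers diameter bound, covering-number estimate, then a sampling argument to turn a cover into an $(\epsilon,\delta)$-net. However, your treatment of the covering-number step is materially more careful than the paper's, and in fact plugs a real gap. The paper asserts that the curvature bound implies ``the manifold is contained within a $d$-dimensional ball of radius $1/\sqrt{K}$'' and then covers that ball. But a $d$-dimensional submanifold of $\mathbb{R}^n$ generally does not lie in a $d$-dimensional affine subspace (the round sphere already needs $d+1$ ambient dimensions), so taken literally the paper's argument would give a cover of size $O(\epsilon^{-n})$, not $O(\epsilon^{-d})$. What is actually needed --- and what you supply --- is an \emph{intrinsic} covering bound: Bonnet--Myers gives compactness and a uniform diameter bound $\pi/\sqrt{K}$, and the relative Bishop--Gromov comparison gives a lower bound on the volume of small geodesic balls as a fraction of $\operatorname{Vol}_d(\mathcal M)$, which yields an intrinsic packing/covering number $O_{d,K}(\epsilon^{-d})$; geodesic distance dominating Euclidean distance then transfers this to an ambient $\epsilon$-cover. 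Your final step (missed-mass expectation plus Markov) is a clean substitute for the paper's coupon-collector lemma and gives the same $m=O(\operatorname{poly}(n,1/\epsilon))$ sample complexity; both versions correctly observe that the $\rho_{\max}$ density-regularity assumption is not actually needed here since low-mass cells contribute negligibly. Net: same strategy, but your version is the rigorous one.
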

\begin{proof}
    The bound on the Ricci curvature guarantees that the diameter of the manifold is at most $2\pi/\sqrt{K}$ and the manifold is contained within a $d$-dimensional ball of radius $1/\sqrt{K}$ \cite{myers1941riemannian} (see also Theorem 6.3.3 of \cite{petersen2006riemannian}). Thus, we can form an $(\epsilon,\delta)$-net over the manifold $\mathcal{M}$ by inducing it from an $\epsilon$-cover on the ball of radius $1/\sqrt{K}$ (see \Cref{def:cover_and_packing} for definition). To achieve an $\epsilon$-cover of such a ball in $d$ dimensions, it suffices to have $ N_{\epsilon} = O(1/\epsilon^d)$ points (see, e.g., Lemma 5.2 of \cite{vershynin2010introduction}). We denote the balls in the cover as $b_1, \dots, b_{N_{\epsilon}}$. Let $\tau$ denote a sampling ratio where we will take $\tau^{-2} = O(\poly(n))$ samples to form the $(\epsilon, \delta)$-net. By a coupon-collector argument (see \Cref{lem:coupon_collector}), $\tau^{-2}$ samples suffices to guarantee with high probability that any ball $b_i$ with probability at least $\mathcal{P}_{\mathcal{M}}(b_i) \geq \tau$ has a sample within it. $\delta $ then is at most the probability that a randomly drawn sample falls within a ball $b_i$ with probability $\mathcal{P}_{\mathcal{M}}(b_i) < \tau$. Therefore,
    \begin{equation}
        \delta \leq \sum_{i: \mathcal{P}_{\mathcal{M}}(b_i) < \tau} \mathcal{P}_{\mathcal{M}}(b_i) \leq N_{\epsilon} \tau. 
    \end{equation}
    Thus, setting $\tau = \delta / N_{\epsilon}$ suffices to achieve $\delta = \Omega(\poly(n)^{-1})$.
    
\end{proof}

We remark that the efficiently sampleable setting does not necessarily require that the manifold be fully connected. In fact, it is straightforward to extend proofs of learnability such as in \Cref{ex:isoperimetric_setting} to settings where there are multiple such disconnected manifolds as long as the number of disconnected components does not grow superpolynomially in $n$.

\subsection{Hard regime with bounded curvature}
\label{sec:bounded_curvature}
The learnability of the networks in the manifold learning setting relied crucially on the fact that such settings implicitly place bounds on the volume of such manifolds.  Here, we lift that restriction and consider a setting where manifolds are smooth and bounded only in their curvature and intrinsic dimension. More formally, we consider studying inputs drawn from manifolds with bounded reach (see \Cref{def:reach}).

In this section, we construct a sequence of manifolds $\{\mathcal{M}_n\}$ with sufficiently bounded reach that are exponentially hard to learn in the SQ model. These manifolds resemble space filling curves which wrap around exponentially many quadrants of the Boolean cube (see \Cref{app:space_filling_curve}). Given a bound on the reach $\Reach{\mathcal{M}_n\}} = O(n^\alpha)$ for $\alpha<0.5$, the space filling curve wraps around $2^{n^{1-2\alpha}}=2^{n^{\Omega(1)}}$ quadrants allowing one to extend standard hardness results for learning Boolean functions to data supported on this manifold.

\begin{theorem} \label{thm:hard_to_learn}
    Let $\mathcal{D}_{\mathcal{M}_n}$ denote the uniform distribution over manifolds $\mathcal{M}_n$ constructed in \Cref{def:manifold_sequence} where $\Reach{\mathcal{M}_n} = O(n^\alpha)$ for $\alpha<0.5$. 
    Any SQ algorithm $\mathcal{A}$ capable of learning the class of linear width single hidden layer $\operatorname{ReLU}$ neural networks under this sequence of distributions up to mean squared error sufficiently small ($\epsilon/8$ suffices) with queries of tolerance $\tau$ must use at least $\Omega(\tau^2 2^{n^{\Omega(1)}})$ queries. 
\end{theorem}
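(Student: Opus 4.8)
The plan is to transfer a classical statistical-query lower bound over the Boolean hypercube to the manifold setting, exploiting the fact that $\mathcal{M}_n$ is engineered to ``wrap around'' exponentially many quadrants of the cube. Set $m=\Theta(n^{1-2\alpha})$, so that $2^{m}=2^{n^{\Omega(1)}}$; we may assume $\tau\ge 2^{-n^{\Omega(1)}}$, since otherwise the asserted bound $\Omega(\tau^2 2^{n^{\Omega(1)}})$ is below $1$ and vacuous. The Boolean ingredient I would use is that the $2^m$ parity functions $\{\chi_S:S\subseteq[m]\}$ are SQ-hard under the uniform distribution on $\{0,1\}^m$: each $\chi_S$ is computed by a single hidden layer $\operatorname{ReLU}$ network of width $O(m)$ (a triangle wave applied to $\sum_{i\in S}x_i$) with $O(1)$-bounded weights; the $\chi_S$ are orthonormal; hence any SQ algorithm learning this class to mean squared error below a small absolute constant needs $\Omega(\tau^2 2^{m})$ queries of tolerance $\tau$ (a single $[-1,1]$-valued query correlates $\ge\tau$ with at most $\tau^{-2}$ of them). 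One could instead invoke the network-specific SQ lower bounds of \cite{chen2022hardness,daniely2021local}.

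Next I would fix the properties of $\mathcal{M}_n$ supplied by \Cref{def:manifold_sequence}. Partition the $n$ coordinates into $m$ blocks of size $\Theta(n^{2\alpha})$ plus $\Theta(n)$ free coordinates, and let $\Pi:\mathbb{R}^n\to\mathbb{R}^m$ return the $m$ block-averages (so $\|\Pi\|_{\mathrm{op}}=O(1)$). The curve should visit, for each $v\in\{0,1\}^m$, a ``bulk'' piece $\mathcal{B}_v$ on which block $\ell$ sits exactly at the corner $(v_\ell,\dots,v_\ell)$ (hence $\Pi\equiv v$ on $\mathcal{B}_v$) while the curve meanders among the free coordinates; consecutive bulk pieces, ordered by a Gray code, are joined by short ``transition'' arcs on which one block sweeps from $\vec 0$ to $\vec 1$. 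The three facts I need are: (i) distinct bulk pieces lie $\Omega(n^\alpha)$ apart and all turns have radius $O(n^\alpha)$, so $\Reach{\mathcal{M}_n}=O(n^\alpha)$; (ii) the uniform (arc-length) measure $\mathcal{D}_{\mathcal{M}_n}$ puts mass at most $\delta$ on the transition arcs, where $\delta$ is a free design parameter that can be driven down to any prescribed value $\ge 2^{-n^{\Omega(1)}}$ by lengthening the bulk pieces; (iii) conditioned on the bulk, the index $v$ is uniform on $\{0,1\}^m$ and $\mathcal{D}_{\mathcal{M}_n}(\cdot\mid\mathcal{B}_v)$ is arc-length measure on $\mathcal{B}_v$. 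For any parity $\chi_S$, the composition $\tilde g_S:=\chi_S\circ\Pi$ is then a single hidden layer $\operatorname{ReLU}$ network on $\mathbb{R}^n$ of linear width with weights of magnitude $O(\operatorname{poly}(n))$, and $\tilde g_S\equiv\chi_S(v)$ on $\mathcal{B}_v$.

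Given these ingredients the reduction is routine. Suppose $\mathcal{A}$ is an SQ algorithm that learns linear-width single hidden layer $\operatorname{ReLU}$ networks over $\mathcal{D}_{\mathcal{M}_n}$ to MSE $\le\epsilon/8$ using $q$ queries of tolerance $\tau$; instantiate $\mathcal{M}_n$ with transition mass $\delta\le\tau/4$. Given SQ access to the uniform distribution on $\{0,1\}^m$ with labels $\chi_S$, run $\mathcal{A}$ against the target $\tilde g_S$: answer each query $(h,\tau)$ of $\mathcal{A}$ by forming $H(v,y):=\mathbb{E}_{\vx\sim\mathcal{D}_{\mathcal{M}_n}(\cdot\mid\mathcal{B}_v)}[h(\vx,y)]\in[-1,1]$, querying $(H,\tau/2)$ to the Boolean oracle, and returning the result; by (ii)--(iii) and $|h|\le 1$ this lies within $\tau/2+2\delta\le\tau$ of $\mathbb{E}_{\vx\sim\mathcal{D}_{\mathcal{M}_n}}[h(\vx,\tilde g_S(\vx))]$, so the simulation is faithful. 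If $\mathcal{A}$ outputs $f$ with $\mathbb{E}_{\mathcal{D}_{\mathcal{M}_n}}[(f-\tilde g_S)^2]\le\epsilon/8$, output $\hat f(v):=\mathbb{E}_{\vx\sim\mathcal{D}_{\mathcal{M}_n}(\cdot\mid\mathcal{B}_v)}[f(\vx)]$; since $\tilde g_S\equiv\chi_S(v)$ on $\mathcal{B}_v$, Jensen's inequality and conditioning on the bulk give
\[
\mathbb{E}_{v}\big[(\hat f(v)-\chi_S(v))^2\big]\;\le\;\mathbb{E}_{\mathcal{D}_{\mathcal{M}_n}(\cdot\mid\mathrm{bulk})}\big[(f-\tilde g_S)^2\big]\;\le\;\tfrac{1}{1-\delta}\,\mathbb{E}_{\mathcal{D}_{\mathcal{M}_n}}\big[(f-\tilde g_S)^2\big]\;\le\;\tfrac{\epsilon}{4},
\]
which is below the Boolean hardness threshold for $\epsilon$ a small enough constant. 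Hence $q$ queries of tolerance $\tau/2$ would learn all parities on $m$ bits, forcing $q=\Omega(\tau^2 2^{m})=\Omega(\tau^2 2^{n^{\Omega(1)}})$.

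The real work is the geometric construction behind \Cref{def:manifold_sequence}, not the reduction. One must route a \emph{single connected} curve of bounded curvature through $2^{n^{1-2\alpha}}$ quadrants of $[0,1]^n$ while keeping $\Reach{\mathcal{M}_n}=\Theta(n^\alpha)$; the crux is that turning through a fixed angle with radius $R$ requires a turning region of extent $\asymp R$, which no coordinate $2$-plane of $[0,1]^n$ can contain once $R\gg 1$, so every turn must be executed inside a ``diagonal'' $2$-plane spanned by normalized all-ones vectors over disjoint groups of $\asymp n^{2\alpha}$ coordinates --- this is precisely what forces the block size $n^{2\alpha}$, hence $m\asymp n^{1-2\alpha}$, and hence the threshold $\alpha<1/2$. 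Simultaneously the bulk pieces must be made exponentially long (so transitions are negligible) by filling the free coordinates, again without breaking the curvature bound or creating self-intersections within distance $\Theta(n^\alpha)$; verifying the reach via the medial axis (\Cref{def:reach}) under all of these constraints at once is the delicate part. A secondary point is that the reduction must tolerate exponentially small $\tau$, which is exactly what the flexibility ``$\delta$ down to $2^{-n^{\Omega(1)}}$'' in (ii), together with the vacuousness observation above, buys us.
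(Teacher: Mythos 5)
Your high-level strategy matches the paper's: construct a curve of bounded reach that wraps around exponentially many quadrants, extract Boolean coordinates from points on the curve that are uniform with high probability, and lift the classical SQ hardness of parities to this distribution via a triangle-wave single-hidden-layer $\operatorname{ReLU}$ network. Where you diverge is in both the SQ argument and the geometric construction, and the second divergence is where a real gap sits.

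For the SQ argument, you use an explicit oracle-simulation reduction: a manifold-query $(h,\tau)$ is answered by passing the conditional expectation $H(v,y)=\mathbb{E}[h\mid\mathcal{B}_v]$ to a Boolean oracle, incurring error $\tau/2+2\delta$, and the learned manifold hypothesis is re-rounded to a parity via Jensen. The paper instead invokes the pairwise-independence variance argument (\Cref{thm:pairwise-ind-sq-lower}, adapted from Chen et al.): it shows the lifted class is $(1-\eta)$-pairwise independent with $\eta=O(2^{-\Omega(n_b)})$ and directly concludes the query lower bound without any simulation or re-rounding step. Both are standard routes; the paper's is more compact because it side-steps the bookkeeping of driving the transition mass $\delta$ below $\tau/4$ and converting the learned real-valued hypothesis back to a Boolean one.

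The geometric construction you sketch is genuinely different and is also where the proof is not complete. You propose making the ``bulk'' pieces exponentially long by meandering through $\Theta(n)$ free coordinates, so that the transition mass $\delta$ can be tuned arbitrarily small. The paper's construction (\Cref{def:manifold_sequence}, \Cref{eq:space_filling_manifold_equation}) does not do this: every arc segment has the same $\Theta(R)$ arc length, and the smallness of the ``transition'' event comes not from long bulk pieces but from the \emph{reflected Gray code} structure --- the leading $n_b-t$ coordinates change in only an $O(2^{-t})$ fraction of segments (\Cref{lem:high_prob_boolean_string}), so setting $t=n_b/2$ gives $\eta=O(2^{-\Omega(n_b)})$ for free, with no extra parameters to tune. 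Your meandering-bulk mechanism requires the curve to take $2^{n^{\Omega(1)}}$ turns inside a fixed set of free coordinates while keeping reach $\Theta(n^\alpha)$ and avoiding self-intersections within that radius; you rightly flag verifying this via the medial axis as ``the delicate part,'' and indeed it is the part the paper dispatches cleanly (\Cref{lem:manifold_reach_verification}) precisely because its arcs never need to be lengthened. So the reduction half of your argument is sound, but the construction half as written is a sketch with a substantive unverified claim; the Gray-code arcs are the missing ingredient that makes the reach bound and the ``mostly Boolean'' property fall out simultaneously.
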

\begin{proof}[Proof sketch.]
    We first form manifolds conforming to the stated bounds on the reach and intrinsic dimension while also ``looping" around exponentially many quadrants of the hypercube. This is done by forming space-filling curves whose paths follow the indexing of a Gray code~\cite{gray1953pulse}. Inputs on this manifold are real-valued, but by carefully selecting a portion of the input dimensions, we obtain inputs that with high probability are approximately distributed as uniform over Boolean inputs $\{0,1\}^{n_b}$. We then extend previous proofs of the hardness of learning networks approximating parity functions under i.i.d. Boolean inputs proven in \cite{daniely2021local,chen2022hardness} to our setting. See \Cref{app:proofs_for_bounded_curvature} for complete proofs.
\end{proof}

\begin{remark}
    In \Cref{thm:cryptographic}, we show an equivalent proof of hardness in the cryptographic setting following the techniques in \cite{daniely2021local}. Namely, in \Cref{app:crypto_equivalent_hardness}, we show that the class of functions in \Cref{thm:hard_to_learn} is also hard to learn conditional on cryptographic assumptions related to the hardness of learning a class of pseudorandom functions. 
\end{remark}

The above result is relatively tight with respect to the bound on the reach. Namely, whenever the reach $\Reach{\mathcal{M}_n} = \omega(n^\alpha)$ is growing faster than $\sqrt{n}$, one can show that the volume of such manifolds is at most $O(\poly(n))$ and fitting within the regime of efficiently sampleable manifolds studied in the previous subsection. 

\begin{proposition} 
    Given a sequence of manifolds $\{\mathcal{M}_n\}$ of intrinsic dimension $d$ (fixed and independent of $n$) and reach bounded by $\Reach{\mathcal{M}_n} = \omega(n^{0.5})$, the volume of the manifolds grows at most $\Vol_d(\mathcal{M}_n) = O(\poly(n))$.
\end{proposition}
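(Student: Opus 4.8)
The plan is to exploit that $\mathcal{M}_n \subseteq [0,1]^n$ has Euclidean diameter at most $\sqrt n$ while its reach grows strictly faster than $\sqrt n$: at the scale of its own diameter the manifold is almost flat, so it must lie inside a \emph{single} intrinsic ball of radius $O(\sqrt n)$, and the Bishop--Gromov inequality then pins the volume of that ball down to essentially the Euclidean value $\omega_d(O(\sqrt n)) = O(n^{d/2})$. Since $d=O(1)$ this is $O(\poly(n))$; as the statement is asymptotic it suffices to treat all sufficiently large $n$, the remaining finitely many $\mathcal{M}_n$ being compact and hence of finite volume. Concretely, first I would bound the intrinsic diameter: for $\vp,\vq\in\mathcal{M}_n$ one has $\|\vp-\vq\|\le\sqrt n$, and once $n$ is large enough that $\sqrt n\le\Reach{\mathcal{M}_n}/2$ the standard metric-distortion estimate for sets of positive reach gives
\[
d_{\mathcal{M}_n}(\vp,\vq)\;\le\;\Reach{\mathcal{M}_n}\Bigl(1-\sqrt{1-2\|\vp-\vq\|/\Reach{\mathcal{M}_n}}\Bigr)\;\le\;2\|\vp-\vq\|,
\]
where $d_{\mathcal{M}_n}$ is the intrinsic metric and the last inequality uses $\sqrt{1-x}\ge 1-x$ on $[0,1]$ (see \cite{aamari2022adversarial} and references therein). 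Hence the intrinsic diameter is at most $2\sqrt n$, and fixing any $\vp_0\in\mathcal{M}_n$ we get $\mathcal{M}_n=B_{\mathcal{M}_n}(\vp_0,2\sqrt n)$, so it is enough to bound $\operatorname{Vol}_{\mathcal{M}_n}(\vp_0,2\sqrt n)$.

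Next I would convert the reach bound into a Ricci lower bound. Every unit-speed geodesic $\gamma$ of $\mathcal{M}_n$ satisfies $\|\gamma''\|\le\Reach{\mathcal{M}_n}^{-1}$ (recorded in the excerpt); since $\gamma''=\mathrm{II}(\gamma',\gamma')$ and, polarizing within each normal direction, $\|\mathrm{II}(u,v)\|\le\Reach{\mathcal{M}_n}^{-1}$ for all unit tangent vectors $u,v$, the Gauss equation $K(u,v)=\langle\mathrm{II}(u,u),\mathrm{II}(v,v)\rangle-\|\mathrm{II}(u,v)\|^2$ yields $|K|\le 2\Reach{\mathcal{M}_n}^{-2}$ and therefore $\Ric{\mathcal{M}_n}\ge -(d-1)\kappa_n^2$ with $\kappa_n:=\sqrt2\,\Reach{\mathcal{M}_n}^{-1}$.

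Then, since the $\mathcal{M}_n$ are compact (hence complete), I would apply the Bishop--Gromov volume comparison (e.g.\ \cite{petersen2006riemannian}):
\[
\operatorname{Vol}_{\mathcal{M}_n}(\vp_0,2\sqrt n)\;\le\;\sigma_{d-1}(1)\int_0^{2\sqrt n}\Bigl(\tfrac{\sinh(\kappa_n s)}{\kappa_n}\Bigr)^{d-1}\,ds.
\]
Because $\Reach{\mathcal{M}_n}=\omega(\sqrt n)$ forces $\kappa_n\cdot 2\sqrt n\to 0$, for large $n$ the argument $\kappa_n s$ stays in $[0,1]$ over the whole range, where $\sinh(\kappa_n s)\le 2\kappa_n s$; hence the right-hand side is at most $\sigma_{d-1}(1)\int_0^{2\sqrt n}(2s)^{d-1}\,ds=O(n^{d/2})$, and with $d=O(1)$ this is $O(\poly(n))$, which completes the argument.

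I do not expect a genuine obstacle here. The only non-mechanical inputs are the two classical consequences of a reach lower bound used above — control of the intrinsic-versus-Euclidean distance distortion, and control of the second fundamental form (hence, via Gauss, of sectional and Ricci curvature) — both standard in the manifold-reconstruction literature, with the curvature bound already stated in the excerpt; the only care needed is tracking constants so that the diameter scale $\sqrt n$ sits comfortably below both $\Reach{\mathcal{M}_n}$ and the curvature radius $\sim\Reach{\mathcal{M}_n}$, which is precisely where the hypothesis $\Reach{\mathcal{M}_n}=\omega(\sqrt n)$ enters. It is worth noting that a cruder estimate via the volume of the $\Theta(\Reach{\mathcal{M}_n})$-tube around $\mathcal{M}_n$ is \emph{not} sufficient at this threshold — bounding that tube by a Euclidean ball of radius $O(\sqrt n+\Reach{\mathcal{M}_n})$ would require $\Reach{\mathcal{M}_n}=\tilde\Omega(n^{3/2})$ — so routing the bound through a single geodesic ball is essential for matching the $\omega(\sqrt n)$ assumption.
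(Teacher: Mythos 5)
Your proof is correct, but it takes a genuinely different route from the paper. The paper's proof is entirely extrinsic: it restates Lemma B.1 of Aamari et al.\ (as \Cref{lem:aamari_bound_reach_volume}), which bounds $\Vol_d(\mathcal{M}\cap B(\vx,h))$ by $O(\omega_d h^d)$ whenever $h\le\Reach{\mathcal{M}}/8$, and applies it exactly once with $\vx=\vone/2$ and $h=\sqrt n/2$, using $[0,1]^n\subset B(\vone/2,\sqrt n/2)$ and $\Reach{\mathcal{M}_n}=\omega(\sqrt n)$ to meet the hypothesis $h\le R/8$ for large $n$. Your proof instead goes through intrinsic Riemannian comparison geometry: a metric-distortion estimate to cap the intrinsic diameter at $2\sqrt n$, a second-fundamental-form/Gauss-equation computation to convert the reach bound into $\Ric{\mathcal{M}_n}\ge -(d-1)\Theta(\Reach{\mathcal{M}_n}^{-2})$, and then Bishop--Gromov on a single geodesic ball. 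Both yield $O(n^{d/2})$. The paper's route is shorter (one citation, two lines) and does not require connectedness of $\mathcal{M}_n$, since the Aamari bound is a purely local statement about Hausdorff measure in a Euclidean ball; your route does require connectedness (otherwise the intrinsic diameter step fails), which the paper does globally assume, so this is not a gap, merely a less general argument. Your route, on the other hand, has the pedagogical advantage of making the ``nearly Euclidean at scale $\sqrt n$'' intuition quantitative via curvature comparison. Your closing remark about why a naive tube-volume argument fails at this threshold and needs $\Reach{\mathcal{M}_n}=\tilde\Omega(n^{3/2})$ correctly identifies why the estimate must be localized --- whether to a single geodesic ball (your route) or via the local extrinsic bound of Aamari et al.\ (the paper's route).
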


We refer the reader to \Cref{app:volume_bound_reach} for further details and formal proofs of this tightness.

\section{Experiments}
\begin{figure*}[t!]
    \centering
    \vspace*{0mm}
    \begin{subfigure}[b]{0.48\textwidth}
        \centering
        \includegraphics[width=0.8\textwidth]{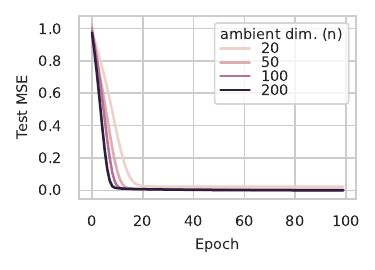}
        \vspace*{-3mm}
        \caption{Learnable Setting }
        \label{fig:isoperimetric}
    \end{subfigure}
    \hfill
    \begin{subfigure}[b]{0.48\textwidth}
        \centering
        \includegraphics[width=0.8\textwidth]{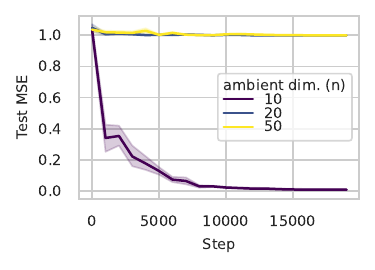}
        \vspace*{-3mm}
        \caption{Hard to Learn Setting}
        \label{fig:parity}
    \end{subfigure}
    \vspace*{0mm}
    \caption{(a) Learning is successful when inputs are drawn from a $d=10$ intrinsic dimensional hypersphere living in ambient space of dimension $n$ -- an instance of the bounded positive curvature model in \Cref{ex:isoperimetric_setting}. Target functions are single hidden layer networks taken from the class of hard to learn functions in the Gaussian i.i.d. input model \cite{diakonikolas2020algorithms}, which are no longer hard to learn in the input distribution considered here.  
    (b) When the ambient dimension is large, learning algorithm struggles to learn a single hidden layer neural network drawn from the class of functions in the setting of \Cref{thm:hard_to_learn} where the input data manifold has intrinsic dimension $d=1$ and reach $R=0.5$. The network trained to learn this target function is over-parameterized with respect to the target.
    Data is aggregated over five random realizations. }
    \vspace{-3mm}
    \label{fig:experiment_performance}    
\end{figure*}

\subsection{Empirical verification of main findings}
To verify that neural networks are learnable in the sampleable regime (\Cref{sec:manifold_reconstruction}) and hard to learn in the bounded curvature regime (\Cref{sec:bounded_curvature}), we train neural networks over input data manifolds taken from these regimes to confirm the formal theoretical results. We consider target networks, which are single hidden layer neural networks of $O(n)$ width and train overparameterized neural networks of larger width. For the sampleable regime, we draw inputs from a $d$-dimensional hypersphere supported over $d$ orthogonal dimensions in the $n$-dimensional ambient space. This is an instance of a complete manifold of bounded curvature as in \Cref{ex:isoperimetric_setting}. For the hard to learn regime, we draw inputs from the 1-dimensional manifold constructed in \Cref{app:space_filling_curve} with reach $R=0.5$. Target functions in this hard regime correspond to the parity functions described in our proofs in \Cref{app:proofs_for_bounded_curvature}.

The results in \Cref{fig:experiment_performance} empirically confirm our main findings. When inputs are drawn from the $d$-dimensional hypersphere (\Cref{fig:isoperimetric}), the trained neural network achieves low test error with a fixed training set of size $1000$ for all $n$. Here, target functions are drawn from those in \cite{diakonikolas2020algorithms} who provided a class of hard to learn functions in the i.i.d. Gaussian input model. Once the input model is changed to the $d$-dimensional hypersphere, this class of functions is no longer hard to learn (see \Cref{app:experiments} for similar results over random targets). 

In contrast, when inputs are drawn from the hard to learn manifold (\Cref{fig:parity}), learning is only possible when the ambient dimension is small. Here, target functions are randomly chosen parity functions over the inputs (see \Cref{app:experiments}). At each step of training, we provide the algorithm with a fresh batch of i.i.d. samples. In \Cref{fig:parity_3layers}, we replicate these results when training networks overparameterized in depth (i.e. three hidden layers as opposed to one) as well. We refer the reader to \Cref{app:experiments} for further details.

\subsection{Empirical study of geometry of data manifolds} \label{sec:experimental_geometry}
We empirically investigate the intrinsic dimension of real-world data manifolds as a first step towards testing the manifold regularity assumptions of our framework on real data. Largely, our results corroborate findings in other works highlighting the heterogeneous nature of real-world data manifolds \cite{pope2021intrinsic,stanczuk2022your,brown2022verifying}.

\paragraph{Experimental Setup.}
We estimate intrinsic dimension using samples generated by a diffusion model, closely following the approach of~\cite{stanczuk2022your}.
The use of a diffusion model allows us to generate arbitrarily dense samples in a neighborhood of a given point from the data manifold. Given these samples, we perform PCA on the collection of score vectors $\{\vs_i\}$ of the diffusion model at each of these samples, which point towards the direction of de-noising and hence towards the manifold itself. Estimating the rank of such a collection of score vectors recovers an estimate of the normal and intrinsic dimensions. We defer a detailed description of our approach, its implementation, and hyperparameter choices to \Cref{app:in-dim-exp}.

\begin{table*}[ht]
\centering
\begin{tabular}{lcccccccccc}
\toprule
\textbf{Data set} & \textbf{\#0} & \textbf{\#1} & \textbf{\#2} & \textbf{\#3} & \textbf{\#4} & \textbf{\#5} & \textbf{\#6} & \textbf{\#7} & \textbf{\#8} & \textbf{\#9} \\
\midrule 
MNIST (28 x 28) & 102 & 66 & 120 & 109 & 73 & 101 & 109 & 72 & 114 & 104 \\
KMNIST (28 x 28) & 180 & 199 & 134 & 169 & 135 & 128 & 149 & 191 & 205 & 199\\
FMNIST (28 x 28) & 429 & 177 & 596 & 275 & 225 & 233 & 312 & 125 & 201 & 418 \\
\bottomrule
\end{tabular}
\caption{Estimated intrinsic dimension shown for each of the ten classes in MNIST, KMNIST, FMNIST.}
\label{tab:dim-class}
\end{table*}
\begin{figure*}[ht]
    \centering
    \includegraphics[width=0.33\linewidth]{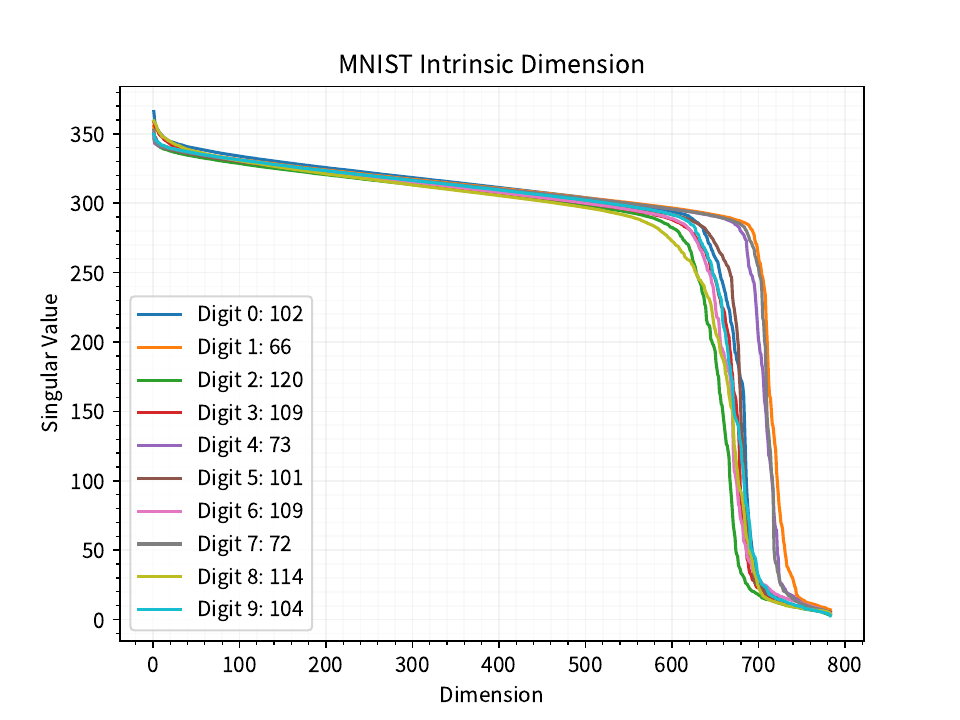}~
    \includegraphics[width=0.33\linewidth]{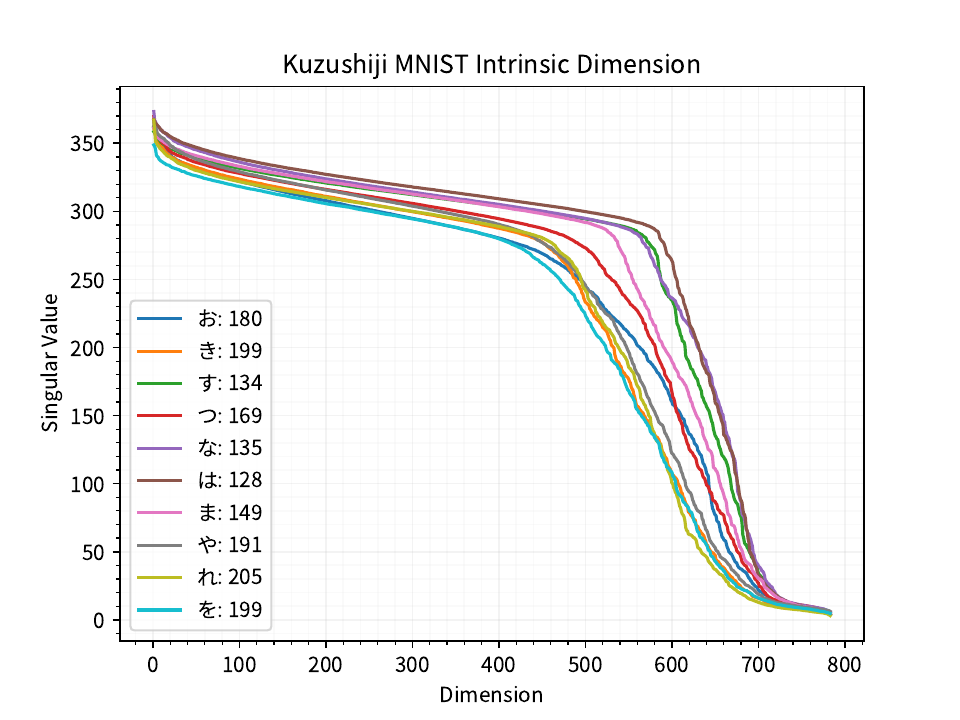}~
    \includegraphics[width=0.33\linewidth]{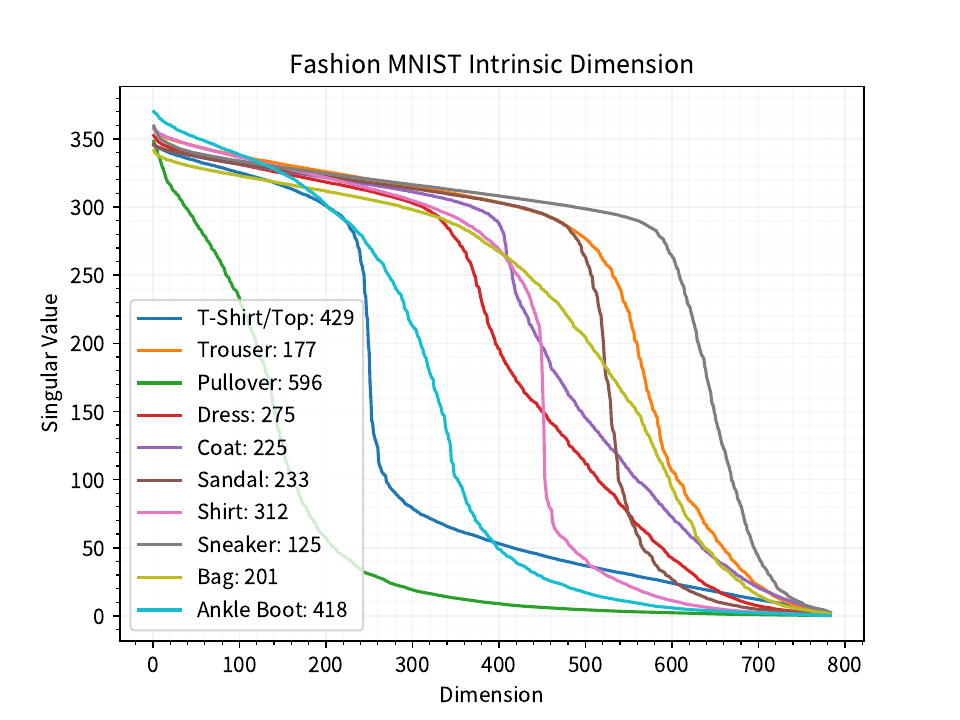}
    \caption{Sample spectrum of the singular values of a collection of score vectors given by a trained diffusion model pointing towards the direction of de-noising an image.  } 
    \label{fig:fmnist-score}
\end{figure*}

\paragraph{Results.} Estimated intrinsic dimensions for three image data manifolds are given in \Cref{tab:dim}. Score spectra for the three manifolds are shown in \Cref{fig:fmnist-score} confirming that the score vectors do indeed cluster in a linear subspace spanning the normal dimensions. 

Furthermore, our approach gives a similar estimate for the intrinsic dimension of MNIST as reported in \cite{stanczuk2022your}. To the best of our knowledge, FMNIST and KMNIST were not investigated in prior work. Comparing estimated intrinsic dimension across classes indicates significant heterogeneity of the real data manifolds, which would place them in the \emph{potentially learnable} regime in our proposed framework. This may seem surprising as images drawn from different classes can have different properties and symmetries. However, it corroborates our initial hypothesis that real-world data may not conform to global bounds on intrinsic dimension. Furthermore, we find that upon re-sizing the images to a smaller scale, the ambient dimension decreases faster in comparison to the intrinsic dimension for some datasets (see \Cref{tab:dim} in Appendix) lending some limited evidence to the underlying assumption in the manifold hypothesis that bounds on intrinsic dimension do not grow in tandem with ambient dimension. Additional experimental results and further discussion can be found in \Cref{app:real-mf}.

\section{Discussion}

In this paper we have investigated whether regularity assumptions on the data geometry can render feedforward neural networks efficiently learnable. We show that bounding curvature in low-dimensional data manifolds, a common assumption in high-dimensional learning, does not suffice to alleviate the fundamental hardness of learning such architectures. However, we establish learning guarantees under geometric assumptions common in the manifold reconstruction literature that allow manifolds to be efficiently covered with $O(\poly(n))$ samples. Our results contribute to a recent body of literature that seeks to establish learnability guarantees and provable hardness results for neural network architectures~\cite{blum1988training,chen2022hardness,diakonikolas2020algorithms,kiani2024hardness,goel2020superpolynomial,vempala2019gradient}. 

While our results establish guarantees for a wide range of data manifolds, including manifolds that can be provably reconstructed via manifold learning, there are geometric regimes in which the question of learnability remains open. We hypothesize that the heterogeneity of real data manifolds might place them in this more uncertain regime and find some evidence of this heterogeneity in our experiments. A further theoretical and empirical investigation of this class of data manifolds is an important direction for future work. 

The experimental results on estimating the intrinsic dimension of data manifolds are merely a starting point for understanding realistic assumption on data geometry in machine learning architectures. In future work we hope to look at a wider range of synthetic and real data sets, as well as a wider range of geometric characteristics and estimation techniques. 

Broadly, our results indicate that assumptions beyond global smoothness of data manifolds are needed to guarantee learnability of neural networks in real-world settings. To circumvent hardness results, the particular form of the networks and their weights can be changed to for example incorporate symmetries via equivariant architectures or place restrictions on weights such as bounds on condition number or rank \cite{chen2023faster,bakshi2019learning}. 
Furthermore, some datasets live in a discrete input space (e.g., language data) which does not conform directly to Riemannian analysis. In these settings, adaptations of the manifold hypothesis to analysis via graphs and other combinatorial objects presents an interesting direction for future work \cite{rubin2020manifold,bronstein2017geometric}.

\section*{Acknowledgement}
The authors thank Andrew Chang and Adityanarayanan Radhakrishnan for insightful discussions. BTK and MW were supported by the Harvard Data Science Initiative Competitive Research Fund and NSF award 2112085. JW acknowledges support from the Harvard College Research Program (HCRP).

\bibliographystyle{abbrv}
\bibliography{main}

\clearpage
\onecolumn

\appendix

\section{Extended Related Works}
\label{app:related_works}

\paragraph{Hardness of learning neural networks.}
The first results showing the hardness of learning neural networks are those of \cite{judd1987learning,blum1988training} proving that proper learning of neural networks is an $\mathsf{NP}$ complete task. As mentioned earlier, a number of works prove hardness results for learning feedforward ReLU networks under the Gaussian i.i.d. data model \cite{goel2020superpolynomial,diakonikolas2020algorithms,song2017complexity,chen2022hardness,daniely2021local}. Many of these works prove hardness results in the statistical query model \cite{reyzin2020statistical,kearns1998efficient}.  Similar hardness results, also in the SQ model, have also been shown for classes of symmetric or equivariant neural networks \cite{kiani2024hardness}. These results show that additional assumptions on the network class are needed to prove learnability results. Such assumptions which lead to provable learnability include those on the condition number or positivity of weights \cite{bakshi2019learning,diakonikolas2020algorithms}, polynomial approximation guarantees  \cite{vempala2019gradient}, or bounds on width \cite{chen2023faster,chen2023learning}. When the target is not necessarily a feedforward network, \cite{abbe2023sgd} show that single hidden layer ReLU networks can efficiently learn functions with low so-called leap complexity categorized by the growth in the degrees of polynomials over a linear subspace for i.i.d. Gaussian or uniform Boolean inputs.

\paragraph{Intrinsic dimension estimation.}
There are many algorithms, dating back decades, which estimate the intrinsic dimension of a given set of data points including approaches based on PCA \cite{fukunaga1971algorithm,minka2000automatic}, nearest neighbor methods \cite{levina2004maximum}, and others \cite{camastra2002estimating,kegl2002intrinsic}.
In the context of modern large image datasets such as Imagenet, \cite{pope2021intrinsic} perform estimates of the intrinsic dimension which are noticeably smaller than the input dimension. 
Various works study scaling laws arguing that the intrinsic dimension may be a useful indicator of the complexity of a dataset correlating to the rate at which neural network performance scale with increased data or training \cite{sharma2020neural,bahri2021explaining}. 
These works estimate the intrinsic dimension via a maximum likelihood estimator calculating the intrinsic dimension in relation to the distances of a point to its nearest neighbors \cite{levina2004maximum,facco2017estimating}. In later work which estimates the intrinsic dimension using diffusion models, \cite{stanczuk2022your} empirically show that this MLE estimator can underestimate the intrinsic dimension for various toy datasets where the intrinsic dimension is known. For this reason, our experiments follow the approach in~\cite{stanczuk2022your}. 

\paragraph{Learning algorithms over data manifolds.}
Among the earliest algorithms for learning over data manifolds are those for dimensionality reduction and representation learning \cite{fukunaga1971algorithm,belkin2003laplacian,kambhatla1993fast,dasgupta2008random,roweis2000nonlinear}. As mentioned earlier, a number of works provide algorithms and statistical analysis for manifold reconstruction or learning where the goal is to construct a manifold that closely fits a target \cite{cayton2008algorithms,aamari2022adversarial,fefferman2018fitting}. Perhaps closest to our work are studies of the hardness of learning functions defined over input manifolds. \cite{narayanan2009sample} categorize the sample complexity for binary classification over smooth cuts on a data manifold where smoothness is defined by the condition number of the classification boundary of the manifold (a quantity closely related to the reach). When no restrictions are placed on the probability distribution of data on the manifold, they show that the VC dimension is unbounded when this condition number exceeds the reach of the manifold. For nicer distributions where there is an upper bound on the density of the distribution on the manifold, a covering number argument shows that the sample complexity is dependent on the intrinsic dimension, reach, and condition number but independent of the ambient dimension. \cite{liu2021besov} show that for a certain class of convolutional networks trained on data lying in a low-dimensional manifold, the sample complexity depends weakly on the ambient dimension. \cite{tahmasebi2023exact} study the sample complexity of learning a target dataset over inputs taken from manifolds in kernel settings where the target function is invariant over some group operation acting on the input data manifold. The goal of their work was to quantify the gains in sample complexity from enforcing invariance and their sample complexity bounds depend on the volume of the manifold and dimension of the quotient space.

In recent years, various works have empirically tested and analyzed various aspects and implications of the manifold hypothesis. \cite{brown2022verifying} argue that manifolds of data may have many connected components of potentially varying intrinsic dimension and provide some empirical estimates of intrinsic dimension by image class supporting this argument. Our empirical analysis of the intrinsic dimension also supports this point.
\cite{buchanan2020deep} consider the task of learning a binary classification task where the two classes are two separate disconnected one dimensional manifolds (curves) on the sphere. 
Under regularity conditions on the extrinsic curvature and Riemannian distance properties of the curves, they prove that sufficiently wide and deep feedforward networks can learn to separate the manifolds. Their proofs are based on a neural tangent kernel approach.
\cite{maloney2022solvable} theoretically and empirically study kernel models trained under settings where the latent dimension of the data can be varied.
They show that the rate of decay in the spectrum of the kernel is more strongly related to the scaling of the loss as opposed to the intrinsic dimension of the data with some discussion of potential connections between the two measures.
\cite{whiteley2023statistical} propose a statistical model of real-world data where low dimensional features and manifold structures can naturally emerge. 
\cite{narayanan2009sample} categorize the sample complexity for binary classification over smooth cuts on a data manifold where smoothness is defined by the condition number of the classification boundary of the manifold (a quantity closely related to reach). When no restrictions are placed on the probability distribution of data on the manifold, they show that the VC dimension is unbounded when this condition number exceeds the reach of the manifold. For nicer distributions where there is an upper bound on the density of the distribution on the manifold, a covering number argument shows that the sample complexity is dependent on the intrinsic dimension, reach, and condition number but independent of the ambient dimension. 

\paragraph{Generative modeling.}
From the perspective of generative modeling, various works have recently considered the task of sampling from low-dimensional manifolds embedded in high dimensional space. \cite{pidstrigach2022score} consider distributions supported over a data manifold of low intrinsic dimension. They provide sufficient conditions for score-based generative models to identify the support of the learned distribution matching that of the target manifold. \cite{de2022convergence} prove convergence results in Wasserstein distance between score-based models and target distributions which are supported on a manifold under the assumption that the score estimator is accurate in the $L^\infty$ norm. These results were subsequently improved in \cite{chen2023sampling} which only required a score estimator is accurate in the $L^2$ norm, among other improvements. For data supported on a low-dimensional linear subspace, \cite{chen2023score} provide an end-to-end sampling guarantee avoiding the curse of dimensionality and showing that the score function can be efficiently estimated. The resulting sampling algorithm in their work thus approaches the true distribution in total variation distance and recovers the linear subspace appropriately. 

\paragraph{Geometric Machine Learning.}
We briefly mention a related body of work, which studies machine learning algorithms, which directly leverage geometric structure in data. Such methods have shown empirical promise in a variety of domains~\cite{bronstein2017geometric}. Examples of such architectures are translation- and rotation-equivariant neural networks~\cite{cohenc16}, graph neural networks~\cite{kipf2016semi} and DeepSets for permutation-invariant inputs~\cite{zaheer2017deep}, as well as hyperbolic machine learning algorithms~\cite{ganea2018hyperbolic,weber2020robust}, which assume that data lies on a manifold of constant negative curvature.

\section{Deferred Proofs}
\subsection{Auxiliary statements}\label{apx:aux}

For completeness, we recall several classical notions from metric geometry.
\begin{definition}[Packing and covering numbers] \label{def:cover_and_packing}
    Given a set $S \subseteq \mathbb{R}^n$, the covering number $\operatorname{cv}_S(\epsilon)$ is the minimum number of balls of size $\epsilon$ needed to cover every point in $S$:
    \begin{equation}
        \operatorname{cv}_S(\epsilon) \coloneqq \min \left\{ k>0 \;|\; \exists \; \vx_1, \dots, \vx_k \in S \text{ such that } \forall \vx \in S, \exists \; i \in [k] \text{ such that } \|\vx_i - \vx\| \leq \epsilon  \right\}.
    \end{equation}
    The packing number $\operatorname{pk}_S(\epsilon)$ is the maximum number of disjoint $\epsilon$-balls that can be contained in $S$:
    \begin{equation}
        \operatorname{pk}_S(\epsilon) \coloneqq \max \left\{ k>0 \;|\; \exists \; \vx_1, \dots, \vx_k \in S \text{ such that } \forall i \neq j: \|\vx_i - \vx_j\| > 2\epsilon  \right\}.
    \end{equation}
\end{definition}

\begin{lemma}[Packing and covering duality] \label{lem:cover_pack_duality}
    For any compact set $S \subset \mathbb{R}^n$ and $\epsilon > 0$:
    \begin{equation}
        \operatorname{pk}_S(2\epsilon) \leq \operatorname{cv}_S(2\epsilon) \leq \operatorname{pk}_S(\epsilon).
    \end{equation}
\end{lemma}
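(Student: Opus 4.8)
The plan is to establish the two inequalities separately, each by pairing a packing against a covering and applying the triangle inequality. Recall the paper's conventions: $\operatorname{pk}_S(r)$ is the largest number of points of $S$ that are pairwise at distance strictly greater than $2r$, and $\operatorname{cv}_S(r)$ is the least number of closed $r$-balls centered in $S$ that cover $S$. Compactness of $S$ guarantees both quantities are finite and that a packing of maximum size is attained by an actual finite point set.

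\textbf{Left inequality} $\operatorname{pk}_S(2\epsilon) \leq \operatorname{cv}_S(2\epsilon)$. Fix points $\vx_1,\dots,\vx_p \in S$ realizing $p = \operatorname{pk}_S(2\epsilon)$, so $\|\vx_i - \vx_j\| > 4\epsilon$ for all $i \neq j$, together with centers $\vy_1,\dots,\vy_c \in S$ whose closed $2\epsilon$-balls cover $S$, where $c = \operatorname{cv}_S(2\epsilon)$. For each $i$ pick an index $j(i)$ with $\|\vx_i - \vy_{j(i)}\| \leq 2\epsilon$. If $j(i) = j(i')$ for some $i \neq i'$, then $\|\vx_i - \vx_{i'}\| \leq \|\vx_i - \vy_{j(i)}\| + \|\vy_{j(i)} - \vx_{i'}\| \leq 4\epsilon$, contradicting the separation of the packing. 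Hence $i \mapsto j(i)$ is injective, so $p \leq c$.

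\textbf{Right inequality} $\operatorname{cv}_S(2\epsilon) \leq \operatorname{pk}_S(\epsilon)$. Take a packing $\vx_1,\dots,\vx_p \in S$ of maximum size $p = \operatorname{pk}_S(\epsilon)$, so $\|\vx_i - \vx_j\| > 2\epsilon$ for all $i \neq j$. For every $\vx \in S$ there must exist an index $i$ with $\|\vx - \vx_i\| \leq 2\epsilon$: otherwise $\vx_1,\dots,\vx_p,\vx$ would be a packing of size $p+1$, contradicting maximality. Thus the closed $2\epsilon$-balls centered at $\vx_1,\dots,\vx_p$ cover $S$, and therefore $\operatorname{cv}_S(2\epsilon) \leq p = \operatorname{pk}_S(\epsilon)$.

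The argument is elementary, so I do not anticipate a genuine obstacle; the two points that need care are (i) that the strict inequality in the definition of the packing number is exactly what prevents two packing points from lying in a single closed $2\epsilon$-ball (which has diameter $4\epsilon$), and (ii) that the second inequality requires a \emph{non-extendable} packing, which is legitimate precisely because compactness makes the packing number finite and attained.
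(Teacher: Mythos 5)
Your proof is correct and follows essentially the same pigeonhole/maximality argument as the paper: the left inequality by mapping packing points into covering balls and using separation, the right by observing that a maximal $\epsilon$-packing is already a $2\epsilon$-cover. In fact your write-up is slightly more careful than the paper's — you correctly bound the diameter of a $2\epsilon$-ball by $4\epsilon$ (the paper's text says $2\epsilon$, a minor slip) and you explicitly invoke the strict inequality in the packing definition, which is exactly what makes the second step go through.
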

\begin{proof}
    For the first inequality, assume by contradiction that $\operatorname{cv}_S(2\epsilon)<\operatorname{pk}_S(2\epsilon)$. Then, there must exist two points $\vx_i, \vx_j$ in the maximal packing, which are within the same $2\epsilon$-ball in the cover and thus have distance $\| \vx_i - \vx_j \| \leq 2\epsilon$ contradicting the assumption.  
    
    For the second inequality, note that any maximal packing with balls of radius $\epsilon$ is a $2\epsilon$-covering, because otherwise, there would exist a point $\vx \in S$ which is at least $2\epsilon$ distance away from all the packing points contradicting the definition of a packing.
\end{proof}

Before stating the next result, we give a brief definition of \emph{Ricci curvature}, which locally characterizes the curvature of $\mathcal{M}$ in a neighborhood of a point $x \in \mathcal{M}$. 
\begin{definition}[Ricci Curvature]
    Let $v \in T_x \mathcal{M}$ denote a unit vector and $\lbrace u_1, \dots, u_{m-1}, v \rbrace$ an orthonormal basis of $T_x\mathcal{M}$; $g_x$ denotes the inner product on $T_x\mathcal{M}$. Then the \textit{Ricci curvature} at $x$ along the direction $v$ is defined as
\begin{equation}\label{eq:ric}
    {\rm Ric}_x(v) := \frac{1}{m-1} \sum_{i=1}^{m-1} g_x({R(v,u_i)v},{u_i}) \; ,
\end{equation}
where $R(u,v)w := \nabla_u \nabla_v w - \nabla_v \nabla_u w - \nabla_{[u,v]}w$ is the Riemann curvature tensor, with $[u,v]$ denoting the Lie Bracket between $u$ and $v$.
\end{definition}

\begin{theorem}[Corollary of Bishop-Gromov Theorem \cite{bishop1964relation} (see \cite{petersen2006riemannian} Lemma 7.1.3)] \label{thm:bishop_gromov}
    Let $\Vol_{\mathcal{M}}(\vp, r)$ denote the volume of the ball of radius $r$ around the point $\vp$ with respect to the Riemannian distance metric. If a manifold $\mathcal{M}$ of intrinsic dimension $d$ has bounded Ricci curvature $\Ric{\mathcal{M}} \geq (d-1)K$, then for all $r>0$ and $\vp \in \mathcal{M}$, $\Vol_{\mathcal{M}}(\vp, r) \leq \Vol_{S_d^K}(r)$ where $\Vol_{S_d^K}(r)$ denotes the volume of a ball of radius $r$ around an arbitrary point $\vp \in S_d^K $ in the space $S_d^K$ with constant sectional curvature $K$ ($d$-sphere if $K>0$ or $d$-dimensional Hyperbolic space if $K<0$).
\end{theorem}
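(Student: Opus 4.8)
The plan is to run the classical argument: express the volume of the ball $B(\vp,r)$ in geodesic polar coordinates about $\vp$, reduce the statement to a pointwise upper bound on the Jacobian of $\exp_{\vp}$, and obtain that bound from a Bochner-type differential inequality driven by the Ricci lower bound. First, fix $\vp\in\mathcal{M}$; for each unit vector $\theta\in T_{\vp}\mathcal{M}$ (so $\theta$ ranges over the unit sphere $S^{d-1}$ of the tangent space) write the volume density in polar coordinates as $dV=\mathcal{A}(r,\theta)\,dr\,d\theta$ for $0<r<c(\theta)$, where $c(\theta)$ is the distance from $\vp$ to its cut locus in direction $\theta$, and set $\mathcal{A}(r,\theta):=0$ for $r\ge c(\theta)$. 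Since (by Hopf--Rinow) every point of $B(\vp,r)$ is joined to $\vp$ by a minimizing geodesic of length $<r$ and the cut locus is a null set,
\[
\Vol_{\mathcal{M}}(\vp,r)=\int_{S^{d-1}}\!\int_0^{r}\mathcal{A}(t,\theta)\,dt\,d\theta .
\]
Let $\mathrm{sn}_K$ solve $\mathrm{sn}_K''+K\,\mathrm{sn}_K=0$ with $\mathrm{sn}_K(0)=0$ and $\mathrm{sn}_K'(0)=1$ (so $\mathrm{sn}_K(r)$ equals $\sin(\sqrt{K}r)/\sqrt{K}$, $r$, or $\sinh(\sqrt{-K}r)/\sqrt{-K}$ according to $\operatorname{sign}(K)$). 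The same computation in $S_d^K$ gives $\Vol_{S_d^K}(r)=\sigma_{d-1}(1)\int_0^r\mathrm{sn}_K(t)^{d-1}\,dt$, with $\mathrm{sn}_K(t)^{d-1}$ read as $0$ past the first zero of $\mathrm{sn}_K$ (the case $K>0$). It therefore suffices to prove the pointwise estimate $\mathcal{A}(t,\theta)\le\mathrm{sn}_K(t)^{d-1}$ for all $\theta$ and all $t>0$.

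For this, fix a unit-speed minimizing geodesic $\gamma_{\theta}$ and put $m(r):=\partial_r\log\mathcal{A}(r,\theta)$, which equals $\Delta r$ for $r=d(\vp,\cdot)$, i.e.\ the mean curvature of the geodesic sphere of radius $r$ at $\gamma_{\theta}(r)$. The Bochner formula applied to $r$, using $|\nabla r|\equiv 1$, gives $0=|\operatorname{Hess} r|^2+m'+\operatorname{Ric}(\partial_r,\partial_r)$; since $\operatorname{Hess} r$ annihilates $\partial_r$ and hence acts on a $(d-1)$-dimensional subspace, Cauchy--Schwarz yields $|\operatorname{Hess} r|^2\ge m^2/(d-1)$, so
\[
m'+\frac{m^2}{d-1}+\operatorname{Ric}(\partial_r,\partial_r)\le 0 .
\]
Putting $h(r):=\mathcal{A}(r,\theta)^{1/(d-1)}$ one finds $h'/h=m/(d-1)$ and $h''/h=\frac{m'}{d-1}+\frac{m^2}{(d-1)^2}$, so dividing the displayed inequality by $d-1$ and using $\operatorname{Ric}(\partial_r,\partial_r)\ge(d-1)K$ yields $h''+K\,h\le 0$. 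Since $\mathcal{A}(r,\theta)/r^{d-1}\to 1$ as $r\to 0^+$ we have $h(0)=0=\mathrm{sn}_K(0)$ and $h'(0)=1=\mathrm{sn}_K'(0)$, while $\mathrm{sn}_K''+K\,\mathrm{sn}_K=0$; a one-dimensional Sturm comparison then forces $h(r)\le\mathrm{sn}_K(r)$ for $r\in(0,c(\theta))$ and below the first zero of $\mathrm{sn}_K$ --- concretely, the Wronskian $W:=h'\mathrm{sn}_K-h\,\mathrm{sn}_K'$ obeys $W'=\mathrm{sn}_K(h''+Kh)\le 0$ with $W(0^+)=0$, hence $W\le 0$, hence $(h/\mathrm{sn}_K)'\le 0$ with $h/\mathrm{sn}_K\to 1$. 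Thus $\mathcal{A}(r,\theta)\le\mathrm{sn}_K(r)^{d-1}$, and substituting into the two polar-coordinate formulas and integrating in $t$ and $\theta$ gives $\Vol_{\mathcal{M}}(\vp,t)\le\Vol_{S_d^K}(t)$; the same pointwise estimate also makes $r\mapsto\Vol_{\mathcal{M}}(\vp,r)/\Vol_{S_d^K}(r)$ non-increasing, though only the stated bound is needed.

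I expect the main obstacle to lie not in the computation but in discharging its technical accompaniments: (i) justifying the Bochner/Riccati identity and the smoothness of $r$ on $\mathcal{M}\setminus(\{\vp\}\cup\operatorname{Cut}(\vp))$, together with $\operatorname{Cut}(\vp)$ being a null set so that the polar integral truly computes $\Vol_{\mathcal{M}}(\vp,\cdot)$; (ii) the singular initial data at $r=0$, which is exactly the asymptotic matching $h(r),\mathrm{sn}_K(r)\sim r$ used above; and (iii), when $K>0$, the degeneracy of the model at $r=\pi/\sqrt{K}$: there $\mathrm{sn}_K^{d-1}$ is capped at $0$, while Bonnet--Myers \cite{myers1941riemannian} gives $\operatorname{diam}(\mathcal{M})\le\pi/\sqrt{K}$, so $B(\vp,r)=\mathcal{M}$ for all larger $r$ and the inequality there follows by continuity from $r=\pi/\sqrt{K}$. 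Everything else --- the polar change of variables, the trace and Cauchy--Schwarz step, and the final double integral --- is routine; the argument is in fact precisely \cite[Lemma~7.1.3]{petersen2006riemannian}, which may simply be cited.
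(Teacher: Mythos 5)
Your proof is correct and is precisely the standard argument from the source the paper cites (Petersen, Lemma 7.1.3); the paper itself offers no proof, stating the result as a cited classical corollary of Bishop--Gromov, so there is nothing to compare beyond noting that you have faithfully reproduced the reference's polar-coordinates/Riccati/Sturm-comparison route, including the right handling of the cut locus, the $r\to 0^+$ asymptotics, and the Bonnet--Myers degeneracy when $K>0$.
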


\begin{lemma}[Coupon collector bound \cite{erdHos1961classical}] \label{lem:coupon_collector}
    Let $\xi_1, \dots, \xi_N$ be i.i.d. random variables taking values in $[n]$ with uniform probability: $\mathbb{P}[\xi_i = k] = \frac{1}{n}$ for all $k \in [n]$. Let $T_n$ be the minimum value $T$ such that for all $k \in [n]$, there exists at least one $i \in [T]$ where $\xi_i = k$ (i.e. every value in $[n]$ is covered). Then, $\mathbb{E}[T_n] = \Theta(n \log (n))$ and 
    \begin{equation}
        \lim_{n \to \infty}\mathbb{P}\left[ T_n < n \log(n) + cn \right] = \exp( -\exp(-c)). 
    \end{equation}
\end{lemma}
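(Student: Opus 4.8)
The plan is to prove the two assertions in \Cref{lem:coupon_collector} separately, relying on standard probabilistic tools. For the expectation, I would decompose $T_n = \sum_{j=1}^{n} G_j$, where $G_j$ is the number of additional draws needed to obtain the $j$-th \emph{distinct} value once $j-1$ distinct values have already appeared. Conditioned on the set of values seen so far, each new draw is fresh with probability $(n-j+1)/n$, independently of the past, so $G_j$ is geometric with that success probability and $\mathbb{E}[G_j] = n/(n-j+1)$. Linearity of expectation then gives $\mathbb{E}[T_n] = n\sum_{k=1}^{n} 1/k = n H_n$, and the elementary bounds $\ln n \le H_n \le 1 + \ln n$ yield $\mathbb{E}[T_n] = \Theta(n\log n)$.

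For the limiting distribution I would pass to the complementary event. Fix $c \in \mathbb{R}$, pick an integer $m = m_n$ with $m_n = n\log n + cn + O(1)$ (e.g.\ $m_n = \lfloor n\log n + cn \rfloor$), so that $\{T_n < n\log n + cn\} = \{T_n \le m_n\}$, and for $k \in [n]$ let $A_k$ be the event that value $k$ never appears among the first $m$ draws; then $\mathbb{P}[A_k] = (1-1/n)^m$ and $\{T_n > m\} = \bigcup_k A_k$. Writing $Z_m \coloneqq \sum_{k=1}^{n} \1[A_k]$ for the number of still-uncovered values, we have $\mathbb{P}[T_n \le m] = \mathbb{P}[Z_m = 0]$. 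The strategy is to show that $Z_m$ converges in distribution to $\mathrm{Poisson}(\lambda)$ with $\lambda = e^{-c}$, whence $\mathbb{P}[Z_m = 0] \to e^{-\lambda} = \exp(-\exp(-c))$. I would establish the Poisson limit by the method of factorial moments: for each fixed $r \ge 1$, a fixed $r$-subset of values is jointly uncovered iff every one of the $m$ draws avoids all $r$ of them, so
\[
\mathbb{E}\!\left[\binom{Z_m}{r}\right] = \binom{n}{r}\left(1 - \frac{r}{n}\right)^{m}.
\]
Using $\binom{n}{r} = \tfrac{n^r}{r!}(1+o(1))$ and $(1-r/n)^m = \exp(-rm/n + O(m/n^2)) = n^{-r} e^{-rc}(1+o(1))$, the right-hand side converges to $e^{-rc}/r! = \lambda^r/r!$, which are exactly the factorial moments of $\mathrm{Poisson}(\lambda)$. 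Since the Poisson law is determined by its moments, convergence of all factorial moments implies $Z_m \Rightarrow \mathrm{Poisson}(\lambda)$, and evaluating at $r$ arbitrary gives the claim.

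The computations involved — the asymptotics of $H_n$ and of $\binom{n}{r}(1-r/n)^m$ — are routine. The only step that deserves care is the passage from ``all factorial moments of $Z_m$ converge to those of $\mathrm{Poisson}(\lambda)$'' to ``$Z_m$ converges in distribution to $\mathrm{Poisson}(\lambda)$''; this is standard (the Poisson moment problem is determinate, or one may instead bound the total variation distance $d_{\mathrm{TV}}(Z_m, \mathrm{Poisson}(\lambda))$ directly via the Chen--Stein method and the weak dependence of the indicators $\1[A_k]$), so I would cite it rather than reprove it — which is precisely why the result is attributed to \cite{erdHos1961classical}. A final cosmetic point: replacing $n\log n + cn$ by the integer $m_n$ perturbs the relevant exponent by only $o(1)$, so the stated limit is unaffected, and the strict versus weak inequality in $T_n < n\log n + cn$ is immaterial since $T_n$ is integer-valued.
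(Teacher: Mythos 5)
The paper does not prove this lemma; it is stated as a classical result and cited directly to Erd\H{o}s--R\'enyi, so there is no in-paper argument to compare against. Your reconstruction is correct: the geometric-decomposition computation of $\mathbb{E}[T_n]=nH_n$ is the textbook argument, and the Poisson limit via factorial moments $\mathbb{E}\bigl[\binom{Z_m}{r}\bigr]=\binom{n}{r}(1-r/n)^m \to e^{-rc}/r!$ is precisely the route the cited reference takes. The only place requiring care, the passage from convergence of factorial moments to weak convergence to $\mathrm{Poisson}(e^{-c})$, you correctly flag as the one nontrivial step; it is covered by the Fr\'echet--Shohat theorem (Poisson satisfies Carleman's condition) or, more elementarily for integer-valued variables, by the Bonferroni inclusion--exclusion bounds, either of which is a legitimate citation. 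Your remark that replacing $n\log n + cn$ by $\lfloor n\log n + cn\rfloor$ and the strict-vs.-weak inequality only perturb the exponent by $o(1)$ is also correct and necessary to state. No gaps.
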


\subsection{SQ lower bounds}
To show lower bounds in the statistical query model, we follow the technique employed in \cite{chen2022hardness,bogdanov2017pseudorandom}, which constructs hardness from statistical independence properties of a function class. Since our setting differs slightly from those in \cite{chen2022hardness,bogdanov2017pseudorandom}, we provide an adapted version of their constructions below.

\begin{definition}[$(1-\eta)$-pairwise independent; see also Definition C.1 of \cite{chen2022hardness}]
    Let $\mathcal{C}$ be a function class consisting of functions $f: \mathcal{X} \rightarrow \mathcal{Y}$. Let $\mathcal{D}$ be a distribution on $\mathcal{X}$. $\mathcal{C}$ is $(1-\eta)$-pairwise independent if there exists a finite subset $\widehat{\mathcal{Y}} \subseteq \mathcal{Y}$ such that with probability $(1-\eta)$ over $\vx, \vx' \in \mathcal{X}$ drawn independently from $\mathcal{D}$, the distribution of $(f (\vx), f (\vx'))$ for $f \sim \operatorname{Unif}(\mathcal{C})$ drawn uniformly at random from $\mathcal{C}$ is the product distribution $\operatorname{Unif}(\widehat{\mathcal{Y}}) \otimes \operatorname{Unif}(\widehat{\mathcal{Y}})$.
\end{definition}

The above agrees with Definition C.1 of \cite{chen2022hardness} apart from the extension that here we define the independence with respect to a finite subset $\widehat{\mathcal{Y}} \subseteq \mathcal{Y}$ of the output space.

\begin{theorem}[Theorem C.4 of \cite{chen2022hardness}]\label{thm:pairwise-ind-sq-lower}
	Let the function class $\mathcal{C}$  of functions $f: \mathcal{X} \rightarrow \mathcal{Y}$ be a $(1-\eta)$-pairwise independent function family with respect to\ a distribution $\mathcal{D}$ on $\mathcal{X}$. For any $f \in \mathcal{C}$, let $\mathcal{D}_f$ denote the distribution of $(\vx, f(\vx))$ where $\vx \sim \mathcal{D}$. Let $\mathcal{D}_{\operatorname{Unif}(\mathcal{C})}$ denote the distribution of $(\vx, y)$ where $\vx \sim \mathcal{D}$ and $y = f(\vx)$ for $f \sim \operatorname{Unif}(\mathcal{C})$. Any SQ learner able to distinguish the labeled distribution $\mathcal{D}_{f^*}$ for an unknown $f^* \in \mathcal{C}$ from the randomly labeled distribution $\mathcal{D}_{\operatorname{Unif}(\mathcal{C})}$ using bounded queries of tolerance $\tau$ requires at least $\frac{ \tau^2}{2\eta}$ such queries.
\end{theorem}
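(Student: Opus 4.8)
The plan is to run the standard statistical-query lower bound in its \emph{distinguishing} form, using $(1-\eta)$-pairwise independence to control a single second moment. Fix any SQ learner making $q$ queries of tolerance $\tau$, and consider the oracle that, on a query $g\colon\mathcal{X}\times\mathcal{Y}\to[-1,1]$, answers with $v_0(g):=\mathbb{E}_{f\sim\operatorname{Unif}(\mathcal{C})}\mathbb{E}_{\vx\sim\mathcal{D}}[g(\vx,f(\vx))]$, the exact mean of $g$ under $\mathcal{D}_{\operatorname{Unif}(\mathcal{C})}$. This is a valid response for $\mathcal{D}_{\operatorname{Unif}(\mathcal{C})}$, and in the world where the hidden target is some $f^*\in\mathcal{C}$ the adversarial oracle is free to return $v_0(g)$ as well whenever it lies within $\tau$ of the true mean $v_{f^*}(g):=\mathbb{E}_{\vx\sim\mathcal{D}}[g(\vx,f^*(\vx))]$. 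Since these answers do not depend on any hidden target, the (possibly adaptive) learner issues a fixed query sequence $g_1,\dots,g_q$ and observes a transcript identical in both worlds; consequently, if it distinguishes them when the target is $f^*$, some answer must be illegal in the $\mathcal{D}_{f^*}$ world, i.e.\ $|v_0(g_i)-v_{f^*}(g_i)|>\tau$ for some $i\le q$.

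Next I would bound, for each fixed query $g$, the probability over $f\sim\operatorname{Unif}(\mathcal{C})$ that $|v_0(g)-v_f(g)|>\tau$, where $v_f(g):=\mathbb{E}_{\vx\sim\mathcal{D}}[g(\vx,f(\vx))]$. Since $\mathbb{E}_f[v_f(g)]=v_0(g)$, Chebyshev bounds this probability by $\Var_f(v_f(g))/\tau^2$, and the variance unfolds into a second moment:
\[
\mathbb{E}_f\big[v_f(g)^2\big]=\mathbb{E}_{\vx,\vx'\sim\mathcal{D}}\Big[\,\mathbb{E}_{f\sim\operatorname{Unif}(\mathcal{C})}\big[g(\vx,f(\vx))\,g(\vx',f(\vx'))\big]\Big].
\]
Here $(1-\eta)$-pairwise independence enters: with probability at least $1-\eta$ over $(\vx,\vx')$, the pair $(f(\vx),f(\vx'))$ is distributed as $\operatorname{Unif}(\widehat{\mathcal{Y}})\otimes\operatorname{Unif}(\widehat{\mathcal{Y}})$, so on that event the inner expectation factors as $\bar g(\vx)\bar g(\vx')$ with $\bar g(\vx):=\mathbb{E}_{y\sim\operatorname{Unif}(\widehat{\mathcal{Y}})}[g(\vx,y)]\in[-1,1]$; on the remaining $\eta$-fraction it is still bounded by $1$ in absolute value. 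Because the first marginal of a product-uniform law is uniform, $v_0(g)=\mathbb{E}_{\vx}[\bar g(\vx)]$ up to an error controlled by the same $\eta$, and combining these one gets $\Var_f(v_f(g))=\mathbb{E}_f[v_f(g)^2]-v_0(g)^2\le\eta$ — exact ($\Var=0$) in the fully pairwise-independent case $\eta=0$. Hence $\mathbb{P}_{f\sim\operatorname{Unif}(\mathcal{C})}[|v_0(g)-v_f(g)|>\tau]\le\eta/\tau^2$ for every query $g$.

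Finally I would union-bound over the $q$ fixed queries: the probability over $f^*\sim\operatorname{Unif}(\mathcal{C})$ that \emph{some} oracle answer is off by more than $\tau$ — a necessary condition for the learner to distinguish — is at most $q\eta/\tau^2$. A successful distinguisher must be correct with probability exceeding $1/2$, which forces $q\eta/\tau^2>1/2$, i.e.\ $q>\tau^2/(2\eta)$, as claimed.

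The main obstacle is the second-moment estimate: cleanly handling the $\eta$-fraction of ``bad'' pairs $(\vx,\vx')$ on which the product structure fails, and verifying that the single-point law of $f(\vx)$ is (essentially) uniform on $\widehat{\mathcal{Y}}$ so that $v_0(g)$ genuinely matches $\mathbb{E}_{\vx}[\bar g(\vx)]$ — this is exactly where the precise constant in $\tau^2/(2\eta)$ is pinned down under the stated definition. Everything else (the oracle strategy, the reduction to ``one illegal answer'', Chebyshev, and the union bound) is routine and mirrors the SQ lower bounds of \cite{chen2022hardness,bogdanov2017pseudorandom}.
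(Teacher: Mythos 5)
Your proposal follows essentially the same route as the paper: the adversarial oracle replies with the $f$-averaged value $v_0(g)$, pairwise independence kills the covariance term on a $(1-\eta)$-fraction of pairs $(\vx,\vx')$, Chebyshev bounds the per-query escape probability, and a counting/union-bound step yields the $\tau^2/(2\eta)$ query lower bound. The one place where the two accounts differ is the constant in the variance estimate, and this is exactly the ``main obstacle'' you flag: the paper does not try to get $\Var_f\big[v_f(g)\big]\le\eta$ but simply observes that the difference of the two inner expectations (over $f$ and over independent $f,f'$) is bounded by $2$ in absolute value on the bad $\eta$-fraction of pairs, giving $\Var\le 2\eta$, and then closes with the complementary ``each query rules out at most a $2\eta/\tau^2$-fraction of $\mathcal{C}$'' argument rather than your ``success probability $>1/2$'' argument; both tallies give $\tau^2/(2\eta)$. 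If you do want the tighter $\Var\le\eta$ your sketch gestures at, the clean way is to note that on a bad pair $A(\vx,\vx')-B(\vx,\vx')=\Cov_f\bigl(g(\vx,f(\vx)),\,g(\vx',f(\vx'))\bigr)$ is a covariance of two $[-1,1]$-valued variables and hence has absolute value at most $1$ (not $2$), so $\Var\le\eta$ — but this refinement is unnecessary for the stated bound, and the paper's cruder estimate together with its factor-one reduction is the simpler path.
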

\begin{proof}
    We follow the proof in \cite{chen2022hardness}. Let $\phi : \mathcal{X} \times \mathcal{Y} \to [-1, 1]$ be any query made by the learner and set $\phi[f]=\E_{\vx \sim \mathcal{D}} [\phi(\vx, f(\vx))]$. Then,
    \begin{equation}
		\begin{split}
			\operatorname{Var}_{f \sim \operatorname{Unif}(\mathcal{C})} \left[\phi[f]\right] &= \E_{f} \left[ \phi[f] \phi[f] \right] - \E_{f} \left[\phi[f]\right] \E_{f'} \left[\phi[f']\right] \\
			&= \E_{f, f'} \left[ \E_{\vx} [\phi(\vx, f(\vx))] \E_{\vx'} [\phi(\vx', f(\vx'))] - \E_{\vx} [\phi(\vx, f(\vx))] \E_{\vx'} [\phi(\vx', f'(\vx'))] \right] \\
			&= \E_{\vx, \vx'} \E_{f, f'} \left[ \phi(\vx, f(\vx)) \phi(\vx', f(\vx')) - \phi(\vx, f(\vx)) \phi(\vx', f'(\vx')) \right]\\
            &\leq 2\eta .
		\end{split} 
    \end{equation}
  In the last line, we use the fact that for any $(1-\eta)$-pairwise independent class $\mathcal{C}$, the inner expectation is zero with probability at least $1- \eta$ over the choice of $\vx, \vx' \sim \mathcal{D}$ and at most 2 otherwise. 

  Since any SQ algorithm must work with any given value of the noise within the stated tolerance, consider the adversarial strategy where the SQ oracle responds to a query with $\overline{\phi} = \E_{f \sim \operatorname{Unif}(\mathcal{C})} [\phi[f]]$ whenever possible. By Chebyshev's inequality,
  \begin{equation}
      \mathbb{P}_{f \sim \operatorname{Unif}(\mathcal{C})} \left[ \big|\phi[f] - \overline{\phi}\big| > \tau \right] \leq \frac{\operatorname{Var}_{f \sim \operatorname{Unif}(\mathcal{C})}\big[\phi[f]\big]}{\tau^2} \leq \frac{2\eta}{\tau^2}. 
  \end{equation}
  So each such query only allows the learner to rule out at most a $\frac{2\eta}{\tau^2}$ fraction of $\mathcal{C}$. Thus to distinguish $\mathcal{D}_{f^*}$ from $\mathcal{D}_{\operatorname{Unif}(\mathcal{C})}$, the learner requires at least $\frac{ \tau^2}{2\eta}$ queries.
\end{proof}

\subsection{Additional examples of learnable manifolds}
\label{app:examples_sampleable}

Here, we provide additional examples of manifolds and their properties that render such manifolds learnable as detailed in \Cref{sec:manifold_reconstruction}. There, learnability of the distribution of the manifold corresponded to the property that one can efficiently approximate the manifold with an epsilon net. 

As mentioned in the main text, many algorithms in the manifold reconstruction literature implicitly assume that the manifold is efficiently sampleable. We give one such example below.

\begin{example}[Manifold Learning Setting~\cite{bernstein2000graph}]
\label{ex:bernstein}
    \cite{bernstein2000graph} provide provable guarantees for the Isomap algorithm in the sense of recovering geodesic distances on a manifold $\mathcal{M}$ via shortest-path distance on a similarity graph $G$ under the following assumptions (listed in Main Theorem A in~\cite{bernstein2000graph}), which informally require among other things: (1) the algorithm is given a set of samples $\{\vx_i\}$, which form a $\delta$-net over the manifold $\mathcal{M}$; (2) $\mathcal{M}$ is geodesically convex; (3) the reach (equivalently encoded as bounds on the radius of curvature and branch separation) is bounded as $\Reach{\mathcal{M}} = O(\delta)$. Assumption (1) suffices to guarantee learnability in our setting.
\end{example}

We also extend \Cref{ex:isoperimetric_setting} to incorporate manifolds with negative curvature. Negatively curved manifolds have volume that can grow exponentially with respect to the radius around a given point. However, as long as the radius of a sequence of manifolds does not grow with the ambient dimension, the Bishop-Gromov inequality bounds the volume of such manifolds rendering them efficiently sampleable. As far as we are aware, estimating the diameter of real-world manifolds is a challenging task and it is unknown 
whether real-world data manifolds can fit within this regime.

\begin{example}[Manifolds with bounded curvature and radius] \label{ex:negative_curvature}
    Any sequence of distributions $\{\mathcal{P}_{\mathcal{M}_n}\}$ supported over manifolds $\{\mathcal{M}_n\}$ where each manifold $\mathcal{M}_n$ has bounded Ricci curvature $\Ric{\mathcal{M}_n} \geq (d-1)K$ for an arbitrary constant $K$ and is contained within a ball of radius $r=O_n(\log(n))$ around a point $\vp_n \in \mathcal{M}_n$ is efficiently sampleable.
\end{example}
\begin{proof}
    By the Bishop-Gromov theorem (\Cref{thm:bishop_gromov}), the volume of any such manifold is bounded by $\Vol_{S_d^K}(r)$ which is the volume of a ball of radius $r$ in the $d$-dimensional Hyperbolic space of constant curvature $K$. This means that 
    \begin{equation}
        \Vol_d(\mathcal{M}_n) \leq  \Vol_{S_d^K}(r) = \sigma_{d-1} \int_{t=0}^{r} \sinh^{d-1}(t) \; dt \leq \sigma_{d-1} r \left( \frac{\exp(r)}{2} \right)^{d-1}.
    \end{equation}
    Thus, whenever $r=O_n(\log(n))$, this allows for an $\epsilon$-cover of size $O_n(\poly(n))$. The rest of the proof follows directly from that of \Cref{ex:isoperimetric_setting}.
\end{proof}

Another example of a class of manifolds that fall in the learnable regime are those which are $\alpha$-strongly convex \cite{scieur2023strong}, a property that can be leveraged in instances of Riemannian optimization. This property is extended from the standard notion of strong convexity on Hilbert spaces.

\begin{definition}[$\alpha$-strong convexity of a Hilbert space \cite{goncharov2017strong}]\label{def:alpha_strong_convex_euclidean}
    Given a real Hilbert space $H$ with inner product $\langle \cdot, \cdot \rangle$ and corresponding induced norm $\| \cdot \|$, denote by $B_R(c)$ the closed ball of radius $R$ centered around $c \in H$. A subset $A \subset H$ is $\alpha$-strongly convex if there exists a set $C \subset H$ such that
    \begin{equation}
        A = \bigcap_{c \in C} B_R(c),
    \end{equation}
    where $R = \frac{1}{2\alpha}$.
\end{definition}

The above definition captures the equivalent notion in the Euclidean space $\mathbb{R}^d$ that an $\alpha$-strongly convex set $A \subset \mathbb{R}^d$ is one where for any $\vx,\vy \in A$ and unit norm $\vz \in \mathbb{R}^d$ such that $\|\vz \| = 1$, we have that \cite{scieur2023strong}
\begin{equation}
    (1-t)\vx+ t \vy + \alpha (1-t)t \|\vx - \vy\|^2 \vz \in A.
\end{equation}

Riemannian manifolds are $\alpha$-strongly convex if the image of the exponential map over inputs in a set $A$ living in the tangent space are $\alpha$-strongly convex in the Euclidean sense.

\begin{definition}[$\alpha$-strong convexity of a Riemannian manifold \cite{scieur2023strong}]\label{def:riem_alpha_strong_convex}
    Let $\mathcal{M}$ be a Riemannian manifold that is uniquely geodesic (i.e. any two points $\vx, \vy \in \mathcal{M}$ can be connected by a unique geodesic). Then, $\mathcal{M}$ is Riemannian $\alpha$-strongly convex if for any $\vx \in \mathcal{M}$, the set
    \begin{equation}
        \operatorname{Exp}^{-1}_\vx(\mathcal{M}) \coloneqq \left\{ \vy \in T_{\vx}\mathcal{M}:  \vz = \operatorname{Exp}_\vx(\vy), \vz \in \mathcal{M}  \right\}
    \end{equation}
    is $\alpha$-strongly convex with respect to the inner product $\|\cdot\|_\vx$ on $T_{\vx}\mathcal{M}$ in the sense of \Cref{def:alpha_strong_convex_euclidean}.
\end{definition}

The above definition places restrictions on the diameter of manifolds, essentially placing them in the setting of bounded radius manifolds studied in \Cref{ex:negative_curvature}.

\begin{example}[Strongly convex manifolds]
    As stated in \Cref{def:alpha_strong_convex_euclidean} for any $\alpha$-strongly convex manifold $\mathcal{M}$, the diameter of $\operatorname{Exp}^{-1}_\vx(\mathcal{M})$ is at most $2r=\alpha^{-1}$. For a sequence of such Riemannian $\alpha$-strongly convex manifolds $\{\mathcal{M}_n\}$, as long as the metric over the manifold in the normal coordinates at $T_\vx \mathcal{M}$ is bounded everywhere (or say the curvature is bounded), then such sequences of manifolds are efficiently sampleable similar to the setting of \Cref{ex:negative_curvature}.
\end{example}

\section{Deferred Proofs}\label{app:proofs_for_bounded_curvature}

\subsection{Space-filling manifold}
\label{app:space_filling_curve}

To construct a manifold which achieves the desired lower bound, we form a customized manifold which acts as a space-filling curve touching exponentially many quadrants of the $n$-dimensional hypercube. This space-filling curve is constructed by maneuvering around a Gray code which enumerates Boolean strings in such a fashion that successive strings differ by Hamming distance one. This procedure is motivated by and reminiscent of techniques used in constructing Hilbert curves from Gray codes \cite{skilling2004programming}.

\begin{lemma}[Gray code \cite{gray1953pulse}] \label{def:gray_code}
    For every integer $k>0$, there exists a bijective function $G:[2^{k}] \to \{0,1\}^{k}$ enumerating length $k$ bitstrings such that successive bitstrings $G(i)$ and $G(i+1)$ differ in only one coordinate, i.e., $|G(i) - G(i+1)|_H = 1$ where $|\cdot|_H$ denotes the Hamming distance.
\end{lemma}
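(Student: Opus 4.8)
The plan is to prove the lemma by induction on $k$ via the classical \emph{binary reflected Gray code} construction. For the base case $k=1$, set $G(1)=0$ and $G(2)=1$; this is a bijection onto $\{0,1\}^1$ and the two strings differ in their single coordinate. For the inductive step, assume we are given a bijection $G_k\colon [2^k]\to\{0,1\}^k$ with $|G_k(i)-G_k(i+1)|_H=1$ for all $i\in[2^k-1]$, and define $G_{k+1}\colon[2^{k+1}]\to\{0,1\}^{k+1}$ by
\begin{equation}
  G_{k+1}(i)=
  \begin{cases}
    0\,G_k(i), & 1\le i\le 2^k,\\[2pt]
    1\,G_k\bigl(2^{k+1}+1-i\bigr), & 2^k+1\le i\le 2^{k+1},
  \end{cases}
\end{equation}
where $b\,s$ denotes the string obtained by prepending the bit $b\in\{0,1\}$ to the string $s$. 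In words, $G_{k+1}$ lists the $k$-bit code forward with a leading $0$, then lists it in reverse with a leading $1$.

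It then remains to verify the two asserted properties. For bijectivity, the first block is a bijection from $\{1,\dots,2^k\}$ onto $\{0\}\times\{0,1\}^k$ (it is $G_k$ with a $0$ prepended), and the second block is a bijection onto $\{1\}\times\{0,1\}^k$ (it is $G_k$ precomposed with the index-reversal $i\mapsto 2^{k+1}+1-i$, which maps $\{2^k+1,\dots,2^{k+1}\}$ bijectively onto $[2^k]$, with a $1$ prepended); since $\{0\}\times\{0,1\}^k$ and $\{1\}\times\{0,1\}^k$ partition $\{0,1\}^{k+1}$, the map $G_{k+1}$ is a bijection. For the adjacency property, consider consecutive indices $i,i+1$: if both lie in the first block, $G_{k+1}(i)$ and $G_{k+1}(i+1)$ share their leading bit and have suffixes $G_k(i),G_k(i+1)$, which differ in exactly one coordinate by the inductive hypothesis; if both lie in the second block, the suffixes are $G_k(m)$ and $G_k(m-1)$ for $m=2^{k+1}+1-i\in\{2,\dots,2^k\}$, which again differ in exactly one coordinate by the inductive hypothesis while the leading bits agree; and at the single seam $i=2^k$, the strings $G_{k+1}(2^k)=0\,G_k(2^k)$ and $G_{k+1}(2^k+1)=1\,G_k(2^k)$ differ only in the leading bit. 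This closes the induction.

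I do not expect a genuine obstacle here: the construction is standard and the verification is entirely bookkeeping. The only points that require any care are the reindexing in the reflected (second) block and the check at the single seam between the two blocks, both immediate from the definitions. If it turns out to be useful downstream, the same construction additionally yields a \emph{cyclic} Gray code — one also has $|G_{k+1}(1)-G_{k+1}(2^{k+1})|_H=1$, since $G_{k+1}(1)=0\cdots0$ and $G_{k+1}(2^{k+1})=1\,G_k(1)=10\cdots0$ — but this stronger property is not needed for the statement as given.
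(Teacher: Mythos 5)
Your proof is correct and follows the standard binary-reflected Gray code construction, which is exactly what the paper alludes to right after the lemma (the paper simply cites Gray and Knuth rather than reproducing the proof, and gives the $k=3$ example). Your closing remark about cyclicity is in fact relevant downstream: the space-filling manifold in \Cref{eq:space_filling_manifold_equation} uses circular indexing of the $\vb_k$, so the property $|G(1)-G(2^{k})|_H=1$ that your construction also provides is implicitly relied upon there.
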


As a simple example, for $k=3$, the sequence $[000,001,011,010,110,111,101,100]$ is one such Gray code which can be constructed recursively as a binary-reflected Gray code \cite{knuth1997art}. We also visualize the $k=7$ bit Gray code in \Cref{fig:gray-code}.

\begin{figure}
    \centering
    \includegraphics[width=\textwidth]{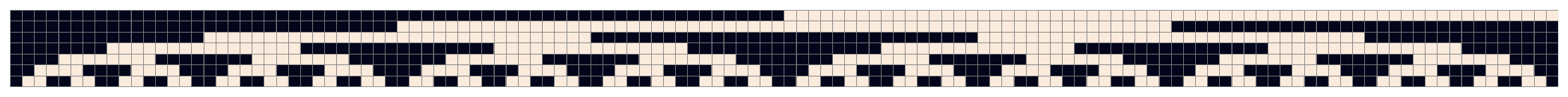}
    \caption{Enumeration of Gray code for $n=7$ bits. Each column corresponds to a bitstring where black square is equal to $0$ and tan square is $1$. Note that the variation takes place largely in the last entries (bottom-most).}
    \label{fig:gray-code}
\end{figure}

Given an intrinsic dimension $d$, we will construct manifolds $\mathcal{M}$ as products of $P_d = \{\vx \in [0,1]^{d-1}\}$, the $(d-1)$-dimensional hypercube, and a one-dimensional submanifold over the remaining dimensions, which resembles a one-dimensional space-filling curve. The submanifold is constructed to touch the corners of the hypercube following a Gray code.

\paragraph{One-dimensional space-filling curve construction.}
Our goal is to construct a one-dimensional sub-manifold $\mathcal{M} \subset [0,1]^{n-d+1}$ that covers as many quadrants of the hypercube as possible. Given a radius of curvature $R$, which we will later set to be within the bounds of the stated reach, let $n_R = \floor{(n-d+1)/\delta_R}$ and $\delta_R = \ceil{4R^2}$. In our construction, we will use the Gray code on $n_R$ bits and map these to bitstrings in dimension $n$ by copying the bitstring of the Gray code $\delta_R$ times. This construction will allow the manifold to have radius of curvature conforming to the given bound on the reach.

\begin{figure}
    \centering
    \includegraphics{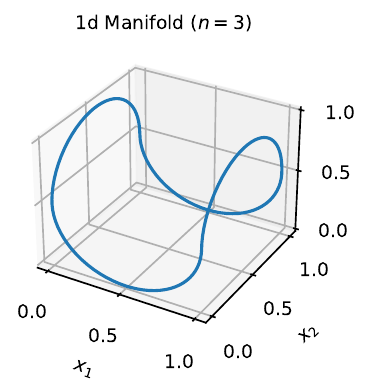}
    \caption{Shape of constructed manifold $\mathcal{M}_3$.}
    \label{fig:M1-shape}
\end{figure}

Specifically, let $G:[2^{n_R}] \to \{0,1\}^{n_R}$ be a Gray code over $n_R$ bits, which maps indices $1, \dots, 2^{n_R}$ to the corresponding bitstring in the Gray code (see \Cref{def:gray_code}). Then, for $i \in [2^{n_R}]$ let $\vb_i = G(i)^{\oplus \delta_R} \oplus \bm 0_{n - n_R}$ where $G(i)^{\oplus \delta_R}$ is the bitstring $G(i)$ repeated $\delta_R$ times and $\bm 0_{k}$ is the bitstring of length $k$ with each entry equal to $0$. $\bm 0_{k}$ is concatenated at the end to ensure the input is of dimension $n$. Then, the space-filling sub-manifold takes the form:
\begin{equation} \label{eq:space_filling_manifold_equation}
    \mathcal{M}_{n_R} = \bigcup_{k=1}^{2^{n_R}} \left\{  \frac{\vb_{k-1} + \vb_{k+1}}{2} + \left( \frac{\vb_{k} - \vb_{k+1}}{2}\right) \cos(t) +  \left( \frac{\vb_{k} - \vb_{k-1}}{2}\right) \sin(t) : t \in [0, \pi / 2] \right\}.
\end{equation}
As an example, the manifold $\mathcal{M}_3$ is visualized in \Cref{fig:M1-shape}.
Indexing of the bitstrings $\vb_k$ is assumed circular and taken $k \mod 2^n$.

\begin{lemma} \label{lem:manifold_reach_verification}
    The manifold $\mathcal{M}_{n_R}$ defined in \Cref{eq:space_filling_manifold_equation} has $\Reach{\mathcal{M}_{n_R}} = R$. Furthermore, defining 
    \begin{equation}
        \operatorname{Round}(\mathcal{M}_{n_R}) = \left\{ \frac{1}{2}\left(\sgn(\vx - \vone_{n_R} / 2) + \vone_{n_R} \right): \vx \in \mathcal{M}_{n_R} \right\}
    \end{equation}
    as the operation which projects each point onto the nearest corner corresponding to bitstrings of the hypercube, we have that $|\operatorname{Round}(\mathcal{M}_{n_R})| = 2^{n_R}$.
\end{lemma}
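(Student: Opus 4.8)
The plan is to recognize $\mathcal M_{n_R}$ as an explicit closed curve built out of $2^{n_R}$ congruent quarter-circle arcs, and then read off both claims from that description. Since $G$ is a bijection (\Cref{def:gray_code}), the bits flipped at the $(k-1)\!\to\! k$ and $k\!\to\! (k+1)$ transitions are distinct; after the $\delta_R$-fold repetition the vectors $\vb_k-\vb_{k-1}$ and $\vb_k-\vb_{k+1}$ are therefore supported on two disjoint blocks of $\delta_R$ coordinates, hence orthogonal with common norm $\sqrt{\delta_R}$. With $\vc_k:=\tfrac12(\vb_{k-1}+\vb_{k+1})$, the $k$-th set in \Cref{eq:space_filling_manifold_equation} is then exactly the quarter circle of radius $\rho:=\sqrt{\delta_R}/2$ centered at $\vc_k$, running from the edge midpoint $\tfrac12(\vb_{k-1}+\vb_k)$ (at $t=0$) to $\tfrac12(\vb_k+\vb_{k+1})$ (at $t=\pi/2$); here $\rho=R$ because $\delta_R=\lceil 4R^2\rceil$ (at least when $4R^2\in\sZ$, which covers the instances used, and $\rho=\Theta(R)$ otherwise). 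Consecutive arcs share the endpoint $\tfrac12(\vb_k+\vb_{k+1})$ and the common unit tangent $(\vb_{k+1}-\vb_k)/\sqrt{\delta_R}$ there, so they glue into a single closed $C^{1,1}$ curve whose curvature is identically $1/\rho$. I would also record the coordinate behaviour that gets used repeatedly: on a coordinate block $m$ not flipped by either edge incident to $\vb_k$, every point of the $k$-th arc has its $m$-th block equal to $(\vb_k)_m$; on an incident block it equals $\tfrac{1-\sin t}{2}\vone$ or $\tfrac{1-\cos t}{2}\vone$ (or their complements), in particular always on the same side of $\tfrac12$ as $(\vb_k)_m$.

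\textbf{Reach.} The curvature being $1/\rho$ gives $\Reach{\mathcal M_{n_R}}\le\rho$ via the inequality $\Reach{\mathcal M}^{-1}\ge\|\gamma''\|$ recalled earlier. For the matching lower bound I would use Federer's identity for $C^{1,1}$ submanifolds~\cite{federer1959curvature}, $\Reach{\mathcal M_{n_R}}^{-1}=\sup_{\vp\ne\vq}\frac{2\,d(\vq-\vp,\,T_\vp\mathcal M_{n_R})}{\|\vq-\vp\|^2}$, and show the supremum equals $1/\rho$. Pairs $\vp,\vq$ on a common arc give the ratio $1/\rho$ exactly (the standard computation for a circle of radius $\rho$), so it suffices to bound the ratio by $1/\rho$ for $\vp,\vq$ on distinct arcs. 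The coordinate description forces two arcs whose corners $\vb_j,\vb_k$ differ in a block flipped by none of the (at most four) edges incident to $\vb_j$ or $\vb_k$ to satisfy $\|\vp-\vq\|\ge\sqrt{\delta_R}=2\rho$, so the ratio is $\le 2/\|\vp-\vq\|\le 1/\rho$; a finite Gray-code bookkeeping argument shows this disposes of all pairs of arcs except those at small cyclic distance, and those remaining configurations are handled by a direct (but slightly delicate) computation. The same separation estimate shows $\mathcal M_{n_R}$ is embedded, and we conclude $\Reach{\mathcal M_{n_R}}=\rho=R$.

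\textbf{Rounding.} By the ``same side of $\tfrac12$'' property, every point of the $k$-th arc has each coordinate on the side of $\tfrac12$ dictated by $\vb_k$, so $\operatorname{Round}$ sends the whole $k$-th arc to $\vb_k$; conversely the point at $t=\pi/4$ has all incident-block coordinates strictly between $0$ and $1$ and on the correct side, so it rounds exactly to $\vb_k$. Hence $\operatorname{Round}(\mathcal M_{n_R})=\{\vb_1,\dots,\vb_{2^{n_R}}\}$, and since $\vb_i=G(i)^{\oplus\delta_R}\oplus\vzero$ with $G$ injective on $[2^{n_R}]$, these vertices are pairwise distinct, giving $|\operatorname{Round}(\mathcal M_{n_R})|=2^{n_R}$.

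\textbf{Main obstacle.} The curve description and the rounding claim are routine once the parametrization is unpacked; the real work is the lower bound on the reach. For pairs of arcs far apart in the cyclic order the bound $\|\vp-\vq\|\ge 2\rho$ makes it immediate, but one must (i) use the structure of the binary-reflected Gray code to show that only arcs within a small constant cyclic distance can come within $2\rho$ of one another — this amounts to tracking the handful of coordinate blocks flipped near $\vb_j$ and $\vb_k$ and excluding the Gray-code coincidences that would otherwise bring them closer — and (ii) check by hand that each of the resulting local configurations of radius-$\rho$ arcs still yields ratio $\le 1/\rho$. Step (ii) is subtle because such arcs can genuinely approach to within $\rho\sqrt2<2\rho$, so the verification must exploit that the arcs disagree on entire coordinate blocks (which forces the connecting chord to be largely tangent, keeping $d(\vq-\vp,T_\vp\mathcal M_{n_R})$ small) rather than a crude distance bound.
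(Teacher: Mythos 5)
Your decomposition of $\mathcal{M}_{n_R}$ into $2^{n_R}$ congruent quarter circles, and the use of that to read off both the rounding count and the curvature upper bound on the reach, matches the paper's argument. Where you differ is in the lower bound on the reach: you propose to use Federer's chord-to-tangent formula $\Reach{\mathcal{M}}^{-1}=\sup_{\vp\neq\vq}2\,d(\vq-\vp,T_\vp\mathcal{M})/\|\vp-\vq\|^2$, whereas the paper works directly with the medial axis (\Cref{def:reach}): it separates pairs of segments at cyclic distance $\geq 2$ with a crude coordinate-block distance bound, and for adjacent segments it writes down the local medial axis explicitly in the flipped coordinate blocks and checks that it lies at distance $\sqrt{\delta_R}/2$. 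Federer's formula is a clean and arguably cleaner alternative, since the tangent structure at the glued endpoints makes the adjacent-arc case transparent. However, you have honestly left a gap in exactly the place the paper is sloppiest: for arcs at cyclic distance $2$ (and similar near configurations), the minimal separation is only $\sqrt{2}\rho<2\rho$, so the crude bound $2/\|\vp-\vq\|\leq 1/\rho$ fails, and one really does need to exploit that the chord $\vq-\vp$ has a large tangential component; at the closest pair of endpoints the Federer ratio is in fact exactly $1/\rho$, so the case is tight and cannot be dismissed. The paper's ``differ maximally in at least $\delta_R$ locations'' sentence does not dispose of this case either, so your flagging of it as the main obstacle is apt, but until you carry out the finite check you describe the argument is not a complete proof. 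One point where you are more careful than the paper: $\rho=\sqrt{\ceil{4R^2}}/2$, so $\Reach{\mathcal{M}_{n_R}}=R$ only when $4R^2\in\sZ$ (otherwise $\Reach{\mathcal{M}_{n_R}}=\Theta(R)$), a small inaccuracy in the lemma's statement that is harmless downstream but worth noting.
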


\begin{proof}
    Each segment of the manifold in \Cref{eq:space_filling_manifold_equation} is an arc of a circle centered at $\frac{\vb_{k-1} + \vb_{k+1}}{2}$ and of radius $\sqrt{\delta_R}/2$. For two segments indexed by $ k'$ such that $|k-k'|\geq 2$, the segments differ maximally in at least $\delta_R$ locations so the medial axis between these segments is at least $\sqrt{\delta_R}/2 \geq R$ distance away. 
    
    Between neighboring segments indexed by $k$ and $k+1$, we note that we have a curve which consists of circular arcs from $[\bm 0_{\delta_R}, \vone_{\delta_R}/2, \vone_{\delta_R}] \oplus \vs $ to $[\bm 0_{\delta_R}, \vone_{\delta_R}, \vone_{\delta_R}/2] \oplus \vs $ and then to $[\vone_{\delta_R}/2,  \vone_{\delta_R}, \bm 0_{\delta_R}] \oplus \vs $ for some bitstring $\vs$ which is shared between all of the points in the curve. Here, for simplicity, we have permuted the distinct elements in the neighboring segments to the beginning of the bitstrings. Consider now the medial axis of this curve and set $\delta_R=1$ for simplicity as the general case will directly follow. Here, the medial axis takes the form of points $[a,1-b,c] \oplus \vs$ where $0 \leq a,b \leq 1/2$ and $0 \leq c \leq 1$. For any such point where $c>0.5$ or $c<0.5$, the nearest point in the manifold will be a point in the first or second segment respectively. For such points, the medial axis will thus be the center of the corresponding circle which the arc spans. When $c=0.5$, if $a=b<0.5$, then the unique nearest point is always $[0,1,1/2] \oplus \vs$. Otherwise, if $c=0.5$ and $a<b$ or $b<a$, the nearest point in the manifold will be a point in the first or second segment respectively as in the previous case. Putting this all together, the reach of this manifold is $\sqrt{\delta_R}/2 \geq R$. 

    Finally, to count the number of elements in $\operatorname{Round}(\mathcal{M}_{n_R})$, note that setting $t=\pi/4$ for segment $k$ in \Cref{eq:space_filling_manifold_equation} obtains a point of the form 
    \begin{equation}
        \left(\vb_{k-1} + \vb_{k+1}\right)\frac{1}{2}\left(1 - \frac{\sqrt{2}}{2}\right) + \frac{\sqrt{2}}{2}\vb_k.
    \end{equation}
    The above is rounded to $\vb_k$. Applying this procedure for all $k$ gives $2^{n_R}$ total rounded points. 
\end{proof}

\begin{corollary}
    For a sequence of manifolds $\{\mathcal{M}_n\}$ of intrinsic dimension $d=O(1)$ and reach $\Reach{\mathcal{M}_n}=O(n^{\alpha})$ for $\alpha < 0.5$ taking the form of \Cref{lem:manifold_reach_verification}, the manifold covers exponentially many quadrants of the Boolean cube, i.e. $|\operatorname{Round}(\mathcal{M}_{n})| = 2^{\Omega(n)}$.
\end{corollary}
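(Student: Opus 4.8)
The plan is to treat the hypothesis $\Reach{\mathcal{M}_n} = O(n^\alpha)$ as an \emph{upper bound} on the curvature budget rather than a prescribed growth rate: it permits, in particular, sequences whose reach stays bounded, and it is exactly such a choice that maximizes the quadrant count. Concretely, I would instantiate the construction of \Cref{lem:manifold_reach_verification} with a constant radius–of–curvature parameter $R$ — for definiteness $R = 1/2$, the value used in the experiments — so that $\Reach{\mathcal{M}_{n_R}} = R$, which satisfies $R = O(n^\alpha)$ for every $\alpha \ge 0$ and in particular for some $\alpha < 1/2$. With $R$ constant one has $\delta_R = \ceil{4R^2} = \Theta(1)$ (indeed $\delta_R = 1$ when $R = 1/2$), and since $d = O(1)$ the parameter $n_R = \floor{(n - d + 1)/\delta_R}$ obeys $n_R \ge (n - d + 1)/\delta_R - 1 = \Omega(n)$.

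It then remains only to read off the count. By \Cref{lem:manifold_reach_verification}, the one–dimensional space–filling submanifold built from the $n_R$–bit Gray code has reach $R$ and $|\operatorname{Round}(\mathcal{M}_{n_R})| = 2^{n_R}$, since the distinct Gray–code words $\vb_k$ (which stay distinct after each symbol is repeated $\delta_R$ times) round to distinct corners of the hypercube. The full manifold $\mathcal{M}_n$ is this curve crossed with the $(d-1)$–cube $P_d = [0,1]^{d-1}$ on a disjoint block of coordinates, so $\operatorname{Round}$ factors over the two blocks and $|\operatorname{Round}(\mathcal{M}_n)| = |\operatorname{Round}(P_d)|\cdot 2^{n_R}$ with $|\operatorname{Round}(P_d)| \le 2^{d-1} = O(1)$; in particular $|\operatorname{Round}(\mathcal{M}_n)| \ge 2^{n_R}$. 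Combining with $n_R = \Omega(n)$ gives $|\operatorname{Round}(\mathcal{M}_n)| = 2^{\Omega(n)}$, as claimed.

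The argument is short, and essentially all that is needed is bookkeeping: that a constant reach is trivially of the form $O(n^\alpha)$; that the floor in $n_R$ and the additive constant $d = O(1)$ cost only lower–order terms; and that the $P_d$ factor changes the count by at most a constant multiplicative factor. The one conceptual point — which I regard as the main ``obstacle'' rather than any computation — is recognizing that the quadrant count $2^{n_R} = 2^{\floor{(n-d+1)/\ceil{4R^2}}}$ is monotone decreasing in $R$, so the $2^{\Omega(n)}$ bound is obtained precisely by holding the reach bounded; if one instead lets the reach grow as $R = \Theta(n^{\alpha})$ with $\alpha \in (0,1/2)$ the same computation yields the weaker $2^{\Theta(n^{1-2\alpha})} = 2^{n^{\Omega(1)}}$, which is the form that then feeds into \Cref{thm:hard_to_learn}.
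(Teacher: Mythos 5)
Your arithmetic is sound, and the monotonicity observation (the quadrant count $2^{\floor{(n-d+1)/\ceil{4R^2}}}$ decreases as $R$ grows) is exactly the right thing to notice. The issue is which quantified statement you end up proving. By specializing to $R=1/2$ you establish an \emph{existential} claim: there is some sequence with reach $O(n^\alpha)$ whose image under $\operatorname{Round}$ has size $2^{\Omega(n)}$. But the corollary is invoked for the $(\alpha,d)$-sequences of \Cref{def:manifold_sequence}, in which the reach parameter genuinely scales like $n^{\alpha}$; for those manifolds your instantiation says nothing, and the intended (implicit) argument is the uniform one: reach $O(n^{\alpha})$ gives $\delta_R=\ceil{4R^2}=O(n^{2\alpha})$, hence $n_R=\floor{(n-d+1)/\delta_R}=\Omega(n^{1-2\alpha})$, and then \Cref{lem:manifold_reach_verification} yields $|\operatorname{Round}(\mathcal{M}_n)|=2^{n_R}=2^{\Omega(n^{1-2\alpha})}=2^{n^{\Omega(1)}}$ for \emph{every} admissible sequence, which is the bound actually fed into \Cref{thm:hard_to_learn}.

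You clearly see this, since your closing paragraph contains precisely that computation and correctly flags that the literal ``$2^{\Omega(n)}$'' is attained only when the reach stays bounded (the displayed bound in the corollary is loose on this point; for $R=\Theta(n^{\alpha})$ with $\alpha\in(0,1/2)$ the count is $2^{\Theta(n^{1-2\alpha})}$, not $2^{\Omega(n)}$). So the fix is organizational rather than mathematical: promote the general estimate $n_R=\Omega(n^{1-2\alpha})$ from a remark to the main argument, state the conclusion as $2^{n^{\Omega(1)}}$ (or $2^{\Omega(n^{1-2\alpha})}$), and treat the constant-reach case only as the extreme where this specializes to $2^{\Omega(n)}$. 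The side points you handle along the way (the $(d-1)$-cube factor changing the count by at most $2^{d-1}=O(1)$, the floor and the additive $d-1$ costing only lower-order terms) are fine and consistent with the lemma.
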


In our proofs, we will consider the setting where we have a sequence of manifolds of increasing intrinsic dimension $n$ and bounded reach. We choose the parameters of this sequence as follows.
\begin{definition}[$(\alpha, d)$-sequence of manifolds] \label{def:manifold_sequence}
    For given intrinsic dimension $d$ and bound on the reach $R = O(n^{\alpha})$ for some $\alpha<0.5$, set $n = \ceil{4R^2}n_b+d-1$. We construct a sequence of manifolds $\{\mathcal{M}_n\}$ of the form given in \Cref{eq:space_filling_manifold_equation} with the given values of $R,n,d$ where $n$ is incremented by increasing values $n_b$.
\end{definition}

\subsection{SQ hardness}
We present here the hardness results in the statistical query (SQ) model by mapping a class of hard to learn Boolean functions to real-valued neural network functions, which approximate those Boolean functions well. This follows a commonly used set of proof techniques employed in \cite{diakonikolas2020algorithms,daniely2021local,chen2022hardness,goel2020superpolynomial}. We follow most closely the line of reasoning in \cite{chen2022hardness} for showing hardness results for learning two hidden layer $\operatorname{ReLU}$ networks though the nature of the distribution in our case will allow us to show SQ hardness results for single hidden layer networks. Similar hardness results based on reductions to cryptographically hard problems as shown in the work of \cite{daniely2021local} are also provided in \Cref{app:crypto_equivalent_hardness}.

Hardness in our setting is based on classic results of the SQ hardness of learning parity functions over Boolean inputs \cite{kearns1998efficient,blum1994weakly}. A parity function $\chi_S:\{0,1\}^d \to \{0,1\}$ is a Boolean function taking the form
\begin{equation}
    \chi_S(\vx_b) = \sum_{i \in S} [\vx_b]_i \mod 2,
\end{equation}
which determines whether the summation of the input Boolean string over indices in the set $S \subseteq [d]$ is even or odd. The class of such parity functions indexed by subsets $S \subseteq [d]$ is exponentially hard to learn in the SQ model.
\begin{theorem}[SQ hardness of parities \cite{blum1994weakly,kearns1998efficient}, see also Theorem 4.3 of \cite{chen2022hardness}] \label{thm:parity_SQ_lower_bound}
    Any SQ algorithm given SQ access to the distribution of labeled pairs $(\vx_b, y)$ where $\vx_b \sim \operatorname{Unif}(\{0,1\}^d)$ and $y = \chi_S(\vx_b)$ for an unknown $S \subseteq [d]$ capable of learning $\chi_S$ up to classification error $\epsilon$ sufficiently small with queries of tolerance $\tau$ requires $\Omega(\tau^2 2^d)$ queries.
\end{theorem}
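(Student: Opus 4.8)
The statement is the classical SQ lower bound for parities of \cite{kearns1998efficient,blum1994weakly}, and the plan is to reprove it from the machinery already developed in the excerpt, namely \Cref{thm:pairwise-ind-sq-lower}, together with a short adversarial-oracle argument to pass from \emph{distinguishing} to \emph{learning}. First I would check that the parity class $\mathcal{C}=\{\chi_S: S\subseteq[d]\}$ is $(1-\eta)$-pairwise independent with respect to $\mathcal{D}=\operatorname{Unif}(\{0,1\}^d)$ with $\eta=3\cdot 2^{-d}$ and $\widehat{\mathcal{Y}}=\{0,1\}$. Identifying $S$ with its indicator $\1_S\in\mathbb{F}_2^d$, we have $\chi_S(\vx)=\langle\1_S,\vx\rangle\bmod 2$, so for fixed $\vx,\vx'$ the map $\1_S\mapsto(\chi_S(\vx),\chi_S(\vx'))$ is $\mathbb{F}_2$-linear and is surjective onto $\mathbb{F}_2^2$ precisely when $\vx,\vx'$ are $\mathbb{F}_2$-linearly independent; for a surjective $\mathbb{F}_2$-linear map the uniform measure on the source pushes forward to the uniform measure on the target, so in that case $(\chi_S(\vx),\chi_S(\vx'))$ for $S\sim\operatorname{Unif}(\mathcal{C})$ is exactly $\operatorname{Unif}(\{0,1\})\otimes\operatorname{Unif}(\{0,1\})$. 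Over i.i.d.\ uniform $\vx,\vx'$, dependence occurs only if $\vx=\vzero$, $\vx'=\vzero$, or $\vx=\vx'$, which by a union bound has probability at most $3\cdot 2^{-d}=\eta$. Feeding this $\eta$ into \Cref{thm:pairwise-ind-sq-lower} already yields $\Omega(\tau^2 2^d)$ queries for distinguishing $\mathcal{D}_{\chi_{S^*}}$ from the randomly-labeled distribution.

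To upgrade this to a learning lower bound I would reuse the adversarial-oracle idea inside the proof of \Cref{thm:pairwise-ind-sq-lower}. Run the purported SQ learner $\mathcal{A}$ against the oracle that answers every query $g:\{0,1\}^d\times\{0,1\}\to[-1,1]$ with the reference value $\overline{g}$ equal to its expectation under the null model ($\vx$ uniform, $y$ an independent fair coin). Writing $g(\vx,y)=a(\vx)+b(\vx)(-1)^y$ with $\|b\|_\infty\le 1$, one has $\E_{(\vx,y)\sim\mathcal{D}_{\chi_S}}[g]=\E_\vx[a]+\widehat{b}(S)$ where $\widehat{b}(S):=\E_\vx[b(\vx)\chi_S^\pm(\vx)]$ and $\chi_S^\pm:=(-1)^{\chi_S}$; since $\sum_S\widehat{b}(S)^2=\E_\vx[b^2]\le 1$, we have $|\widehat{b}(S)|>\tau$ for fewer than $\tau^{-2}$ sets $S$, so $\overline{g}=\E_\vx[a]$ is a valid $\tau$-tolerant answer for all but at most $\tau^{-2}$ of the $2^d$ parities. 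Hence after $q$ queries the transcript — and therefore the hypothesis $h$ that $\mathcal{A}$ outputs — is identical to what $\mathcal{A}$ produces on $\mathcal{D}_{\chi_S}$ for all but at most $q\tau^{-2}$ parities. Finally, $h:\{0,1\}^d\to\{0,1\}$ has classification error $\le\epsilon$ against $\chi_S$ iff $\widehat{h^\pm}(S)\ge 1-2\epsilon$ with $h^\pm:=(-1)^h$, and since $\sum_S\widehat{h^\pm}(S)^2=1$ this holds for at most $(1-2\epsilon)^{-2}$ sets $S$. So whenever $q\tau^{-2}<2^d-(1-2\epsilon)^{-2}$ there is a parity $\chi_{S^*}$ for which the adversarial oracle is valid yet $\mathcal{A}$ does not output an $\epsilon$-accurate hypothesis, contradicting the learning guarantee; for $\epsilon$ a sufficiently small constant this forces $q=\Omega(\tau^2 2^d)$.

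\textbf{Main obstacle.} There is no deep difficulty — this is a classical result — but the one genuinely load-bearing point is the last step: a purely black-box appeal to the distinguishing bound of \Cref{thm:pairwise-ind-sq-lower} is not quite sufficient, since under the random-label oracle the learner could, for the particular realized hidden $S$, happen to output a good hypothesis. The resolution is the Parseval observation that only $(1-2\epsilon)^{-2}=O(1)$ parities can be simultaneously $\epsilon$-close to any single function $h$, leaving the adversary free to select the hidden parity among the exponentially many transcript-consistent ones on which $\mathcal{A}$ is forced to err. I would also note that the resulting argument is self-contained and does not strictly require \Cref{thm:pairwise-ind-sq-lower} as a black box: it is essentially the original orthogonality-of-characters argument of \cite{kearns1998efficient}, with the $(1-\eta)$-pairwise-independence computation above serving only to package the setup in the excerpt's notation.
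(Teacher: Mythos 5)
The paper does not actually prove \Cref{thm:parity_SQ_lower_bound}; it is stated as a classical result and cited to \cite{blum1994weakly,kearns1998efficient,chen2022hardness}, with the paper's own hardness proof (\Cref{thm:hard_to_learn}) invoking it as a black box. So there is no ``paper's own proof'' to compare against. That said, your reconstruction is correct and is, as you note yourself, essentially Kearns's original orthogonality-of-characters argument, reorganized so the setup plugs into \Cref{thm:pairwise-ind-sq-lower}. The $(1-\eta)$-pairwise-independence calculation with $\eta=3\cdot 2^{-d}$ is right ($(\chi_S(\vx),\chi_S(\vx'))$ is uniform on $\{0,1\}^2$ exactly when $\vx,\vx'$ are $\mathbb{F}_2$-linearly independent, which fails only on $\{\vx=\vzero\}\cup\{\vx'=\vzero\}\cup\{\vx=\vx'\}$), and the decomposition $g(\vx,y)=a(\vx)+b(\vx)(-1)^y$ with $\sum_S \widehat{b}(S)^2\le 1$ correctly identifies how the adversarial oracle can answer safely for all but $\le\tau^{-2}$ parities per query.

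The one load-bearing observation you flag is indeed the real content: \Cref{thm:pairwise-ind-sq-lower} only bounds the query complexity of \emph{distinguishing}, and passing to a \emph{learning} lower bound is not automatic, since the learner is not obliged to identify the target in order to output a low-error hypothesis. Your resolution via Parseval on $h^\pm=(-1)^h$ --- that a single Boolean hypothesis can have $\widehat{h^\pm}(S)\ge 1-2\epsilon$ for at most $(1-2\epsilon)^{-2}=O(1)$ sets $S$, so the adversary can always choose a transcript-consistent parity on which $h$ errs --- is the standard and correct fix, and it is precisely the step that the excerpt's pairwise-independence framework (which is aimed at a decision problem) leaves implicit when the downstream theorem (\Cref{thm:hard_to_learn}) says ``learning \dots up to mean squared error \dots suffices to distinguish.'' Two cosmetic nits: ``fewer than $\tau^{-2}$'' should be ``at most $\tau^{-2}$'', and you should say explicitly that you fix $\mathcal{A}$'s internal randomness (or argue over the best coin sequence) so that the transcript against the adversarial oracle is a single deterministic object shared by all surviving parities. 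Neither affects the conclusion.
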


For Boolean inputs $\vx_b \in \{0,1\}^d$, the parity function $\chi_S(\vx_b)$ can be constructed as a single hidden layer neural network with $O(d)$ nodes. Ideally, one would want to replicate this over real-valued inputs, but since ReLU networks are continuous, this construction cannot be made exactly. \cite{daniely2021local} show that, in practice, one can form a network which is equivalent to the $\sgn$ function for all but a small percentage of inputs. Combined with an indicator function that zeros out the output wherever such inputs cannot be rounded exactly, they show that equivalent hardness results can be obtained in real-valued settings. This construction was extended to SQ settings by \cite{chen2022hardness}. 

In our setting where the underlying distribution is uniformly drawn from the space-filling manifold described in \Cref{app:space_filling_curve}, we can resort to a simpler construction given that most input values are either $0$ or $1$ at any point in the space-filling manifold. This lets us identify approximate parity functions which for all but an exponentially small fraction of inputs is equivalent to the Boolean parity function. 

To achieve this goal, we need to map Boolean parity functions $\chi_S:\{0,1\}^{n_b} \to \{0,1\}$ to real-valued neural network functions $f:\mathbb{R}^n \to \mathbb{R}$ that approximate the parity functions appropriately. In this construction, for given intrinsic dimension $d$ and bound on the reach $R = O(n^{\alpha})$ for some $\alpha<0.5$, we set $n = \ceil{4R^2}n_b+d-1$ in the construction given in \Cref{app:space_filling_curve} ($n$ chosen so that the number of bits in the Gray code $n_R$ is equal to $n_b$). Each input entry is repeated $\ceil{4R^2}$ times in this construction, so for a given input $\vx \in \mathbb{R}^n$, we define $\mathcal{P}:\mathbb{R}^{n} \to \mathbb{R}^{n_b}$ as an operation, which selects one of the $\ceil{4R^2}$ many equivalent inputs for each of the $n_b$ potentially unique values. That is, for segments of the manifold taking the form of \Cref{eq:space_filling_manifold_equation}, each segment is a sum of elements $\vb_i = G(i)^{\oplus \delta_b} \oplus \bm 0_{n - n_b}$ where $G:[2^{n_b}] \to \{0,1\}^{n_b}$ is a Gray code over $n_b$ bits. Setting $\mathcal{P} \vx = [\vx]_{:n_b}$ to capture the first $n_b$ elements then suffices to perform such an operation.

The intuition for our construction is as follows: We have to show that at most points on the manifold, inputs are with high probability equal to some Boolean string. If we choose the marginal distribution over the first $k$ input locations of $\mathcal{P}\vx$ with $k<n_b$ sufficiently smaller than $n_b$, then with high probability, these will always be constantly set to either value $0$ or $1$ in the segments of the manifold (\Cref{eq:space_filling_manifold_equation}). This is because the reflected Gray code is formed by changing the right-most bits first whenever possible so the left-most bits are constant along most segments. We formalize this below.  
\begin{lemma}\label{lem:high_prob_boolean_string}
    Let $\mathcal{D}_{\mathcal{M}_n}$ denote the uniform distribution over the $(\alpha,d)$-sequence of manifolds $\mathcal{M}_n$ constructed in \Cref{def:manifold_sequence} for given bounds on the reach $R=O(n^\alpha)$ and intrinsic dimension $d$. Given $\vx \in \mathcal{M}_n$ constructed over the binary code over $n_b$ bits, let $[\mathcal{P}\vx]_{:n_b-t}$ denote the vector consisting of the first $n_b-t$ entries of $\mathcal{P}\vx$ for $t \in [n_b]$. Then, with probability $1-O(2^{-t})$, a draw of $\vx \sim \mathcal{D}_{\mathcal{M}_n}$ will have the property that $[\mathcal{P}\vx]_{:n_b-t} \in \{0,1\}^{n_b-t}$. Additionally, with probability $2^{t-n_b} + O(2^{-t})$ over independent draws $\vx, \vx' \sim \mathcal{D}_{\mathcal{M}_n}$, $[\mathcal{P}\vx]_{:n_b-t} \neq [\mathcal{P}\vx']_{:n_b-t}$ and the resulting values will differ in at least one location.
\end{lemma}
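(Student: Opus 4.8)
The plan is to exploit the structure of the binary-reflected Gray code underlying the construction in \Cref{app:space_filling_curve}: along the curve the most significant Gray-code bits flip only rarely, so on all but an exponentially small fraction of segments the first $n_b-t$ coordinates of $\mathcal{P}\vx$ are frozen at Boolean values. Concretely, on the segment indexed by $k$ in \Cref{eq:space_filling_manifold_equation} the point $\vx(t')$ traverses a quarter-circle from the midpoint of $\vb_{k-1},\vb_k$ to the midpoint of $\vb_k,\vb_{k+1}$, and a direct computation (along the lines of the proof of \Cref{lem:manifold_reach_verification}) shows that for a coordinate $j$ outside both the block where $\vb_{k-1},\vb_k$ differ (the ``incoming'' flip block) and the block where $\vb_k,\vb_{k+1}$ differ (the ``outgoing'' flip block) one has $[\vb_{k-1}]_j=[\vb_k]_j=[\vb_{k+1}]_j$, so $[\vx(t')]_j\in\{0,1\}$ is constant, whereas for $j$ in the incoming (resp.\ outgoing) block $[\vx(t')]_j=\tfrac12\pm\tfrac12\sin t'$ (resp.\ $\tfrac12\pm\tfrac12\cos t'$), which is non-Boolean except at one endpoint of $[0,\pi/2]$. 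Since $\mathcal{P}\vx=[\vx]_{:n_b}$ reads off the first copy of the Gray-code word $G(k)$, the restriction $[\mathcal{P}\vx]_{:n_b-t}$ equals $[G(k)]_{:n_b-t}\in\{0,1\}^{n_b-t}$ whenever neither the incoming nor the outgoing flip bit of segment $k$ lands among the first $n_b-t$ coordinates.

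It then remains to count such ``bad'' segments. A standard property of the binary-reflected Gray code is that its $q$-th most significant bit flips on only a $\Theta(2^{q-n_b})$ fraction of the $2^{n_b}$ (cyclic) transitions, so the $n_b-t$ left-most bits together change on at most a $2^{-t}$ fraction of transitions. Each such transition is incident to exactly two of the $2^{n_b}$ equal-arc-length segments, and $\mathcal{D}_{\mathcal{M}_n}$ restricted to the curve is uniform over segments (the $P_d$ factor is irrelevant since $\mathcal{P}$ reads only curve coordinates), so a draw $\vx\sim\mathcal{D}_{\mathcal{M}_n}$ lies on a bad segment with probability $O(2^{-t})$; on every good segment $[\mathcal{P}\vx]_{:n_b-t}\in\{0,1\}^{n_b-t}$, giving the first claim.

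For the second claim I would draw $\vx,\vx'$ independently, so they lie on independent uniform segments $k,k'\in[2^{n_b}]$ and hence $G(k),G(k')$ are independent and uniform on $\{0,1\}^{n_b}$. A union bound over the event that either segment is bad contributes $O(2^{-t})$; off that event both restrictions are the Boolean prefixes $[G(k)]_{:n_b-t}$ and $[G(k')]_{:n_b-t}$, which, being independent and uniform on $\{0,1\}^{n_b-t}$, coincide with probability $2^{-(n_b-t)}=2^{t-n_b}$. (On a bad segment some coordinate of $[\mathcal{P}\vx]_{:n_b-t}$ is non-constant and sweeps a continuum, so it matches an independent draw only on a null set, which is why bad segments contribute nothing beyond the union bound.) Hence, except with probability $2^{t-n_b}+O(2^{-t})$, the two restricted inputs are distinct, and by the first claim they lie in $\{0,1\}^{n_b-t}$.

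The step I expect to be the main obstacle is the Gray-code combinatorics: pinning down exactly which bit is flipped at each cyclic transition, correctly handling the wrap-around transition (which flips the single most significant bit), and bounding the total number of segments touched by a transition that flips one of the $n_b-t$ leading bits. Everything else -- the per-coordinate description of $\vx(t')$ on a segment, the uniform-over-segments structure of $\mathcal{D}_{\mathcal{M}_n}$ on the curve, and the final union-bound arithmetic -- is routine.
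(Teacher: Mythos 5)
Your proposal is correct and follows essentially the same route as the paper: both proofs rest on the binary-reflected Gray code's property that the leading $n_b-t$ bits change only rarely, identify ``bad'' segments as those where the incoming or outgoing flip lands among the first $n_b-t$ curve coordinates, and combine this $O(2^{-t})$ union bound with a near-uniformity statement about Gray-code prefixes. The only cosmetic difference is in the second claim's bookkeeping --- you bound the collision probability by the unconditional collision probability of exact prefixes (which is $2^{t-n_b}$ since $G$ is a bijection) plus a union bound over bad draws, while the paper instead establishes the pointwise lower bound $\mathbb{P}\bigl[[\mathcal{P}\vx]_{:n_b-t}=\vz\bigr]\ge(2^t-8)/2^{n_b}$ and feeds it through a $\sum_\vz p(\vz)^2\le\max_\vz p(\vz)$ estimate; both yield the same $2^{t-n_b}+O(2^{-t})$ bound.
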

\begin{proof}
    In the binary reflected Gray code $G:[2^{n_b}] \to \{0,1\}^{n_b}$ over $n_b$ bits, at any location $i \in [2^{n_b}]$, there exists a span of integers $[a,b)$ containing $i$ of length $b-a=2^t$ such that $[G(j)]_{n_b-t}=[G(j')]_{n_b-t}$ is equal for all $j,j' \in [a,b) $. Furthermore, by removing the endpoints where for any $j,j' \in [a+1,b-1)$, any entry of $[G(j)]_{k}$ for $k \in [n_b-t]$ will also have the property that $[G(j)]_{k}=[G(j')]_{k}$ (see \Cref{fig:gray-code} for example for a visualization of this property). 
    The value at a given point in a segment of the manifolds in \Cref{eq:space_filling_manifold_equation} is not equal to zero or one only when bitstring indices are differing in three adjacent points in the Gray code. Given a random point $\vx \sim \mathcal{D}_{\mathcal{M}_n}$, it will fall within a given segment of the Gray code spanned over elements $i-1, i, i+1$ of the Gray code for $i \sim \operatorname{Unif}([2^{n_b}])$. Thus, the probability that $[\mathcal{P}\vx]_{:n_b-t} \in \{0,1\}^{n_b-t}$ is at least $1-O(2^{-t})$. 

    To prove the additional fact, note that for any $\vz \in \{0,1\}^{n_b-t}$, we have that
    \begin{equation} \label{eq:upper_bound_on_prob}
        \mathbb{P}[[\mathcal{P}\vx]_{:n_b-t} = \vz] \geq \frac{2^t-8}{2^{n_b}}.
    \end{equation}
    This follows from the previously stated property that at any location $i \in [2^{n_b}]$ in the gray code, there exists a span of integers $[a,b)$ containing $i$ of length $b-a=2^t$ such that $[G(j)]_{k}=[G(j')]_{k}$ is equal for all $j,j' \in [a+1,b-1) $ and all $k \in [n_b-t]$. From this span $[a,b)$ we remove the first and last three indices to get the span $[a+4,b-4)$ over which $[G(i)]_{n_b-t} = [G(i+k)]_{n_b-t} $ for all $i \in [a+4,b-4)$ and $k \in [-2,2]$. I.e. this span is chosen to ensure that $\vx_{n_b-t}$ is constant in \Cref{eq:space_filling_manifold_equation} over those segments of the Gray code. From here, we have that over independent draws $\vx, \vx' \sim \mathcal{D}_{\mathcal{M}_n}$:
    \begin{equation}
    \begin{split}
        \mathbb{P}_{\vx, \vx' \sim \mathcal{D}_{\mathcal{M}_n}}\left[ [\mathcal{P}\vx]_{:n_b-t} 
        = [\mathcal{P}\vx']_{:n_b-t}, [\mathcal{P}\vx]_{:n_b-t} \in \{0,1\}^{n_b-t} \right] &= \sum_{\vz \in \{0,1\}^{n_b-t}} \mathbb{P}\left[[\mathcal{P}\vx]_{:n_b-t} = \vz \right]^2 \\
        &\leq \max_{\vz \in \{0,1\}^{n_b-t}} \mathbb{P}\left[[\mathcal{P}\vx]_{:n_b-t} = \vz \right] \\
        &\leq 1 - (2^{n_b-t}-1)\frac{2^t-8}{2^{n_b}} \\
        &\leq 2^{t-n_b}+8(2^{-t}).
    \end{split}
    \end{equation}
    The second line above follows from Hölder's inequality. The third line above applies \Cref{eq:upper_bound_on_prob}. Noting that the last term is $2^{t-n_b} + O(2^{-t})$ completes the proof.
\end{proof}

The above shows that with exponentially high probability, samples drawn from the distribution over the space-filling manifold will be equivalent to those drawn uniformly over Boolean inputs. This lets us complete the proof of hardness.

\begin{theorem}
    Let $\mathcal{D}_{\mathcal{M}_n}$ denote the uniform distribution over the $(\alpha, d)$ sequence of manifolds $\mathcal{M}_n$ constructed in \Cref{def:manifold_sequence} where $\Reach{\mathcal{M}_n} = O(n^\alpha)$ for $\alpha<0.5$. 
    Any SQ algorithm $\mathcal{A}$ capable of learning the class of linear width single hidden layer $\operatorname{ReLU}$ neural networks under this sequence of distributions up to mean squared error sufficiently small ($\epsilon/8$ suffices) with queries of tolerance $\tau$ must use at least $\Omega(\tau^2 2^{n^{\Omega(1)}})$ queries. 
\end{theorem}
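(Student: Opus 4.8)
The plan is to use the pairwise--independence route of \Cref{thm:pairwise-ind-sq-lower} (the mechanism behind the SQ hardness of parities, \Cref{thm:parity_SQ_lower_bound}): I would realize a large family of honest single hidden layer ReLU networks that coincide \emph{exactly} with flipped Boolean parities on all but an exponentially small fraction of $\mathcal{M}_n$, show this family is $(1-\eta)$--pairwise independent with $\eta$ exponentially small using the Gray--code stability estimates already proved in \Cref{lem:high_prob_boolean_string}, and then invoke \Cref{thm:pairwise-ind-sq-lower} together with the routine reduction from learning to distinguishing. Fix $n$ as in \Cref{def:manifold_sequence}, so the underlying Gray code has $n_b = \Theta(n^{1-2\alpha}) = n^{\Omega(1)}$ bits (this is where $\alpha<1/2$ enters), and set $t = \lfloor n_b/2\rfloor$.

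First I would build the hard class. For $S \subseteq [n_b-t]$ let $m_S(\vx) = \sum_{i\in S}[\mathcal{P}\vx]_i$, a linear readout of the first $n_b$ coordinates, and let $h_S(m) = \operatorname{ReLU}(m) + \sum_{j=1}^{|S|}(-1)^j 2\operatorname{ReLU}(m-j)$ be the amplitude--one triangle wave, which takes the values $0,1,0,1,\dots$ at the integers $0,1,\dots,|S|$ and stays in $[0,1]$ on all of $[0,|S|]$. For $S \subseteq [n_b-t]$ and a flip bit $b\in\{0,1\}$ put $f_{S,b}(\vx) = (1-2b)\,h_S(m_S(\vx)) + b$. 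Each $f_{S,b}$ is a single hidden layer ReLU network of width $|S|+1 = O(n)$ with all weights and biases of magnitude $O(n)$, hence lies in the class of the statement (pad with zero units if exactly linear width is desired). Because $\mathcal{M}_n \subset [0,1]^n$ and the arcs of the curve interpolate inside the cube, $m_S(\vx)\in[0,|S|]$ always, so $|f_{S,b}(\vx)|\le 1$ uniformly on $\mathcal{M}_n$; and whenever $[\mathcal{P}\vx]_{:n_b-t}\in\{0,1\}^{n_b-t}$, the quantity $m_S(\vx)$ is an integer in $\{0,\dots,|S|\}$, so $f_{S,b}(\vx) = \chi_S([\mathcal{P}\vx]_{:n_b-t})\oplus b$, an exact flipped Boolean parity.

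Next I would show that $\mathcal{C} = \{f_{S,b} : S\subseteq[n_b-t],\, b\in\{0,1\}\}$ is $(1-\eta)$--pairwise independent with respect to $\mathcal{D}_{\mathcal{M}_n}$ with $\widehat{\mathcal{Y}}=\{0,1\}$ and $\eta = 2^{-n^{\Omega(1)}}$. For independent $\vx,\vx'\sim\mathcal{D}_{\mathcal{M}_n}$, write $\vz = [\mathcal{P}\vx]_{:n_b-t}$, $\vz' = [\mathcal{P}\vx']_{:n_b-t}$. By \Cref{lem:high_prob_boolean_string} and a union bound both $\vz,\vz'$ are Boolean except with probability $O(2^{-t})$, and, conditioned on that, $\vz\neq\vz'$ except with probability $2^{t-n_b}+O(2^{-t})$ (the near--uniformity of \eqref{eq:upper_bound_on_prob}). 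Off these bad events, $(f_{S,b}(\vx), f_{S,b}(\vx')) = (\chi_S(\vz)\oplus b,\ \chi_S(\vz')\oplus b)$; since $\vz\neq\vz'$, $\chi_S(\vz\oplus\vz')$ is an unbiased bit over $S\sim\operatorname{Unif}$ and $b$ is an independent unbiased bit, so this pair is \emph{exactly} $\operatorname{Unif}(\{0,1\})\otimes\operatorname{Unif}(\{0,1\})$ over $(S,b)\sim\operatorname{Unif}(\mathcal{C})$. Hence $\eta = O(2^{-t}+2^{t-n_b}) = O(2^{-n_b/2}) = 2^{-n^{\Omega(1)}}$, while $|\mathcal{C}| = 2^{n_b-t+1} = 2^{n_b/2+1}$.

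Finally, \Cref{thm:pairwise-ind-sq-lower} gives that any SQ algorithm distinguishing $\mathcal{D}_{f^*}$ (for unknown $f^*\in\mathcal{C}$) from the randomly labelled $\mathcal{D}_{\operatorname{Unif}(\mathcal{C})}$ with tolerance--$\tau$ queries needs at least $\tau^2/(2\eta) = \Omega(\tau^2 2^{n^{\Omega(1)}})$ queries, and I would close with the standard reduction from learning to distinguishing: a learner achieving mean squared error $\le\epsilon/8$ produces a hypothesis $\hat f$ with $\|\hat f - f^*\|_{\mathcal{D}}^2\le\epsilon/8$ on $\mathcal{D}_{f^*}$, whereas for \emph{any} fixed $\hat f$ against $\mathcal{D}_{\operatorname{Unif}(\mathcal{C})}$ one has $\mathbb{E}_{(S,b),\vx}[(\hat f(\vx)-f_{S,b}(\vx))^2]\ge \tfrac14(1-O(2^{-t})) > \epsilon/8$ for $\epsilon$ a sufficiently small constant (for Boolean $\vz$ the label $f_{S,b}(\vx)$ is a fair $\{0,1\}$ coin over $(S,b)$), so one extra, suitably normalized SQ query estimating the loss of the returned hypothesis separates the two regimes; thus $\mathcal{A}$ must make $\Omega(\tau^2 2^{n^{\Omega(1)}})$ queries. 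The main obstacle I anticipate is the pairwise--independence bookkeeping --- converting \Cref{lem:high_prob_boolean_string} into the clean statement that two independent samples project to \emph{distinct Boolean strings} with overwhelming probability, and checking that a single choice of $t$ makes $\eta$ and $|\mathcal{C}|^{-1}$ simultaneously $2^{-n^{\Omega(1)}}$ --- together with the routine but necessary verifications that the triangle--wave networks meet the bounded width/weight constraints and that $f_{S,b}$ is uniformly bounded on $\mathcal{M}_n$.
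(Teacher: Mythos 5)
Your proposal is correct and follows essentially the same route as the paper's own proof in \Cref{app:proofs_for_bounded_curvature}: realize (flipped) parities over the Gray-code coordinates $[\mathcal{P}\vx]_{:n_b-t}$ via a piecewise-linear triangle wave in a single ReLU layer, use \Cref{lem:high_prob_boolean_string} with $t=n_b/2$ to show that two independent samples project to distinct Boolean strings with probability $1-2^{-\Omega(n_b)}$, invoke \Cref{thm:pairwise-ind-sq-lower}, and close with the standard learn-to-distinguish reduction. One small but genuine improvement over the paper's write-up is your flip bit $b$: the paper asserts that, conditioned on $\vz,\vz'$ Boolean and distinct, $(f_S(\vx),f_S(\vx'))$ is exactly $\operatorname{Unif}(\{0,1\}^2)$, which is not quite right when one of $\vz,\vz'$ is the all-zeros string (an event of probability $2^{-(n_b-t)}$, exponentially small but not addressed). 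Adding the independent uniform bit $b$ makes the marginal of each label uniform for \emph{every} fixed $\vz$, so only the distinctness condition $\vz\neq\vz'$ is needed for pairwise independence, cleanly absorbing that edge case into the argument.
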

\begin{proof}
    Given the sequence of manifolds $\{\mathcal{M}_n\}$ from the construction in \Cref{def:manifold_sequence}, we have $n_b = \Omega(n^{1-2\alpha})$ which is chosen to fit the bounds on the reach of the manifold. 

    Given a parity function $\chi_S:\{0,1\}^{d}\to \{0,1\}$ taking the form
    \begin{equation}
        \chi_S(\vx_b) = \sum_{i \in S} [\vx_b]_i \mod 2,
    \end{equation}
    consider its continuous approximation $\widetilde{\chi}_S:[0,1]^{d} \to \{0,1\}$ defined as
    \begin{equation}
        \widetilde{\chi}_S(\vx)=
        \begin{cases}
                    \sum_{i \in S} \vx_i - k & \sum_{i \in S} \vx_i \in [k, k+1], k \in \{0,2,4,...,d\}\\
                    k + 1 - \sum_{i \in S} \vx_i & \sum_{i \in S} \vx_i \in [k, k+1], k \in \{1,3,5,...,d-1\}
                \end{cases} \; .
    \end{equation}
    Note, that the above is valid for $d$ even and a similar equation can be obtained for $d$ odd. The above is also piecewise linear and can be constructed as a single hidden layer $\operatorname{ReLU}$ network with $O(d)$ width. We consider learning the class $\mathcal{C}$ of single hidden layer networks consisting of
    \begin{equation}
        \mathcal{C} = \left\{f_S: S \subseteq [n_b-t] \right\} \text{ where } f_S(\vx) = \widetilde{\chi}_S\left( [\mathcal{P}\vx]_{:n_b-t} \right).
    \end{equation}
    Setting $t = n_b/2$ suffices by \Cref{lem:high_prob_boolean_string} to guarantee that $[\mathcal{P}\vx]_{:n_b/2} $ is a Boolean string with probability $1-O(2^{-\Omega(n_b)})$. Furthermore, for $\vx, \vx' $ drawn i.i.d. from $\mathcal{D}_{\mathcal{M}_n}$, $[\mathcal{P}\vx]_{:n_b/2} \neq [\mathcal{P}\vx']_{:n_b/2} $ with probability $1-O(2^{-\Omega(n_b)})$. Conditioned on this event, the distribution of $(f_S(\vx), f_S(\vx'))$ is equal to $\operatorname{Unif}(\{0,1\}^2)$. Since $n_b = \Omega(n^{1-2\alpha})$, we can apply \Cref{thm:pairwise-ind-sq-lower} noting that learning the function class up to MSE $\epsilon$ sufficiently small suffices to distinguish the distribution task in \Cref{thm:pairwise-ind-sq-lower}.
\end{proof}

\subsection{Cryptographic hardness reduction}
\label{app:crypto_equivalent_hardness}

Here, we will prove hardness results for learning single hidden layer networks under cryptographic hardness assumptions following the methodology in \cite{daniely2021local}. First, we detail the construction of Goldreich's pseudorandom generator (PRG) \cite{goldreich2011candidate} which are used as the basis for the hard to learn class of single hidden layer neural networks in \cite{daniely2021local}. In fact, \cite{daniely2021local} show that given random Boolean inputs, a neural network with width $\omega(1)$ suffices to construct these hard to learn functions. From here, we show that the manifold constructed in \Cref{sec:manifold_reconstruction} can produce inputs that are approximately uniformly distributed over the Boolean cube when restricted to a certain set of input locations. This lets us directly apply the results of \cite{daniely2021local} to this distribution.

\paragraph{Pseudorandomness assumption}
The pseudorandom functions are constructed over an $(n,m,k)$-hypergraph with $n$ vertices $[n]$ with $m$ ordered hyperedges $S_1,\ldots,S_m$, each having cardinality $k$ and no repeated entries. Hypergraphs are drawn from the Erdős–Rényi distribution $\mathcal{G}_{n,m,k}$ where each hyperedge is drawn i.i.d. from the set of $n!/(n-k)!$ possible hyperedges (equivalent to ordered sets).

\begin{definition}[Goldreich's pseudorandom generator (PRG) \cite{goldreich2011candidate}]
    Given constant integer $k>0$, predicate $P:\{0,1\}^k \rightarrow \{0,1\}$, and a $(n,m,k)$-hypergraph $G$, Goldreich's pseudorandom generator (PRG) is the function $f_{P,G}: \{0,1\}^n \rightarrow \{0,1\}^m$, such that given input $\vx \in \{0,1\}^n$ returns $f_{P,G}(\vx) = (P(\vx_{S_1}),\ldots,P(\vx_{S_m}))$. The PRG has polynomial stretch if $m=n^a$ for some $a>1$.
\end{definition}

Denoting the collection of functions $f_{P,G}$ over $(n,m,k)$-hypergraphs as $\mathcal{F}_{P,n,m}$ then $\mathcal{F}_{P,n,m}$ is an $\varepsilon$-pseudorandom generator ($\varepsilon$-PRG) if every polynomial-time probabilistic algorithm $\mathcal{A}$ has advantage at most $\epsilon$ in distinguishing pairs $(G, f_{P,G}(\vx))$ from random pairs $(G, \vy)$ where $\vy$ is drawn i.i.d. from the uniform distribution on $\{0,1\}^m$. This is formalized below.

\begin{definition}[$\varepsilon$-pseudorandom generator ($\varepsilon$-PRG) \cite{daniely2021local}]
    $\mathcal{F}_{P,n,m}$ is an $\varepsilon$-pseudorandom generator ($\varepsilon$-PRG) if for every polynomial-time probabilistic algorithm $\mathcal{A}$, it holds that
    \begin{equation}
        \left| \mathbb{P}_{G \sim \mathcal{G}_{n,m,k}, \vx \sim \mathcal{B}_n}\left[\mathcal{A}(G, f_{P,G}(\vx)) = 1 \right] - \mathbb{P}_{G \sim \mathcal{G}_{n,m,k}, \vy \sim \mathcal{B}_m}\left[\mathcal{A}(G, \vy) = 1 \right] \right| \leq \epsilon,
    \end{equation}
    where $\mathcal{B}_{k}$ denotes the uniform distribution over bitstrings $\{0,1\}^k$.
\end{definition}

The cryptographic hardness assumption posits that there does exist such a class as above for which it is hard to distinguish between the cases where $\vy$ is completely random or the output of a randomly drawn function $f_{P,G}$.

\begin{assumption}[Existence of $\epsilon$-PRG \cite{daniely2021local}]
\label{ass:localPRG}
For every constant $s>1$, there exists a constant $k$ and a predicate $P:\{0,1\}^k \rightarrow \{0,1\}$, such that $\mathcal{F}_{P,n,n^s}$ is $\frac{1}{3}$-PRG.
\end{assumption}

\paragraph{Hard to learn neural network function class}
We follow the procedure in the previous subsection to map hard to learn Boolean functions to the task of learning real-valued functions where inputs are drawn from the data manifold. In this procedure, we must show that instances where a given input does not correspond to a Boolean string occur with a vanishingly small probability. Following the previous construction, for given intrinsic dimension $d$ and bound on the reach $R = O(n^{\alpha})$ for some $\alpha<0.5$, we again set $n = \ceil{4R^2}n_b+d-1$ in the construction given in \Cref{app:space_filling_curve} ($n$ chosen so that the number of bits in the Gray code $n_R$ is equal to $n_b$). As each input entry is repeated $\ceil{4R^2}$ times in this construction, we recall the definition of $\mathcal{P}:\mathbb{R}^{n} \to \mathbb{R}^{n_b}$ selecting one of the $\ceil{4R^2}$ many equivalent inputs for each of the $n_b$ potentially unique values. The following lemma then guarantees upon that taking the first $n_b-t$ entries, with probability $1-O(2^{-t})$ the distribution of $[\mathcal{P}\vx]_{:n_b-t}$ will be equivalent to the uniform distribution over Boolean strings.

\begin{lemma}\label{lem:high_prob_boolean_uniform_distribution}
    Let $\mathcal{D}_{\mathcal{M}_n}$ denote the uniform distribution over the $(\alpha,d)$-sequence of manifolds $\mathcal{M}_n$ constructed in \Cref{def:manifold_sequence} for given bounds on the reach $R=O(n^\alpha)$ and intrinsic dimension $d$. Given $\vx \in \mathcal{M}_n$ constructed over the binary code over $n_b$ bits, let $[\mathcal{P}\vx]_{:n_b-t}$ denote the vector consisting of the first $n_b-t$ entries of $\mathcal{P}\vx$ for $t \in [n_b]$. Denoting $\mathcal{P}\mathcal{D}_{\mathcal{M}_n}$ as the distribution of $[\mathcal{P}\vx]_{:n_b-t}$ for $\vx \sim \mathcal{D}_{\mathcal{M}_n}$, we have that with probability at least $1-2^{3-t}$, $\mathcal{P}\mathcal{D}_{\mathcal{M}_n}$ returns a uniformly random Boolean string:
    \begin{equation}
        \mathcal{P}\mathcal{D}_{\mathcal{M}_n} \sim \begin{cases}
            \mathcal{B}_{n_b-t} & \text{w.p. } 1-2^{3-t}, \\
            \mathcal{D}_{\rm rem} & \text{w.p. } 2^{3-t},
        \end{cases}
    \end{equation}
    where $\mathcal{B}_{k}$ denotes the uniform distribution over bitstrings $\{0,1\}^k$ and $\mathcal{D}_{\rm rem}$ is an arbitrary distribution handling the event where draws from the distribution $\mathcal{P}\mathcal{D}_{\mathcal{M}_n}$ are not uniform over the Boolean string.
\end{lemma}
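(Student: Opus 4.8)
The plan is to strengthen the counting argument behind \Cref{lem:high_prob_boolean_string} into an \emph{exact} statement by exploiting the fact that the first $n_b-t$ bits of the binary‑reflected Gray code (\Cref{def:gray_code}) partition the index set $[2^{n_b}]$ into blocks of \emph{exactly} equal size. Recall that $\mathcal{D}_{\mathcal{M}_n}$ is the uniform (Hausdorff‑measure) distribution on $\mathcal{M}_n=\mathcal{M}_{n_R}\times P_d$, where the curve $\mathcal{M}_{n_R}$ of \Cref{eq:space_filling_manifold_equation} is a union of $2^{n_b}$ circular arcs of equal length, arc $k$ being spanned by $\vb_{k-1},\vb_k,\vb_{k+1}$ with each $\vb_i$ obtained by repeating the Gray codeword $G(i)$. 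Sampling $\vx\sim\mathcal{D}_{\mathcal{M}_n}$ therefore amounts to drawing an arc index $k\sim\operatorname{Unif}([2^{n_b}])$, then a position along arc $k$, with the $P_d$‑coordinate independent and irrelevant to $[\mathcal{P}\vx]_{:n_b-t}=[\vx]_{:n_b-t}$.

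The first step is to record the reflected‑code structure: writing $j=n_b-t$, the first $j$ bits of $G(i)$ equal $G'_j\!\left(\ceil{i/2^t}\right)$, where $G'_j$ is the Gray code on $j$ bits. Hence these $j$ bits are constant on each block $B_m:=\{m2^t+1,\dots,(m+1)2^t\}$, equal there to a codeword $\vz_m\in\{0,1\}^j$ depending only on $m$, and $m\mapsto\vz_m$ is a bijection of $\{0,\dots,2^j-1\}$ onto $\{0,1\}^j$; in particular $i\mapsto[G(i)]_{:j}$ is exactly $2^t$‑to‑one with contiguous, equal‑size fibres $B_m$. The second step transfers this to the curve: take the good event $\mathcal{E}$ to be that the arc index $k$ lies in the inner part $B_m^\circ:=\{m2^t+5,\dots,(m+1)2^t-4\}$ of its block. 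For such $k$ we have $k-1,k,k+1\in B_m$, so $\vb_{k-1},\vb_k,\vb_{k+1}$ agree on coordinates $1,\dots,j$, and substituting into \Cref{eq:space_filling_manifold_equation} shows each of these coordinates of every point on arc $k$ equals the common value, i.e.\ $[\mathcal{P}\vx]_{:j}=\vz_m$ identically along arc $k$, regardless of the position on the arc. Since $\mathcal{E}^c$ removes $8$ indices per block, the exact count already carried out in the proof of \Cref{lem:high_prob_boolean_string} gives $\mathbb{P}[\mathcal{E}^c]\le 8\cdot 2^{j}/2^{n_b}=2^{3-t}$.

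It then remains to assemble the pieces. Conditioned on $\mathcal{E}$, the index $k$ is uniform on $\bigcup_m B_m^\circ$, a disjoint union of $2^j$ sets of the \emph{same} size $2^t-8$, and $k\mapsto[\mathcal{P}\vx]_{:j}$ is constant $=\vz_m$ on $B_m^\circ$ with the $\vz_m$ distinct and exhausting $\{0,1\}^j$; therefore $\bigl([\mathcal{P}\vx]_{:j}\mid\mathcal{E}\bigr)$ puts mass exactly $2^{-j}$ on each Boolean string, i.e.\ equals $\mathcal{B}_{n_b-t}$. Setting $p:=\mathbb{P}[\mathcal{E}^c]\le 2^{3-t}$ gives $\mathcal{P}\mathcal{D}_{\mathcal{M}_n}=(1-p)\,\mathcal{B}_{n_b-t}+p\,\bigl(\mathcal{P}\mathcal{D}_{\mathcal{M}_n}\mid\mathcal{E}^c\bigr)$, and a one‑line rebalancing (moving the surplus mass $2^{3-t}-p$ of $\mathcal{B}_{n_b-t}$ into the residual term) rewrites this in the stated form with mixing weight exactly $2^{3-t}$ and $\mathcal{D}_{\rm rem}$ the resulting arbitrary residual distribution.

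The main obstacle here is conceptual rather than computational: one has to notice that a conditional argument is needed at all — bounding the per‑string marginal probabilities, as in the earlier lemma, only yields near‑uniformity, whereas \emph{exact} uniformity follows from the observation that conditioning on block‑interiors of equal size preserves uniformity along a bijective fibration. The remaining care points are verifying via \Cref{eq:space_filling_manifold_equation} that on a good arc the first $n_b-t$ coordinates are \emph{genuinely} the constant Boolean string $\vz_m$ (not merely close to it), and handling the cyclic wrap‑around $\vb_0=\vb_{2^{n_b}}$, $\vb_{2^{n_b}+1}=\vb_1$ of \Cref{def:manifold_sequence}, which only perturbs the two extreme blocks and costs an $O(1)$ additive number of extra bad indices already absorbed by the slack in $2^{3-t}$.
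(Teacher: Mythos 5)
Your proof is correct and uses essentially the same approach as the paper: both define a good event on the arc index, show that conditional on this event the first $n_b-t$ entries are \emph{exactly} uniform on $\{0,1\}^{n_b-t}$ (because the preimage of each bitstring has identical conditional mass), and bound the complement's probability by $2^{3-t}$. The only difference is cosmetic — the paper abstractly carves out equal-mass subsets using the lower bound $\mathbb{P}[[\mathcal{P}\vx]_{:n_b-t}=\vz]\ge (2^t-8)/2^{n_b}$ already established in \Cref{lem:high_prob_boolean_string}, whereas you instantiate the event concretely as the block interiors $B_m^\circ$ of the reflected Gray code, which amounts to the same set.
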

\begin{proof}
    It suffices to show that there exists an event $E$ occuring with probability at least $1-2^{3-t}$ such that conditioned on this event, for every $\vz \in \{0,1\}^{n_b-t}$:
    \begin{equation}
        \mathbb{P}\left[[\mathcal{P}\vx]_{:n_b-t} = \vz | E\right] = 2^{t-n_b},
    \end{equation}
    i.e. every bitstring $\vz \in \{0,1\}^{n_b-t}$ is returned with equal probability. In fact, previously in \Cref{lem:high_prob_boolean_string} and specifically \Cref{eq:upper_bound_on_prob}, we showed that
    \begin{equation} 
        \mathbb{P}[[\mathcal{P}\vx]_{:n_b-t} = \vz] \geq \frac{2^t-8}{2^{n_b}}.
    \end{equation}
    Therefore, we can construct the event $E$ by taking the union over the bitstrings $\vz$, each of probability $\frac{2^t-8}{2^{n_b}}$ ignoring a portion of the distribution whenever a given $\mathbb{P}[[\mathcal{P}\vx]_{:n_b-t} = \vz]$ exceeds this value. This implies that
    \begin{equation}
        \mathbb{P}[E] \geq 2^{n_b-t}\left( \frac{2^t-8}{2^{n_b}}\right) = 1-2^{3-t}.
    \end{equation}
\end{proof}

\begin{theorem}[Cryptographic hardness] \label{thm:cryptographic}
    Let $\mathcal{D}_{\mathcal{M}_n}$ denote the uniform distribution over manifolds $\mathcal{M}_n$ constructed in \Cref{def:manifold_sequence} where $\Reach{\mathcal{M}_n} = O(n^\alpha)$ for $\alpha<0.5$. 
    Set $a>0$ to be constant. Then under \Cref{ass:localPRG}, there does not exist any polynomial time algorithm $\mathcal{A}$ that with probability $1-\delta$ for $0<\delta<1/2$ is capable of learning the class of width $O(n^a)$ single hidden layer $\operatorname{ReLU}$ neural networks under this sequence of distributions up to mean squared error sufficiently small ($\epsilon/40$ suffices).
\end{theorem}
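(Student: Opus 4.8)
The plan is to reduce from the hardness of learning single hidden layer $\operatorname{ReLU}$ networks over the uniform distribution on the Boolean cube, established in \cite{daniely2021local} under \Cref{ass:localPRG}, and transport it to the uniform distribution on the space-filling manifold using \Cref{lem:high_prob_boolean_uniform_distribution}. Concretely, for the $(\alpha,d)$-sequence of \Cref{def:manifold_sequence} with $\Reach{\mathcal{M}_n}=O(n^\alpha)$ and $\alpha<1/2$ we have $n=\ceil{4R^2}n_b+d-1$, so $n_b=\Omega(n^{1-2\alpha})=n^{\Omega(1)}$; set $t=n_b/2$ and let $N=n_b-t=\Omega(n^{1-2\alpha})$ be the ``effective Boolean dimension''. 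By \Cref{lem:high_prob_boolean_uniform_distribution} the pushforward of $\mathcal{D}_{\mathcal{M}_n}$ under $\vx\mapsto[\mathcal{P}\vx]_{:N}$ equals the uniform distribution $\mathcal{B}_N$ on $\{0,1\}^N$ outside an event of probability $2^{3-t}=2^{-n^{\Omega(1)}}$, and, by the Gray-code structure used in the proof of \Cref{lem:high_prob_boolean_string}, conditioning on $[\mathcal{P}\vx]_{:N}=\vz$ pins $\vx$ to the union of the $\approx 2^t$ circular arcs of \Cref{eq:space_filling_manifold_equation} whose Gray-code index has prefix $\vz$, crossed with the $[0,1]^{d-1}$ product factor.

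Next I would transport the hard class. Let $\mathcal{H}_{\rm DV}$ be the class of single hidden layer $\operatorname{ReLU}$ networks $g:\{0,1\}^N\to[0,1]$ of width $\omega(1)$ (in particular $O(n^a)$), which suffices by \cite{daniely2021local} and which, under \Cref{ass:localPRG}, admits no polynomial-time learner achieving mean squared error below a sufficiently small constant with probability at least $1/2$ when inputs are drawn from $\mathcal{B}_N$. Lift each $g$ to $f_g(\vx):=g\!\left([\mathcal{P}\vx]_{:N}\right)$. Since $[\mathcal{P}\vx]_{:N}$ is just a read-off of $N$ of the $n$ input coordinates, $f_g$ is again a single hidden layer $\operatorname{ReLU}$ network on $n$ inputs, of the same width and with $O(\poly(n))$-bounded weights; and because the continuous predicate gadget inside $g$ agrees with the underlying Boolean value at Boolean points, $f_g(\vx)$ equals that value whenever $[\mathcal{P}\vx]_{:N}\in\{0,1\}^N$, which by \Cref{lem:high_prob_boolean_string} occurs with probability $1-2^{-n^{\Omega(1)}}$, the complementary event contributing at most $2^{-n^{\Omega(1)}}$ to any mean squared error since $f_g,g^*\in[0,1]$.

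Then I would run the reduction. Suppose, for contradiction, a polynomial-time algorithm $\mathcal{A}$ learns $\{f_g:g\in\mathcal{H}_{\rm DV}\}$ over $\mathcal{D}_{\mathcal{M}_n}$ to mean squared error $\le\epsilon/40$ with probability $1-\delta>1/2$. Build a learner $\mathcal{A}'$ for $\mathcal{H}_{\rm DV}$ over $\mathcal{B}_N$: given a Boolean example $(\vz,g^*(\vz))$, use the explicit closed form of the binary-reflected Gray code to identify the span of arc-indices with prefix $\vz$, sample a uniform such index (discarding the $O(1)$ boundary indices, which costs total-variation error $O(2^{-t})$), a uniform point on that arc, and a uniform point in $[0,1]^{d-1}$, obtaining $\vx\in\mathcal{M}_n$ with $[\mathcal{P}\vx]_{:N}=\vz$, and label it $g^*(\vz)=f_{g^*}(\vx)$. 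Over $\poly(n)$ draws these pairs are within total variation $2^{-n^{\Omega(1)}}$ of genuine i.i.d.\ samples $(\vx,f_{g^*}(\vx))$ with $\vx\sim\mathcal{D}_{\mathcal{M}_n}$, so $\mathcal{A}$ returns $\hat f$ with $\mathbb{E}_{\vx\sim\mathcal{D}_{\mathcal{M}_n}}[(\hat f(\vx)-f_{g^*}(\vx))^2]\le\epsilon/40$ with probability $>1/2-2^{-n^{\Omega(1)}}$; setting $\hat g(\vz):=\mathbb{E}[\hat f(\vx)\mid[\mathcal{P}\vx]_{:N}=\vz]$ (estimated from a few fresh conditional samples) and applying Jensen's inequality together with the near-uniformity of the pushforward gives $\mathbb{E}_{\vz\sim\mathcal{B}_N}[(\hat g(\vz)-g^*(\vz))^2]\le(1+o(1))\,\epsilon/40+2^{-n^{\Omega(1)}}$. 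For $\epsilon$ a small enough constant this contradicts the lower bound of \cite{daniely2021local}, which establishes the theorem.

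The main obstacle is the simulation step: sampling, in polynomial time and with only exponentially small total-variation error, from $\mathcal{D}_{\mathcal{M}_n}$ conditioned on a prescribed Boolean prefix. This needs a precise enough understanding of the binary-reflected Gray code to enumerate and sample uniformly (by arc length) among the $\approx 2^t$ segments of \Cref{eq:space_filling_manifold_equation} consistent with $\vz$, and to bound the contribution of the handful of boundary segments where the prefix fails to be constant along the arc --- essentially a quantitative sharpening of the counting already carried out for \Cref{eq:upper_bound_on_prob}. A secondary technical point is to confirm that each lift $f_g$ genuinely stays within the stated architecture (one hidden layer, width $O(n^a)$, weight magnitudes $O(\poly(n))$) even though it composes the coordinate read-off with the continuous predicate gadget of \cite{daniely2021local}.
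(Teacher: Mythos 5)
Your proposal follows essentially the same route as the paper: reduce to the Boolean-input hardness of \cite{daniely2021local} by simulating $\mathcal{D}_{\mathcal{M}_n}$ from uniform Boolean samples via the Gray-code structure of the space-filling curve, run the assumed manifold learner, and pull the resulting hypothesis back to a Boolean one. Two remarks on the details. First, Theorem~8 of \cite{daniely2021local} is stated as a lower bound on \emph{classification} error (no efficient learner achieves $\mathbb{P}_{\vx_b}[h_b(\vx_b)\neq f(\vx_b)]\leq 1/10$), not directly on mean squared error, so after forming $\hat g(\vz)=\mathbb{E}\!\left[\hat f(\vx)\mid [\mathcal{P}\vx]_{:N}=\vz\right]$ you still need a rounding step $\hat g\mapsto \sgn(\hat g-1/2)$ together with the pointwise bound $\mathbb{P}\!\left[\sgn(\hat g-1/2)\neq g^*\right]\leq 4\,\mathbb{E}\!\left[(\hat g-g^*)^2\right]$ before the contradiction closes; the paper applies exactly that inequality in place of your Jensen step, and the two are interchangeable. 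Second, the simulation step you flag as the main obstacle --- drawing from $\mathcal{D}_{\mathcal{M}_n}$ conditioned on the Boolean prefix $\vz$ by choosing a uniformly random arc among the $\approx 2^{t}$ arcs whose Gray-code prefix equals $\vz$, plus the $[0,1]^{d-1}$ product factor and a uniform arc parameter --- is indeed the crux; the paper's operator $\mathcal{T}\vx_b$ tacitly performs exactly this random choice without spelling it out, so your more explicit treatment via the prefix-count bound underlying \Cref{eq:upper_bound_on_prob} is a genuine sharpening rather than a gap, and is what is needed to make the reduction fully rigorous.
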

\begin{proof}
    Theorem 8 (part 3) of \cite{daniely2021local} states that under \Cref{ass:localPRG}, there does not exist an efficient algorithm to learn the class of $O(n^a)$ width neural networks under the uniform input distribution $\mathcal{B}_m$ over bitstrings $\{0,1\}^m$. Namely, they show that there exist a set of neural network functions $f:\mathbb{R}^m \to [0,1]$ that have the property that they recover a hard to learn Boolean DNF formula $f(\vx) \in \{0,1\}$ for any $\vx \in \{0,1\}^m$. They then show that any algorithm which with probability $1-\delta$ returns a hypothesis $h_b:\{0,1\}^m \to \{0,1\}$ achieving classification error $\mathbb{P}_{\vx \sim \mathcal{B}_m}\left[h_b(\vx) \neq f(\vx) \right] \leq 1/10$ suffices to contradict \Cref{ass:localPRG}.

    Any efficient algorithm can only take $O(\poly(n))$ samples. From \Cref{lem:high_prob_boolean_uniform_distribution}, we can set $m = n_b/2$ with $t=n_b/2$ so that with probability at least $1-\exp(-n^{\Omega(1)})$ a given sample is drawn from $\mathcal{B}_{n_b/2}$. By union bounding over $O(\poly(n))$ samples, we are guaranteed that all samples are drawn i.i.d. from $\mathcal{B}_{n_b/2}$ with probability $1-\delta'$ for $\delta'$ vanishing exponentially in $n$. Given any input $\vx_b \in \{0,1\}^m$, we can convert it to a random point on the corresponding quadrant in the manifold $\vx = \mathcal{T}\vx_b \in \mathcal{M}_n$ by drawing $t \sim \operatorname{Unif}([0,\pi/2])$ and applying \Cref{eq:space_filling_manifold_equation}. This recovers the distribution $\mathcal{D}_{\mathcal{M}_n}$. Now, assume that the algorithm for learning single hidden layer neural networks under the distribution $\mathcal{D}_{\mathcal{M}_n}$ returns a hypothesis $h:\mathbb{R}^n \to \mathbb{R}$. We can convert this to a boolean-output function $h':\mathbb{R}^m \to \{0,1\}$ by setting $h'(\vx)=\sgn(h(\vx)-1/2)$. Note that (see also Lemma 23 of \cite{daniely2021local}) whenever $h'(\mathcal{T}\vx_b) \neq f(\vx_b)$ then $(h(\mathcal{T}\vx_b) \neq f(\vx_b))^2\geq 1/4$ and
    \begin{equation}
        \mathbb{P}_{\vx_b \sim \mathcal{B}_m,t \sim \operatorname{Unif}([0,\pi/2]) }\left[ h'(\mathcal{T}\vx_b) \neq f(\vx_b) \right] \leq 4 \mathbb{E}_{\vx_b \sim \mathcal{B}_m,t \sim \operatorname{Unif}([0,\pi/2])}\left[(h(\mathcal{T}\vx_b) - f(\vx_b))^2 \right].
    \end{equation}
    The right hand side above contains the mean squared error loss over random Boolean inputs. This is equivalent to the mean squared error loss under the distribution $\mathcal{D}_{\mathcal{M}_n}$ up to an exponentially small correction for the event where $[\mathcal{P}\vx]_{:n_b-t}$ is not Boolean. Thus, learning the function $f$ up to mean squared error $\epsilon/40$ suffices to output a function $h'$ achieving classification error $\epsilon/10$ (up to these exponentially vanishing corrections). Applying Theorem 8 of \cite{daniely2021local} finishes the proof.
\end{proof}

\subsection{Volume bounds based on reach}
\label{app:volume_bound_reach}

The reach of a manifold lower bounds the radius of curvature of a manifold. In our setting where manifolds are contained within the hypercube $[0,1]^n$, restrictions on the reach can intuitively limit the space-filling capacity of a manifold resulting in upper bounds on the volume of the manifold.
Namely, as we will show below, for a sequence of manifolds $\{\mathcal{M}_n\}$ of intrinsic dimension $d$, if the reach is bounded by $\Reach{\mathcal{M}_n} = \omega(n^{0.5})$, then the volume of the manifold is $\Vol_d(\mathcal{M}_n) = \poly{(n)}$. This implies that the results proven in the previous subsections are tight up to $\Reach{\mathcal{M}_n} = \Theta(n^{0.5})$, where similar exponential hardness results may apply though are not immediately obtained from our proofs. We formally detail the tightness of our results in what follows below.

Our volume bounds will follow from Lemma B.1 of \cite{aamari2022adversarial} which locally bound the volume of a manifold of bounded reach contained within a ball.
\begin{lemma}[Adapted from Lemma B.1 of \cite{aamari2022adversarial}]\label{lem:aamari_bound_reach_volume}
    Consider a Riemannian manifold $\mathcal{M} \subset \mathbb{R}^n$ of intrinsic dimension $d$ with reach bounded by $R \leq \Reach{\mathcal{M}}$. Let $D$ be a measure on $\mathbb{R}^n$ supported on $\mathcal{M}$ with density function $f$ with respect to the volume measure $\Vol_d$ where $f$ is Lipschitz smooth and bounded by $0<f_{\min} \leq f(\vx) \leq f_{\max}$ for all $\vx \in \mathcal{M}$.  Denote $B(\vx, h) = \{\vx': \|\vx-\vx'\|\leq h\}$ as the ball centered around $\vx$ of radius $h$. Then for any $\vx_0 \in \mathbb{R}^n$ and $h \leq R/8$, it holds that 
    \begin{equation}
        D(B(\vx, h)) \leq (5/4)^{d/2} 2^d f_{\max} \omega_d \max \left\{\left( h^2- d(\vx, \mathcal{M})^2 \right)^{d/2}, 0 \right\}.
    \end{equation}
\end{lemma}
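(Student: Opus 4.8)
The plan is to split the claim into a reduction from the measure $D$ to a $d$-dimensional volume of a cap of $\mathcal{M}$, followed by a purely geometric volume bound that is essentially Lemma B.1 of \cite{aamari2022adversarial}. For the first part, since $D$ has density $f \le f_{\max}$ with respect to $\Vol_d$,
\begin{equation}
    D(B(\vx,h)) \;=\; \int_{B(\vx,h)\cap\mathcal{M}} f\, d\Vol_d \;\le\; f_{\max}\,\Vol_d(B(\vx,h)\cap\mathcal{M}),
\end{equation}
so it suffices to show $\Vol_d(B(\vx,h)\cap\mathcal{M}) \le (5/4)^{d/2}\,2^d\,\omega_d \max\{(h^2-d(\vx,\mathcal{M})^2)^{d/2},\,0\}$. (The Lipschitz hypothesis on $f$ plays no role in this one-sided bound; it is carried only for compatibility with the cited statement.)

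Write $\rho := d(\vx,\mathcal{M})$. If $\rho \ge h$ then $B(\vx,h)\cap\mathcal{M}=\emptyset$ and both sides vanish, so assume $\rho < h \le R/8 < R$. Then $\vx$ has a unique nearest point $\vp \in \mathcal{M}$, the displacement $\vx-\vp$ is orthogonal to $T_\vp\mathcal{M}$, and $\|\vx-\vp\| = \rho$. For an arbitrary $\vq \in B(\vx,h)\cap\mathcal{M}$ I would decompose $\vq-\vp = \vu + \vn$ into its component $\vu \in T_\vp\mathcal{M}$ and its normal component $\vn$, invoke Federer's reach estimate $\|\vn\| \le \|\vq-\vp\|^2/(2R)$ \cite{federer1959curvature}, and use $\vx-\vp \perp T_\vp\mathcal{M}$ to expand $\|\vq-\vx\|^2 = \|\vu\|^2 + \|\vn-(\vx-\vp)\|^2$. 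Since $\|\vq-\vp\| \le h+\rho \le 2h \le R/4$, the quadratic correction coming from $\|\vn\|$ is controlled using $h \le R/8$, and after elementary rearrangement one obtains $\|\vq-\vp\|^2 \le (5/4)(h^2-\rho^2)$. Hence $B(\vx,h)\cap\mathcal{M} \subseteq B(\vp,r)\cap\mathcal{M}$ with $r^2 \le (5/4)(h^2-\rho^2)$.

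It then remains to bound $\Vol_d(B(\vp,r)\cap\mathcal{M})$ for $r \le 2h \le R/4$. For such $r$ the orthogonal projection $\pi\colon B(\vp,r)\cap\mathcal{M}\to T_\vp\mathcal{M}$ is injective, with image contained in the $d$-dimensional ball of radius $r$ in $T_\vp\mathcal{M}$ (whose volume is $\omega_d r^d$); moreover the Jacobian of $\pi$ at a point $\vq$ equals the product of cosines of the principal angles between $T_\vq\mathcal{M}$ and $T_\vp\mathcal{M}$, which the reach bound controls from below by $(1-(r/R)^2)^{d/2} \ge (15/16)^{d/2} \ge 2^{-d}$. Pushing the volume measure forward through $\pi^{-1}$ yields $\Vol_d(B(\vp,r)\cap\mathcal{M}) \le 2^d\omega_d r^d \le 2^d(5/4)^{d/2}\omega_d(h^2-\rho^2)^{d/2}$; multiplying by $f_{\max}$ and recalling the trivial case completes the proof. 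The main obstacle is assembling the classical positive-reach toolkit — uniqueness of the foot point inside a tube of radius below $R$, Federer's normal-displacement inequality, and the principal-angle (Jacobian) comparison — rather than any genuinely new estimate; all of this is already packaged in Lemma B.1 of \cite{aamari2022adversarial}, so that the only addition here is the density factor $f_{\max}$.
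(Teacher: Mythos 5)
The paper does not supply its own proof of this lemma --- it is stated as ``Adapted from Lemma B.1 of \cite{aamari2022adversarial}'' and the derivation is deferred entirely to that reference. So there is no in-paper argument to compare against; I can only assess your reconstruction on its own merits.

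Your structure is the standard positive-reach argument, and the arithmetic closes with room to spare. Writing $t := \|\vq - \vp\|$ and $\rho := d(\vx,\mathcal{M})$, orthogonality of $\vu$ to $\vn - (\vx-\vp)$ and the reverse triangle inequality give $h^2 \ge \|\vq-\vx\|^2 = \|\vu\|^2 + \|\vn-(\vx-\vp)\|^2 \ge \|\vu\|^2 + \rho^2 - 2\rho\|\vn\|$, hence $t^2 = \|\vu\|^2 + \|\vn\|^2 \le h^2 - \rho^2 + 2\rho\|\vn\| + \|\vn\|^2$. Substituting Federer's bound $\|\vn\| \le t^2/(2R)$ together with $\rho/R \le 1/8$ and $t \le 2h \le R/4$ yields $t^2(1 - 1/8 - 1/64) \le h^2 - \rho^2$, i.e. $t^2 \le (64/55)(h^2-\rho^2) < (5/4)(h^2-\rho^2)$, and the radius of the containing cap satisfies $r \le \sqrt{5/4}\,h \le R/4$ as needed. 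The final Jacobian factor also works out: $(15/16)^{1/2} \approx 0.968 \ge 1/2$, so $(15/16)^{d/2} \ge 2^{-d}$ for all $d$. The extraction of $f_{\max}$ from the density is immediate, and you are right that the Lipschitz hypothesis is inert for this one-sided bound.

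The one step I would ask you to pin down rather than invoke is the lower bound on the Jacobian of the tangent-plane projection. You justify $(1-(r/R)^2)^{d/2}$ via a tangent-angle estimate of the form $\sin\theta_{\max} \le \|\vq-\vp\|/R$ for $\vq$ near $\vp$, but that particular constant is not a universal citation-free fact --- it should be sourced to the specific tangent-space comparison lemma you mean (Federer, or the corresponding lemma in \cite{aamari2022adversarial} or Boissonnat--Chazal--Yvinec), whose constants differ slightly across formulations. Because the gap between $(15/16)^{d/2}$ and $2^{-d}$ is huge, any of the standard variants will close the argument, but as written this step is asserted, not derived.
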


\begin{lemma} \label{lem:volume_upper_bound_reach}
    Denote $B(\vx, h) = \{\vx': \|\vx-\vx'\|\leq h\}$ as the ball centered around $\vx$ of radius $h$. Given a manifold $\mathcal{M} \subset \mathbb{R}^n$ of intrinsic dimension $d$ bounded in reach by $\Reach{\mathcal{M}} \geq R$, the volume of the intersection of the manifold and a ball $B(\vx, h)$ with $h\leq R/8$ is bounded by
    \begin{equation}
        \Vol_d\left( \mathcal{M} \cap B(\vx, h) \right) \leq \omega_d \left( \frac{R}{2} \right)^{d},
    \end{equation}
    where $\omega_d$ is the volume of the $d$-dimensional unit ball.
\end{lemma}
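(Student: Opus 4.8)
The plan is to deduce this bound from \Cref{lem:aamari_bound_reach_volume} by feeding it a conveniently chosen probability measure, namely the \emph{normalized volume measure of $\mathcal{M}$ itself}. Since the manifolds under consideration are compact, $V \coloneqq \Vol_d(\mathcal{M})$ is finite and strictly positive, so $D(\cdot) \coloneqq \Vol_d(\cdot \cap \mathcal{M})/V$ is a genuine probability measure supported on $\mathcal{M}$ whose density with respect to $\Vol_d$ is the constant function $f \equiv 1/V$. A constant density is trivially Lipschitz and satisfies $0 < f_{\min} = f_{\max} = 1/V$, so all hypotheses of \Cref{lem:aamari_bound_reach_volume} are met.

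Applying that lemma with ball center $\vx$ and radius $h \le R/8$, and using the crude bound $\max\{(h^2 - d(\vx,\mathcal{M})^2)^{d/2}, 0\} \le h^d$, gives
\begin{equation}
    \frac{\Vol_d(\mathcal{M} \cap B(\vx, h))}{V} = D(B(\vx,h)) \;\le\; (5/4)^{d/2}\, 2^d \cdot \frac{1}{V}\, \omega_d\, h^d ,
\end{equation}
so after cancelling $V$ we get $\Vol_d(\mathcal{M} \cap B(\vx,h)) \le (5/4)^{d/2}\, 2^d\, \omega_d\, h^d$. It then remains only to substitute $h \le R/8$ and absorb constants: $(5/4)^{d/2}\, 2^d\, \omega_d\, (R/8)^d = (5/4)^{d/2}\, \omega_d\, (R/4)^d$, and since $(5/4)^{1/2} = \sqrt{5}/2 < 2$ we have $(5/4)^{d/2} < 2^d$, hence $(5/4)^{d/2}(R/4)^d < 2^d (R/4)^d = (R/2)^d$. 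This yields the claimed inequality $\Vol_d(\mathcal{M} \cap B(\vx,h)) \le \omega_d (R/2)^d$.

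There is no genuinely hard step here; once one sees that the right measure to test \Cref{lem:aamari_bound_reach_volume} against is the uniform distribution on $\mathcal{M}$, the rest is bookkeeping. The only point that merits care is the use of compactness of $\mathcal{M}$, which is exactly what makes $V$ finite and positive and hence lets us take the constant density $1/V$; under the standing assumptions of the paper this is automatic, and for a hypothetical non-compact $\mathcal{M}$ one would instead intersect with a large closed ball (or exhaust $\mathcal{M}$ by compact pieces) before passing to the limit. It is also worth noting that the gap between the hypothesis $h \le R/8$ and the target radius $R/2$ in the conclusion is precisely what is consumed by the prefactor $(5/4)^{d/2} 2^d$, so the choice $h \le R/8$ is not wasteful.
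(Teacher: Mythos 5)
Your proof is correct and follows essentially the same route as the paper's: apply \Cref{lem:aamari_bound_reach_volume} with a constant density, bound the geometric factor by $h^d$, substitute $h\le R/8$, and absorb the prefactor $(5/4)^{d/2}\,2^d\,(1/8)^d \le (1/2)^d$. The only cosmetic difference is that you normalize the volume measure to a probability measure before applying the lemma and then cancel the normalization, whereas the paper applies the lemma directly with $f\equiv 1$ --- which is permitted since the lemma as stated allows arbitrary measures, not just probability measures --- so the normalization step, while harmless and perhaps safer, is not strictly needed.
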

\begin{proof}
    This is a consequence of Lemma B.1 of \cite{aamari2022adversarial} recalled in \Cref{lem:aamari_bound_reach_volume} setting $f_{\min}=f_{\max}=1$ to recover the standard volume measure, i.e. for a given point $\vx \in \mathbb{R}^n$ and $h\leq \Reach{\mathcal{M}}/8$, \Cref{lem:aamari_bound_reach_volume} states
    \begin{equation}
        \Vol_d\left( \mathcal{M} \cap B(\vx, h) \right) \leq (5/4)^{d/2} 2^d \omega_d \left( h^2- d(\vx, \mathcal{M})^2 \right)^{d/2}.
    \end{equation}
    Applying the bound $h\leq \Reach{\mathcal{M}}/8$ and noting that $(5/4)^{d/2} 2^d (1/8)^d \leq (1/2)^{d} $ completes the proof.
\end{proof}

The above lemma provides an immediate bound on the volume as one can note that for large enough $n$, any manifold $\mathcal{M} \subset [0,1]^n$ is contained within a single ball $B(x, h)$ for $h = \omega(n^{0.5})$. 
\begin{proposition} \label{prop:tightness_of_results_for_reach}
    Given a sequence of manifolds $\{\mathcal{M}_n\}$ of intrinsic dimension $d$ (fixed and independent of $n$) and reach bounded by $\Reach{\mathcal{M}_n} = \omega(n^{0.5})$, the volume of the manifolds grows at most $\Vol_d(\mathcal{M}_n) = O(\poly(n))$.
\end{proposition}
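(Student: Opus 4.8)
The plan is to apply the local volume bound of \Cref{lem:volume_upper_bound_reach} (equivalently \Cref{lem:aamari_bound_reach_volume}) \emph{globally}, exploiting the fact that the ambient hypercube $[0,1]^n$ has diameter only $\sqrt n$. First I would fix any base point $\vx_0 \in \mathcal{M}_n$ and observe that, since $\mathcal{M}_n \subset [0,1]^n$, every point of $\mathcal{M}_n$ lies within distance $\sqrt n$ of $\vx_0$, so $\mathcal{M}_n \subseteq B(\vx_0, \sqrt n)$ and hence $\Vol_d(\mathcal{M}_n) = \Vol_d\!\left(\mathcal{M}_n \cap B(\vx_0, \sqrt n)\right)$. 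The whole manifold is thus captured by a single ball, which is the point of the remark preceding the proposition.

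Next I would check that the radius $h = \sqrt n$ is admissible for the lemma. The hypothesis $\Reach{\mathcal{M}_n} = \omega(n^{0.5})$ means there is an $N$ with $\Reach{\mathcal{M}_n} \geq 8\sqrt n$ for all $n \geq N$, i.e.\ $h = \sqrt n \leq \Reach{\mathcal{M}_n}/8$ for all large $n$; this is precisely where the $\omega(\sqrt n)$ rate (rather than merely $O(\sqrt n)$) is needed. Applying \Cref{lem:volume_upper_bound_reach} with $\vx = \vx_0$, $h = \sqrt n$ then gives $\Vol_d(\mathcal{M}_n) \leq \omega_d \left(\Reach{\mathcal{M}_n}/2\right)^d$. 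Alternatively, if one does not wish to assume the reach is polynomially bounded, plugging $h = \sqrt n$, $d(\vx_0,\mathcal{M}_n) = 0$ and $f_{\max} = 1$ directly into \Cref{lem:aamari_bound_reach_volume} yields the reach-free bound $\Vol_d(\mathcal{M}_n) \leq (5/4)^{d/2} 2^d \omega_d\, n^{d/2}$.

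Finally I would conclude using $d = O(1)$: $\omega_d$ is a dimensional constant and the exponent $d$ is fixed independent of $n$, so both $\omega_d \left(\Reach{\mathcal{M}_n}/2\right)^d$ (when $\Reach{\mathcal{M}_n} = O(\poly(n))$) and $(5/4)^{d/2} 2^d \omega_d\, n^{d/2}$ are $O(\poly(n))$; absorbing the finitely many indices $n < N$ (each manifold being compact, hence of finite volume) gives the claim for the entire sequence. There is no serious obstacle here; the only points requiring care are (i) verifying that the single covering ball of radius $\sqrt n$ has radius below the reach-dependent threshold $\Reach{\mathcal{M}_n}/8$ — which is exactly the role of the curvature hypothesis — and (ii) choosing the reach-free form of the bound if the reach is allowed to grow faster than any polynomial, so that the polynomial bound on the volume still follows.
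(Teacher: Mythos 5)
Your proof is correct and follows the same route as the paper's: cover $\mathcal{M}_n$ with a single ball of radius $\Theta(\sqrt n)$ (the paper centers it at $\vone/2$ with $h=\sqrt n /2$, you center at a point of $\mathcal{M}_n$ with $h=\sqrt n$ — immaterial), verify $h\leq \Reach{\mathcal{M}_n}/8$ for large $n$ using $\Reach{\mathcal{M}_n} = \omega(\sqrt n)$, and invoke the local volume bound. You are in fact slightly more careful than the paper on one point: the conclusion of \Cref{lem:volume_upper_bound_reach} as stated is $\omega_d(R/2)^d$, which need not be $\poly(n)$ if the reach grows super-polynomially, and your fallback to the $h$-dependent (reach-free) bound $(5/4)^{d/2} 2^d \omega_d\, n^{d/2}$ obtained directly from \Cref{lem:aamari_bound_reach_volume} is exactly the form the paper's final displayed inequality implicitly relies on.
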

\begin{proof}
    There exists large enough $n$ such that $[0,1]^n$ is contained within a ball of radius equal to the bound on the reach $\Reach{\mathcal{M}_n} = \omega(n^{0.5})$. Applying \Cref{lem:volume_upper_bound_reach} with $h = \sqrt{n}/2$ and $\vx = \bm 1 /2$ then implies that for large enough $n$:
    \begin{equation}
        \Vol_d\left( \mathcal{M}_n  \right) = \Vol_d\left( \mathcal{M}_n \cap B(\vx, h) \right) \leq \omega_d \left( \frac{\sqrt{n}}{4} \right)^{d} = \poly(n).
    \end{equation}
\end{proof}

\begin{remark}
    The above implies that manifolds with reach bounded as $\Reach{\mathcal{M}_n} = \omega(n^{0.5})$ fall within the class of efficiently sampleable manifolds defined in \Cref{def:efficiently_sampleable}. The proof above does not extend to the setting where $\Reach{\mathcal{M}_n} = \Omega(n^{0.5})$. Covering number bounds (e.g. see Theorem 2 of \cite{kossaczka2020entropy}) show that exponentially many (in $n$) balls of radius $cn^{0.5}$ are needed to cover the space $[0,1]^n$ when $c$ is sufficiently small. Therefore, \Cref{prop:tightness_of_results_for_reach} does not guarantee that volume is polynomially bounded in $n$ specifically for the setting where $\Reach{\mathcal{M}_n} = \Theta(n^{0.5})$.
\end{remark}


\section{Experiments confirming learnability results}
\label{app:hardness_experiment_details}
\begin{figure}[ht]
    \centering
\includegraphics{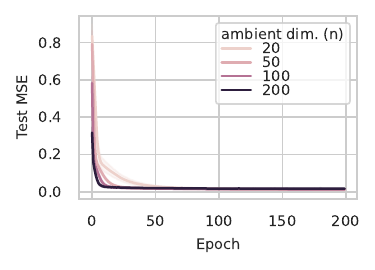}
    \caption{Replication of \Cref{fig:isoperimetric} where the target is a neural network of the same form but with randomly chosen weights. Results are aggregate over five random initializations.}
    \label{fig:experiment_random_NN}
\end{figure}

\begin{figure}[ht]
    \centering
\includegraphics{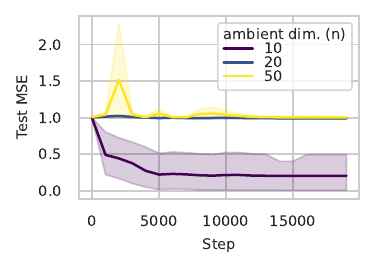}
    \caption{Replication of \Cref{fig:parity} where the trained neural network is overparameterized in both width ($3$ hidden layers instead of just $1$) and width. Results are consistent with that of \Cref{fig:parity} further corroborating the hardness of this learning task. Results are aggregated over five random initializations including random tuning of the optimization hyperparameters. }
    \label{fig:parity_3layers}
\end{figure}

For \Cref{fig:isoperimetric}, the data is generated by drawing random unit norm vectors from a subspace of dimension $d=10$ with ambient space of dimension $n$. The random $d$-dimensional subspace is obtained by drawing a random $n \times n$ orthogonal matrix and taking its first $d=10$ columns. Target functions for \Cref{fig:isoperimetric} are drawn from the class in Equation (5) in \cite{diakonikolas2020algorithms}, which form a hard class of single hidden layer neural networks to learn under the i.i.d. Gaussian input model. We set the hidden layer width $k=n/4 = O(n)$ for this construction. To corroborate these findings, we also show in \Cref{fig:experiment_random_NN} that the results are consistent when the objective is to learn a target function which is a single hidden layer ReLU neural network with randomly chosen weights of the same form. 
To normalize all target functions to have a consistent MSE benchmark, we divide by the norm of the function which is approximated by taking the average norm over a batch of $100$ randomly chosen inputs. 

We use the Pytorch package for our neural network experiments \cite{NEURIPS2019_9015}. Training a single neural network on the tasks shown in \Cref{fig:experiment_performance} took no longer than a few minutes on a single GPU. For the learnable setting in \Cref{fig:isoperimetric}, we use a training set of size $1000$ and  a neural network of a single hidden layer and width $100$ to learn the target function. For the hard setting of \Cref{fig:parity}, we provide the algorithm with a fresh randomly drawn batch of data each training step. The trained neural network is overparameterized with width $2n$ in these experiments. For the hard setting in \Cref{fig:parity}, we also attempted to learn the target function with an overparameterized network with three hidden layers and width $2n$. The results, shown in \Cref{fig:parity_3layers}, are consistent with those shown in \Cref{fig:parity}. In all experiments, the learning rate for the optimizer was varied by the default value times a random multiplicative factor $\exp(c)$ where $c \sim \operatorname{Unif}([-2,1])$; results were largely consistent across all ranges of the tuned values. 

\section{Estimating Intrinsic Dimension}
\label{app:experiments}

\subsection{Details on Experimental Setup}
\label{app:in-dim-exp}
Following \cite{stanczuk2022your}, to calculate the intrinsic dimension at a given point $\vp \in \mathcal{M}$, we sample a local neighborhood of points around $\vp$ by noising $\vp$ a small amount. Then, from a trained diffusion model, we collect a series of score vectors $\{\vs_i\}_{i=1}^N$ at the sampled neighborhood which point in the direction of the de-noised sample. We then apply PCA upon this collection of $N$ score vectors (treated as a matrix $\mS \in \mathbb{R}^{n \times N}$) which gives us a basis for the tangent space $T_{\vp}\mathcal{M}$ and normal space $N_{\vp}\mathcal{M}$. Given potential instabilities inherent in training diffusion models and performing statistical estimation on manifolds, we use the stable rank to estimate the rank of $\mS$ \cite{vershynin2010introduction}. Given a matrix $\mM$, the stable rank is given by
\begin{equation}
    r_S(\mM) \coloneqq \frac{\|\mM\|_F^2}{\|\mM\|_{\infty}^2},
\end{equation}
where $\|\cdot\|_F$ and $\|\cdot\|_{\infty}$ denote the Frobenius norm and maximum singular value respectively. Since $\mS$ is actually high rank, we estimate instead the rank of $s_{\max}\mI-\mS$ where $s_{\max}$ is the max singular value of $\mM$ to convert this into a low rank estimation problem. We find the stable rank above to be a more robust measure of estimated rank in comparison to other such metrics such as the maximum difference in singular values.

We note that this approach can also be taken by estimating the rank over a neighborhood of sampled points as well. That is, we find that collecting difference vectors between $\vp$ and the denoised points as samples also estimates the dimension of the tangent space $d$ accurately. Such a method may be more suited to higher dimensional inputs. However, to be consistent with prior work, we perform our analysis on collections of score vectors here. \cite{stanczuk2022your} have confirmed this overall methodology on synthetic datasets where the intrinsic dimension is known, showing that in general, it outperforms other statistical means to estimate intrinsic dimension.

\paragraph{Diffusion model and training.}

For Euclidean datasets, we train a fully-connected 5-layer ReLU network of width 2048 on 100000 samples for 30 epochs, batch size 64, and a learning rate of $4\times 10^{-4}$.

For image datasets, we pre-process the images to have pixel values in $[-1,1]$. We train a DDPM U-Net 2D model \cite{ronneberger2015u} on 10000 samples for 30 epochs, batch size 64, and a learning rate of $4\times 10^{-4}$. The architecture is held the same across all three datasets: a  ResNet block, a ResNet downsampling block with spatial self-attention, and a ResNet block with 64, 128, and 256 channels each (and the reverse for upsampling). 

We use a noise scheduler in the DDPM paradigm with 1000 timesteps between the image and pure Gaussian noise, with the squared cosine beta schedule introduced by \cite{nichol2021improved}. This training was done on an NVIDIA L4 24GB GPU.

We investigate synthetic unit hyperspheres to sanity check our method and proceed to three real-world image datasets: MNIST~\cite{mnist}, Fashion MNIST (FMNIST)~\cite{fmnist}, and Kuzushiji-MNIST (KMNIST)~\cite{kmnist}. For each of the three image datasets, we consider two sizes: the original $28 \times 28$ and a downsampled $12 \times 12$.

\subsection{Results on synthetic data manifolds}
\label{app:synth-mf}

Following \cite{stanczuk2022your}, we verify our intrinsic dimension estimation procedure on unit hyperspheres of different dimension, randomly projected into a higher-dimensional ambient space. We vary intrinsic and ambient dimension and report estimates of intrinsic dimension measurements (Table \ref{tab:sphere_dim}). We observe that, generally, the estimated and true intrinsic dimension align well. Our results further suggest an overall higher accuracy if the intrinsic dimension is small compared to the ambient dimension. Figure \ref{fig:sphere_dim} shows the singular values of the score vector for different configurations; the curves, resembling a step function, distinguish normal from tangent directions. 

\begin{table}[ht]
\centering
\begin{tabular}{ccc}
\toprule
\textbf{Ambient Dim.} & \textbf{Intrinsic Dim.} & \textbf{Estimated Intrinsic Dim.} \\
\midrule
20 & 2 & 3 \\
20 & 10 & 10 \\
20 & 18 & 17 \\
\midrule
100 & 10 & 12 \\
100 & 50 & 47 \\
100 & 90 & 82 \\
\bottomrule
\end{tabular}
\caption{Estimated intrinsic dimension for the hypersphere.}
\label{tab:sphere_dim}
\end{table}

\begin{figure}[h]
    \centering
    \begin{subfigure}[b]{0.33\textwidth}
        \centering
        \includegraphics[width=\textwidth]{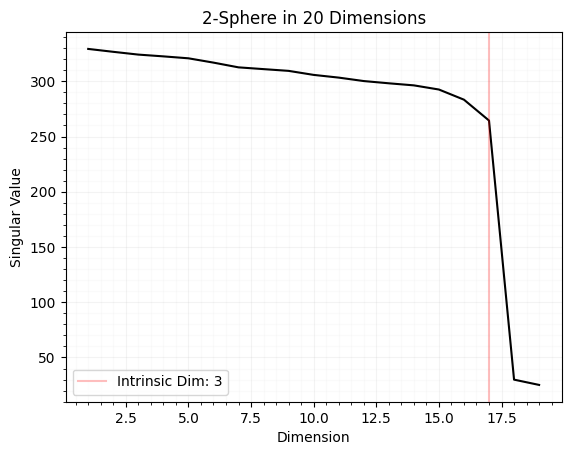}
    \end{subfigure}~
    \begin{subfigure}[b]{0.33\textwidth}
        \centering
        \includegraphics[width=\textwidth]{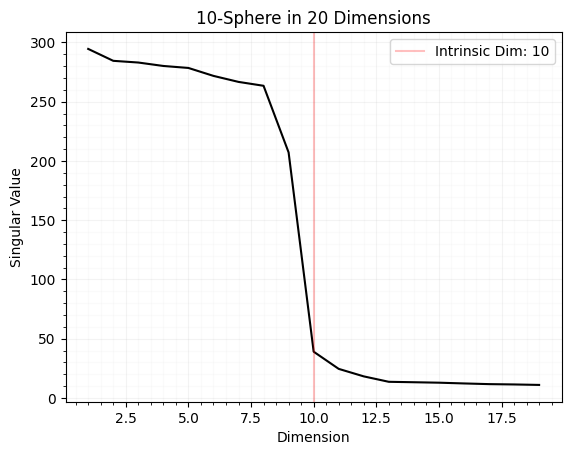}
    \end{subfigure}~
    \begin{subfigure}[b]{0.33\textwidth}
        \centering
        \includegraphics[width=\textwidth]{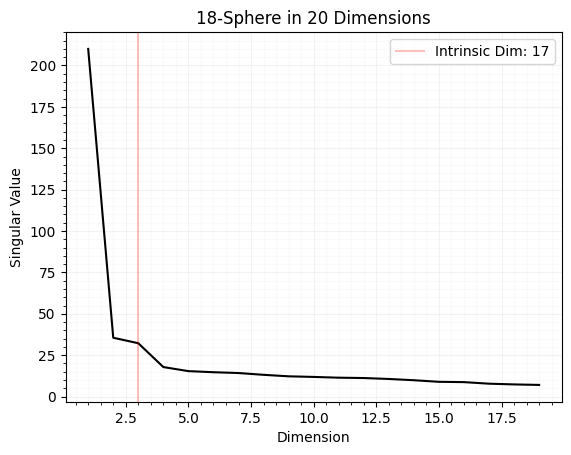}
    \end{subfigure}
        \begin{subfigure}[b]{0.33\textwidth}
        \centering
        \includegraphics[width=\textwidth]{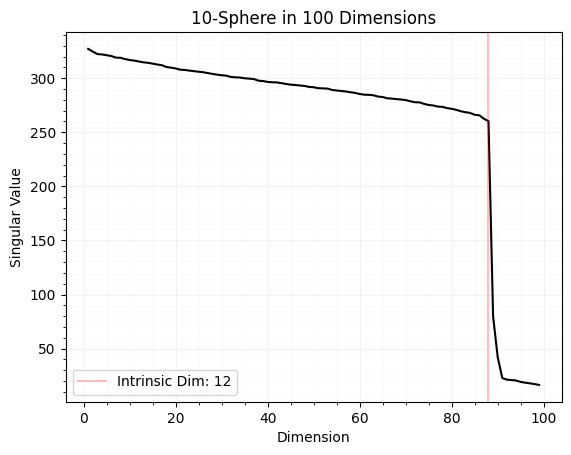}
    \end{subfigure}~
    \begin{subfigure}[b]{0.33\textwidth}
        \centering
        \includegraphics[width=\textwidth]{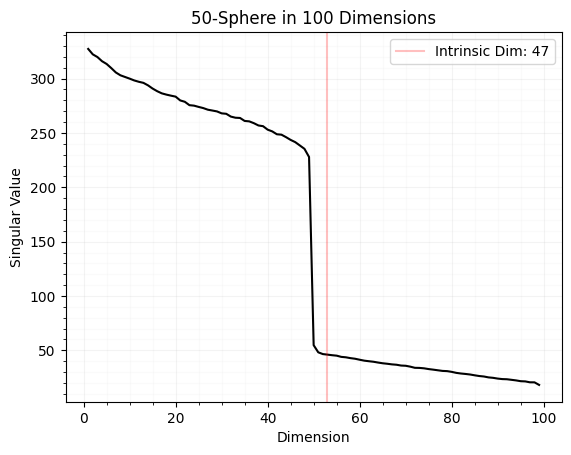}
    \end{subfigure}~
    \begin{subfigure}[b]{0.33\textwidth}
        \centering
        \includegraphics[width=\textwidth]{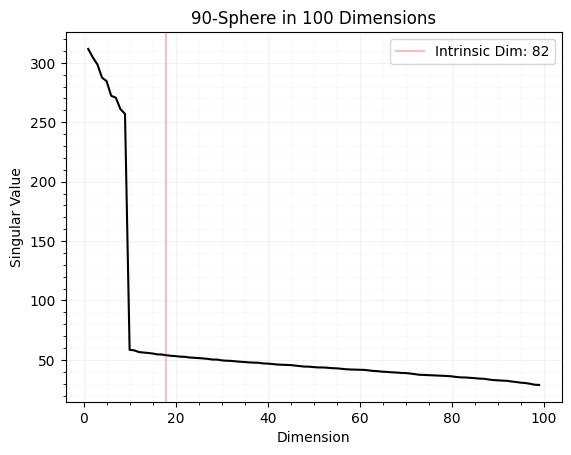}
    \end{subfigure}
    \caption{Intrinsic Dimension Estimation on Hyperspheres}
    \label{fig:sphere_dim}
\end{figure}

\subsection{Results on real-world data manifolds}
\label{app:real-mf}

\begin{table}[ht]
\centering
\begin{tabular}{lcc}
\toprule
\textbf{Dataset} & \textbf{Intrinsic Dimension (Std Dev)} \\
\midrule
MNIST (12 x 12) & 29.8 (7.0) \\
MNIST (28 x 28) & 97.0 (18.4)\\
KMNIST (12 x 12) & 52.9 (9.4) \\
KMNIST (28 x 28) & 168.9 (28.6) \\
FMNIST (12 x 12) & 39.8 (10.5) \\
FMNIST (28 x 28) & 299.1 (135.9) \\
\bottomrule
\end{tabular}
\caption{Estimated intrinsic dimension for the data manifolds considered. We report the mean across classes with standard deviation in brackets.}
\label{tab:dim}
\end{table}

As stated in the main text, we observe that our estimate of MNIST's intrinsic dimension matches that obtained in \cite{stanczuk2022your}. We observe slightly lower values, which we believe to be due to different input sizes. In~\cite{stanczuk2022your}, MNIST images were upscaled to size $32 \times 32$, which differs from our input sizes ($28 \times 28$, $12 \times 12$). 
In addition to MNIST, we also analyze FMNIST and KMNIST, which have not been considered in previous studies.
Our results suggest that among the three datasets, FMNIST has the largest intrinsic dimension. Images in FMNIST cover part of the image in comparison to MNIST and KMNIST; we believe that much of this increase in intrinsic dimension is due to the larger set of possible perturbations of the image within this area. 

We also explore how resizing the image impacts the data manifold's intrinsic dimension in \Cref{fig:resize} and \Cref{tab:dim}. We see that for MNIST and KMNIST, resizing to a smaller ambient dimension can reduce the intrinsic dimension though at a slightly slower rate than that of the ambient dimension. We note that all the images we consider here have relatively intrinsic dimension, where the effects of resizing may be more prominent.

\begin{figure*}[ht]
    \centering
    \includegraphics[width=0.33\linewidth]{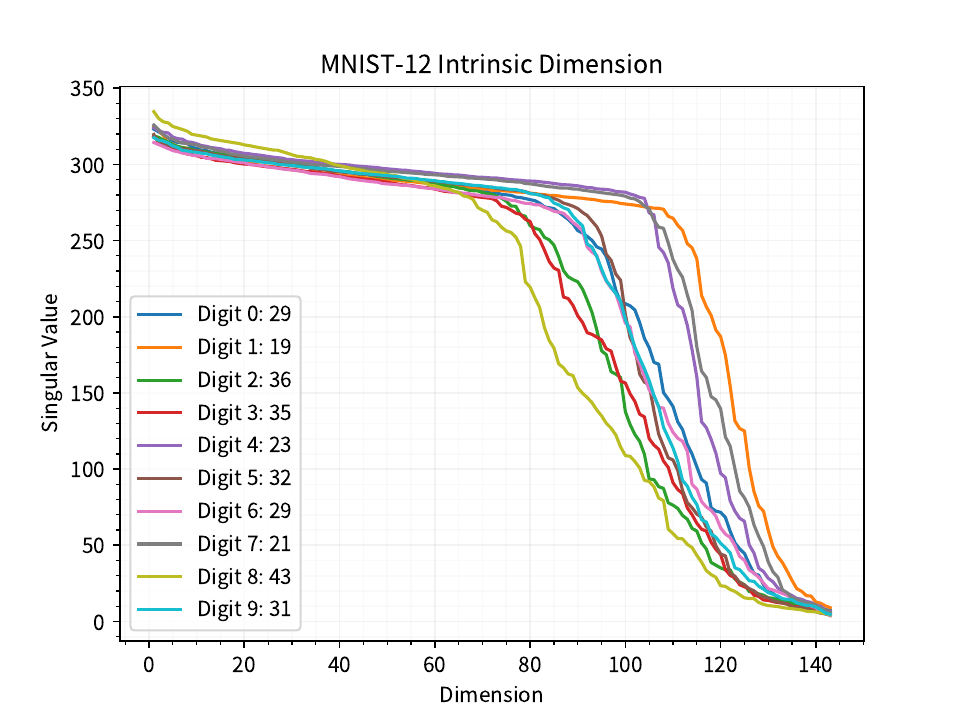}~
    \includegraphics[width=0.33\linewidth]{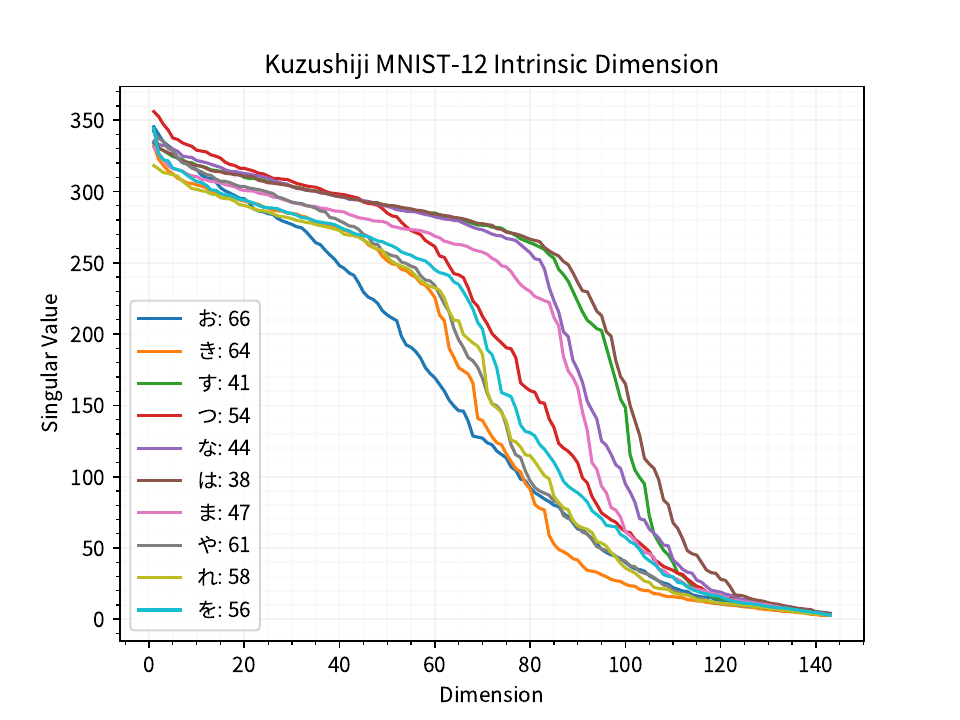}~
    \includegraphics[width=0.33\linewidth]{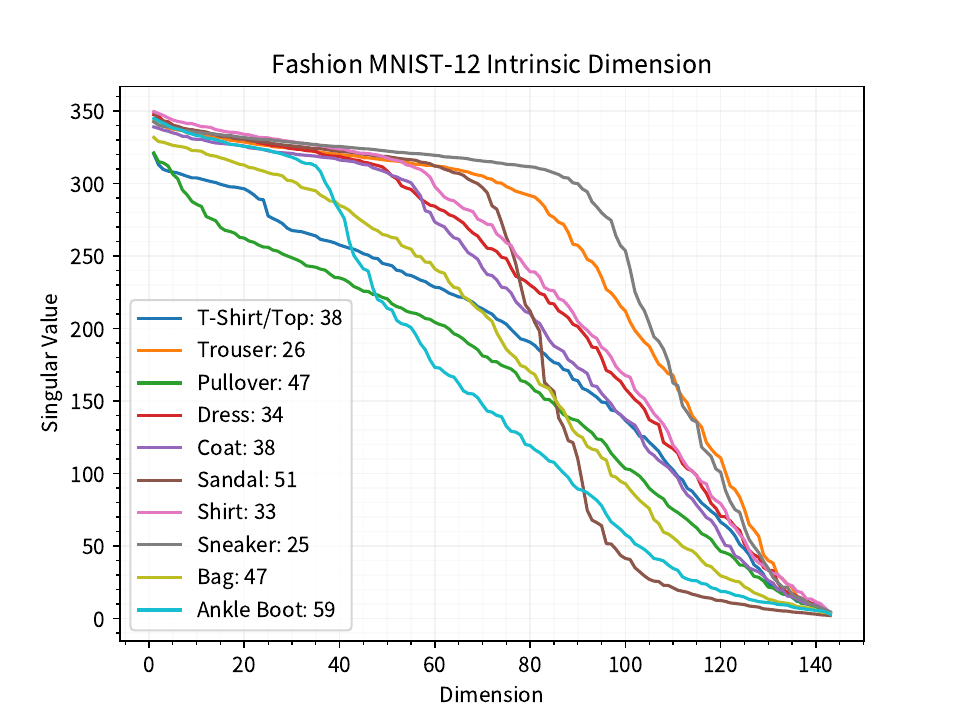}
    \caption{Impact of Resizing on Score Spectrum for Estimating Intrinsic Dimension from Diffusion Models}
    \label{fig:resize}
\end{figure*}

\end{document}